\def\m0{\mathbf{0}}
\def\mA{\mathbf{A}}
\def\mB{\mathbf{B}}
\def\vb{\mathbf{b}}
\def\Dbb{\mathbb{D}}
\def\Ebb{\mathbb{E}}
\def\cF{\mathcal{F}}
\def\cH{\mathcal{H}}
\def\bI{\mathbf{I}}
\def\mI{\mathbf{I}}
\def\cL{\mathcal{L}}
\def\cM{\mathcal{M}}
\def\cN{\mathcal{N}}
\def\mQ{\mathbf{Q}}
\def\real{\mathbb{R}}
\def\vs{\mathbf{s}}
\def\cS{\mathcal{S}}
\def\vt{\mathbf{t}}
\def\cV{\mathcal{V}}
\def\cW{\mathcal{W}}
\def\rx{\mathbf{x}}
\def\vx{\mathbf{x}}
\def\vy{\mathbf{y}}
\def\vz{\mathbf{z}}
\def\real{\mathbb{R}}
\def\vtheta{\boldsymbol{\theta}}
\def\vphi{\boldsymbol{\phi}}
\def\vveps{\boldsymbol{\varepsilon}}
\newcommand{\indep}{\;\, \rule[0em]{.03em}{.67em} \hspace{-.25em}
	\rule[0em]{.65em}{.03em} \hspace{-.25em}
	\rule[0em]{.03em}{.67em}\;\,}
\def\Cov{\text{Cov}}
\def\argmin{\text{argmin}}
\def\rvx{{\mathbf{x}}}
\def\rvy{{\mathbf{y}}}
\def\rvz{{\mathbf{z}}}
\def\vveps{{\bm{\varepsilon}}}
\def\vtheta{{\bm{\theta}}}
\def\vphi{{\bm{\phi}}}
\def\vb{{\bm{b}}}
\def\vs{{\bm{s}}}
\def\vt{{\bm{t}}}
\def\vx{{\bm{x}}}
\def\vy{{\bm{y}}}
\def\vz{{\bm{z}}}
\newtheorem{theorem}{Theorem}[section]
\newtheorem{lemma}[theorem]{Lemma}
\newcommand{\blind}{1}
\begin{document}

\def\spacingset#1{\renewcommand{\baselinestretch}%
{#1}\small\normalsize} \spacingset{1}


\if1\blind
{
  \title{\bf Deep Dimension Reduction for Supervised Representation Learning}
  \author{Jian Huang\thanks{
    Supported in part by the NSF grant DMS-1916199}
    \\
    Department of Statistics and Actuarial Science, University of Iowa, USA \\
    Yuling Jiao\thanks{Supported in part by the NSFC
grants No.11871474 and No.61701547}
    \\
    School of Mathematics and Statistics, Wuhan University, China \\
    Xu Liao \\
    Center of Quantitative Medicine,
  Duke-NUS Medical School, Singapore\\
  Jin Liu\thanks{Supported in part by Duke-NUS  Medical School WBS R-913-200-098-263 and Singapore MOE grants 2016-T2-2-029, 2018-T2-2-006 and 2018-T2-1-046}\\
  Center of Quantitative Medicine,
  Duke-NUS Medical School, Singapore\\
  and \\
  Zhou Yu \\
  School of Statistics,
  East China Normal University,
  China }
  \maketitle
} \fi

\if0\blind
{
  \bigskip
  \bigskip
  \bigskip
  \begin{center}
    {\LARGE\bf Deep Dimension Reduction for Supervised Representation Learning}
\end{center}
  \medskip
} \fi

\bigskip
\begin{abstract}
The goal of supervised representation learning is to construct effective data representations for prediction. Among all the characteristics of an ideal nonparametric representation of high-dimensional complex data, sufficiency,  low dimensionality and disentanglement are some of the most essential ones. We propose a deep dimension reduction approach to learning representations with these characteristics. The proposed approach is a nonparametric generalization of the sufficient dimension reduction method. We formulate the ideal representation learning task as that of finding a nonparametric representation that minimizes an objective function characterizing conditional independence and promoting disentanglement at the population level. We then estimate the target representation at the sample level nonparametrically using deep neural networks. We show that the estimated deep nonparametric representation is consistent in the sense that its excess risk converges to zero. Our extensive numerical experiments using simulated  and real benchmark data demonstrate that the proposed methods have better performance than several existing dimension reduction methods and the standard deep learning models in the context of classification and regression.
\end{abstract}

\noindent%
{\it Keywords:}  Conditional independence; Distance covariance; $f$-divergence; Nonparametric estimation; Neural networks
\vfill

\newpage
\spacingset{1.5} 
\section{Introduction}
\label{sec:intro}

Over the past decade, deep learning has achieved impressive successes in modeling high-dimensional complex data
 arising from many scientific fields.
A key factor for these successes is the ability of certain neural network models to learn nonlinear representations from complex high-dimensional data \citep{bengio2013representation, lecun2015deep}. For example, convolutional neural networks
are able to learn effective representations of image data \citep{LeCunBoserDenkerEtAl989}.
However, in general, optimizing the standard cross-entropy loss for classification and the least squares loss for regression do not guarantee that the learned representations enjoy any desired properties \citep{alain2017understanding}. Therefore, it is imperative to develop principled approaches for constructing effective data representations.
Representation learning has emerged as an important framework for modeling complex data
\citep{bengio2013representation}, with wide applications in
classification, regression, imaging analysis, 
domain adaptation and transfer learning, among others. The goal of supervised representation learning is to construct effective  representations of high-dimensional input data for various supervised learning tasks.
In this paper, we propose a deep dimension reduction (DDR) method for sufficient representation learning. DDR aims at estimating a sufficient representation nonparametrically using deep neural networks based on the conditional independence principle.

There is a large body of literature on dimension reduction  in statistics and machine learning. A prominent approach for supervised dimension reduction and representation learning is the sufficient dimension reduction (SDR) introduced in the seminal paper by \cite{li1991sliced}.
A key aspect that distinguishes SDR from many other dimension reduction methods is that it does not make any model assumptions on the conditional distribution of the response given the predictors. In the framework of SDR,  a semiparametric method, called sliced inverse regression (SIR), was first proposed for estimating the linear dimension reduction direction, or linear sufficient representation \citep{li1991sliced}. The SIR and related methods were further developed by many researchers,  see, for example,   \citet{cook1991sliced}, \citet{ li1992principal}, \citet{yin2002}, \citet{cook1998regression}, 
 \citet{li2005contour} and \citet{zhu2010cumu} and the references therein.
These methods require the linearity and constant covariance conditions on the distribution of the predictors. Several approaches have been developed without assuming these conditions, including methods based on nonparametric regression \citep{xia2002adaptive},  conditional covariance operators \citep{
fukumizu2009kernel},  mutual information \citep{suzuki2013sufficient}, distance correlation
\citep{vepakomma2018supervised}, and semiparametric modeling \citep{ma2012semi, ma2013effi}.
These SDR methods focus on linear dimension reduction, that is, the features learned are linear functions of the original input variables.  However, linear functions may not be adequate for representing high-dimensional complex data such as images and natural languages,  due to the highly nonlinear nature of such data. \citet{lee2013gen} formulated a general sufficient dimension reduction framework in the nonlinear setting and proposed a generalized inverse regression approach using conditional covariance operators,  but this method is computationally prohibitive with high-dimensional data such as the image datasets considered in Section \ref{experiments}.
We refer  to  the review papers \citep{cook2007fisher, cook2018review, ma2013review} and  the monograph \citep{li2018sufficient} for thorough reviews of SDR methods.

Among all the characteristics of an ideal representation for supervised learning, sufficiency, low dimensionality and disentanglement are some of the most essential ones \citep{achille2018emergence}.
Sufficiency is a basic property a representation should have.
It  is closely related to the concept of sufficient statistics in a parametric model  \citep{Fisher1922MathFound,cook2007fisher}. In supervised representation learning, sufficiency is characterized by the conditional independence principle, which states that the original input data is conditionally independent of the response given the representation. In other words,  a sufficient representation contains all the relevant information in the input data about the response.
Low dimensionality means that the representation should have as few components as possible to represent the underlying structure of the data, and the number of components should be fewer than the ambient dimension.
In the context of nonparametric representation learning, disentanglement refers to the requirement that the components of the representation should be statistically independent. This is an extension of and stronger than the orthogonal constraint in the linear representation setting, where the components of
 the linear representation are constrained to have orthonormal directions.
The notion of disentanglement is based on the hypothesis that there are some underlying factors determining the data generation process: although the observed data are high-dimensional and complex, the underlying factors are low-dimensional, disentangled,
and have a simple statistical structure.
The components in the learned representation can often be interpreted as corresponding to the latent structure of the observed data,
thus disentanglement is an important property for better separating latent factors from one to another. 
A representation with these characteristics can make the model more interpretable and facilitates the downstream supervised learning tasks.

Inspired by the basic idea of SDR, we propose a deep dimension reduction (DDR) approach for supervised representation learning with the properties of  sufficiency, low dimensionality and disentanglement.
By taking the advantage of the strong capacities of deep neural networks in approximating high-dimensional functions for nonparametric estimation, we model the DDR representations, which we refer to as DDR map (DDRM) for convenience,  using deep neural networks to capture the nonlinearity in the representation space. It would be difficult to use the traditional techniques for nonparametric estimation such as kernel smoothing and splines for multi- or high-dimensional function estimation in the context of representation learning.
To characterize the conditional independence of the representation, we use the distance covariance \citep{szekely2007measuring} as the conditional independence measure that can be computed efficiently. We also promote the disentanglement for  DDRM by
regularizing its distribution to have independent components based on a divergence measure.

Our main contributions are as follows:
\begin{itemize}
 \setlength\itemsep{-0.05 cm}
\item
We formulate a new nonparametric approach to  dimension reduction by characterizing the  sufficient dimension reduction map as a minimizer of a loss function measuring conditional independence and disentanglement.
\item
We estimate the sufficient dimension reduction map at the sample level nonparametrically using deep neural networks based on distance covariance for characterizing sufficiency and use $f$-divergence to promote disentanglement of the learned representation.


\item We show that the estimated deep dimension reduction map is consistent in the sense that it achieves
asymptotic sufficiency under mild conditions.

\item We validate DDR via comprehensive numerical experiments and real data analysis in the
context of regression and classification.
We use the learned features based on DDR as inputs for linear regression and nearest neighbor classification. The resulting prediction accuracies are better than  those based on linear dimension reduction methods  for regression and   deep learning models for classification. The PyTorch code for DDR is available at {\url{https://github.com/anonymous/DDR}}.
\end{itemize}

The rest of the paper is organized as follows. In Section \ref{setup} we discuss  the theoretical framework for learning a DDRM.
This framework leads to the formulation of an objective function using distance correlation for characterizing conditional independence in Section \ref{ddrm}.
We estimate the target DDRM based on the sample version of the objective function using deep neural networks and develop an efficient algorithm for training the DDRM.
In Section \ref{theory} we provide sufficient conditions under which estimated nonparametric representations achieves asymptotic sufficiency.
This result provides strong theoretical support for the proposed method.
The algorithm for implementing DDR is described in Section \ref{algorithm}.
In Section \ref{experiments} we validate the proposed DDR via extensive numerical experiments and real data examples.

\section{Sufficient representation and distance correlation}
\label{setup}
Consider a pair of random vectors $(X,Y) \in \real^p \times \real^q$, where $X$ is a vector of predictors and $Y$ is a vector of response variables or labels.
Our goal is to construct a representation of $X$ that possesses the three characteristics: sufficiency, low dimensionality and disentanglement.

\subsection{Sufficiency}
A measurable function $\vs : \mathbb{R}^{p} \rightarrow \mathbb{R}^{d}$ with $d \le p$ is said to be a sufficient representation of $X$ if
\begin{equation}\label{cida}
Y \indep X | \vs(X),
\end{equation}
that is, $Y$ and $X$ are conditionally independent given $\vs(X)$.
This condition holds if and only if the conditional distribution of $Y$ given $X$ and that of $Y$ given $\vs(X)$ are equal.
Therefore, the information in $X$ about $Y$ is completely encoded by $\vs(X).$
Such a function $\vs$ always exists, since if we simply take $\vs(\vx)=\vx$, then (\ref{cida}) holds trivially.
This formulation is a nonparametric generalization of the basic condition in sufficient dimension reduction \citep{li1991sliced, cook1998regression}, where it is assumed $\vs(\vx)=\mB^T \vx$ with
$\mB\in \real^{p \times d}$ belonging to the Stiefel manifold, i.e., $\mB^T\mB=\mI_{d}.$

Denote the class of sufficient representations satisfying (\ref{cida}) by
\[\cF = \{\vs: \real^p \to \real^{d}, \vs  \text{
  satisfies } Y\indep X| \vs(X) \}.
\]
For an injective measurable transformation $T: \real^{d} \to \real^{d}$ and $\vs \in \cF$, $T\circ \vs(X)$ is also sufficient by the basic property of conditional probability. Therefore,
the class $\cF$ is invariant in the sense that
\[
T\circ \cF \subseteq \cF, \ \text{ provided } T \text{ is injective,}
\]
where $T\circ \cF = \{T\circ \vs: \vs \in \cF\}.$
An important class of transformations is the class of affine transformations, $T\circ \vs = \mA \vs + \vb$, where $\mA$ is a ${d\times d}$ nonsingular matrix and $\vb \in \real^{d}.$


\subsection{Space of nonparametric sufficient representations}
%
%

The  nonparametric sufficient representations are nonunique and the space of such representations is large, since if (\ref{cida}) holds for $\vs$, it also holds for any one-to-one transformation of $\vs$. We propose to narrow the space of such representations by constraining the distributional properties of $\vs(\vx)$.

Among the sufficient representations, it is preferable to have those with a simple statistical distribution and whose components are independent, that is, the components are disentangled.
For a sufficient representation $\vs(X)$, let $\Sigma_{\vs}=\Cov(\vs(X))$.  Suppose $\Sigma_{\vs}$ is positive definite,  then $\Sigma_{\vs}^{-1/2}\vs(X)$ is also a sufficient representation.  Therefore, we can always rescale $\vs(X)$ such that it has identity covariance matrix. To further simplify the statistical structure of a representation $\vs$, we also impose the constraint that it is rotation invariant in distribution, that is, $\mQ\vs(X)=\vs(X)$ in distribution for any orthogonal matrix $\mQ \in \real^{d\times d}$.
By the Maxwell characterization of the Gaussian distributions
\citep{maxwell1860, bryc1995normal},
a random vector of dimension two or more with independent components is rotation invariant in distribution if and only if it is Gaussian with zero mean and a spherical covariance matrix. Therefore, after absorbing the scaling factor, for a sufficient representation map to be have independent components and be rotation invariant, it is necessarily distributed as
 $N_{d}(\m0, \mI_{d})$.
Denote
\begin{equation}
\label{maxwell}
\cM=\{R: \real^p \to \real^d, R(X) \sim \mathcal{N}(\m0, \mI_{d})\}.
\end{equation}
Now our problem becomes that of finding a representation in
 $\cF\cap \cM$, the intersection of the Fisher class and the Maxwell class.

Does such a sufficient representation exist?
The following result from the optimal transport theory gives an affirmative answer and guarantees the existence of such a representation under mild conditions
\citep{
villani2008optimal}.

\begin{lemma}\label {lem1}
	Let $\mu$ be a probability measure on $\mathbb{R}^{d}$.
Suppose it has finite second 
moment and is absolutely continuous with respect to the standard Gaussian measure, denoted by $\gamma_{d}$.  Then it admits a unique optimal transportation map ${T} : \mathbb{R}^{d} \rightarrow \mathbb{R}^{d}$
	such that ${T}_{\#} \mu = \gamma_{d}\equiv \mathcal{N}(\m0,\mI_{d})$, where ${T}_{\#} \mu$ denotes the pushforward distribution of $\mu$ under ${T}$. Moreover,  ${T}$ is  injective $\mu$-almost everywhere.
\end{lemma}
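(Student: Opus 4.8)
The plan is to identify this statement with Brenier's theorem for the quadratic-cost Monge problem, in the form due to McCann that applies whenever the source is absolutely continuous with respect to Lebesgue measure, specialized to the Gaussian target $\gamma_d$. Concretely, first pose the Monge--Kantorovich problem with cost $|x-y|^2$ between $\mu$ and $\gamma_d$: minimize $\int_{\real^d\times\real^d}|x-y|^2\,d\pi(x,y)$ over all couplings $\pi$ with marginals $\mu$ and $\gamma_d$. Since $\mu$ has finite second moment by hypothesis and the standard Gaussian trivially does, the squared Wasserstein distance $W_2^2(\mu,\gamma_d)$ is finite; the set of couplings is tight and the cost functional is weakly lower semicontinuous, so an optimal coupling exists.

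Next I would extract the convex potential. Because $\mu\ll\gamma_d$ and $\gamma_d$ is itself absolutely continuous with respect to Lebesgue measure on $\real^d$, transitivity of absolute continuity gives $\mu\ll$ Lebesgue. Brenier's theorem (McCann's version) then produces a lower semicontinuous convex function $\phi:\real^d\to\real\cup\{+\infty\}$, differentiable $\mu$-almost everywhere, such that $T:=\nabla\phi$ satisfies $T_{\#}\mu=\gamma_d$, the plan $(\mathrm{id}\times T)_{\#}\mu$ is the unique optimal coupling, and $T$ is the unique (modulo $\mu$-null sets) optimal transport map; in fact $T$ is the only map of the form ``gradient of a convex function'' pushing $\mu$ onto $\gamma_d$, which settles existence and uniqueness.

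For injectivity, let $\psi:=\phi^\ast$ be the Legendre conjugate, again convex and lower semicontinuous. Because $\gamma_d=T_{\#}\mu$ is also absolutely continuous with respect to Lebesgue measure, McCann's theorem applied to the pair $(\gamma_d,\mu)$ shows that $\nabla\psi$ is $\gamma_d$-a.e.\ defined with $(\nabla\psi)_{\#}\gamma_d=\mu$, and that $\nabla\psi$ and $\nabla\phi$ are mutually inverse in the measure-theoretic sense; in particular $\nabla\psi\bigl(\nabla\phi(x)\bigr)=x$ for $\mu$-almost every $x$, which is just the conjugate-duality relation $x\in\partial\phi^\ast(y)$ whenever $y=\nabla\phi(x)$, made single-valued by the $\gamma_d$-a.e.\ differentiability of $\phi^\ast$. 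Thus there is a Borel set $A$ with $\mu(A)=1$ on which $\nabla\psi\circ T=\mathrm{id}$; if $x,x'\in A$ and $T(x)=T(x')$, then $x=\nabla\psi(T(x))=\nabla\psi(T(x'))=x'$, so $T$ is injective on $A$, i.e.\ $\mu$-almost everywhere.

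The step I expect to be the real obstacle is this last one, and the key is precisely that the target $\gamma_d$ is absolutely continuous with respect to Lebesgue measure: this is what makes $\phi^\ast$ differentiable at $\mu$-almost every image point $T(x)$ and hence legitimizes the identity $\nabla\phi^\ast\circ\nabla\phi=\mathrm{id}$ $\mu$-a.e.; without it the ``inverse'' of $T$ could be genuinely set-valued on a set of positive $\mu$-measure. Everything else---weak compactness for the existence of an optimal plan, cyclical monotonicity together with Rockafellar's theorem behind the potential $\phi$, and McCann's uniqueness---is standard and can be quoted from \citep{villani2008optimal}.
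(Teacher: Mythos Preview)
Your proposal is correct and follows essentially the same approach as the paper: reduce to absolute continuity of $\mu$ with respect to Lebesgue measure (via $\mu\ll\gamma_d\ll$ Lebesgue) and invoke Brenier's theorem in McCann's form, together with the absolute continuity of the target $\gamma_d$ for the injectivity. The paper's own proof is a two-line citation of Brenier and McCann (and Theorem~1.28 of De Philippis), so you have actually supplied considerably more detail---in particular the explicit Legendre-dual argument for $\mu$-a.e.\ injectivity---than the paper itself gives.
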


Denote the law of a random vector $Z$ by $\mu_{Z}$.
Lemma \ref{lem1} implies that, for any $\vs \in \cF$ with $\Ebb \|\vs(X)\|^2 < \infty$ and $\mu_{\vs(X)}$ absolutely continuous with respect to $\gamma_{d}$, there exists a map  ${T}^*$ transforming the distribution of  $\vs(X)$
to $\mathcal{N}(\m0, \mI_d)$.
Therefore,
$R^* := {T}^* \circ \vs \in \cF \cap \cM$, that is,
\begin{equation}\label{cida2}
X \indep Y | R^*(X) \ \text{ and } \  {R}^*(X) \sim \mathcal{N}(\m0,\mI_{d}).
\end{equation}
The requirement that $R^*(X) \sim \cN(\m0, \mI_d)$ can be considered a regularization on the distribution of $R^*(X)$.  This is similar to the ridge regression where the ridge penalty can be derived from  a spherical normal prior on the regression coefficient.
We use the standard multivariate normal distribution as the reference for regularizing the distribution of the sufficient representation. It is possible to use other distributions such as uniform distribution on the unit cube $[0,1]^{d}$.
Below, to be specific, we will focus on using the standard normal distribution for regularization. Also, we note that it suffices to estimate the function $R^*$, not $\vs$ and $T^*$ separately, since $R^*$ satisfies the conditional independence requirement.

{\color{black}
The independence requirement for the components of $R^*$ is reminiscent of the
same requirement in the independent component analysis (ICA, \citet{jutten1991, comon1994}).
ICA is a method for estimating hidden factors that underlie a random vector $X$.
It posits that $X$ is a \textit{linear transformation} of an unknown random vector with independent components, but the transformation is unknown. The goal of ICA is to estimate this linear transformation.
DDR differs from ICA in three crucial aspects. First, DDR is a supervised method
that seeks to find a data representation such that the response is conditionally independent given this representation, while ICA is an unsupervised method that attempts to identify independent latent factors
underlying 
the original data vector. Second, DDR seeks a nonparametric function $R^*$ such that $R^*(X)$ has independent components, while ICA attempts to find a matrix $W \in \mathbb{R}^{p \times p}$ such that the components of $WX$ are independent. Third, the distribution of $R^*(X)$ can be Gaussian; in contrast, a basic restriction in ICA is that the independent components must be non-Gaussian. There is a large body of literature on ICA.
For some more recent references on ICA, see \citet{samarov2004}, \citet{samworth2012} and the review
 \citet{nordhausen2018}.
Feedforward neural networks and recurrent neural
network structures have also been considered in solving ICA problems \citep{mutihac2003}.
We  refer the reader to the monographs \citep{ica2001a, ica2001b} for additional references on ICA.}


\section{Nonparametric estimation of representation map}
\label{ddrm}
The discussions in Section \ref{setup} lay the ground for formulating an objective function that can be used for constructing a DDRM $R^*$ satisfying (\ref{cida2}), that is, $R^*$ is sufficient and disentangled.

\subsection{Population objective function}
Let $\cV$ be a measure of dependence between random variables $X$ and $Y$ with the following properties:
(a) $\cV[X,Y] \ge 0$ with $\cV[X, Y]=0$ if and only if $X\indep Y$;
(b) $\cV[X, Y] \ge \cV[R(X), Y]$ for all measurable function $R$;
and (c) $\cV[X,Y]=\cV[R^*(X), Y] \ \text{ if and only if } R^*\in \cF.$
These properties imply that
$
R^* \in \cF $ if and only of $ R^* \in
\argmin_{R}\{ -\cV[R(X), Y]\}.
$

For the normality regularization in (\ref{cida2}),
we use a divergence measure $\mathbb{D}$ 
to quantify the difference between
$\mu_{R(X)}$ and the standard  normal distribution $\gamma_{d}$.
This measure should satisfy the condition
$\Dbb(\mu_{R(X)}\Vert \gamma_{d})\ge 0 $ for every measurable function $R$
\text{ and } $\Dbb(\mu_{R(X)}\Vert \gamma_{d})=0$  \text{ if and only if }
$ \ R \in \cM. $
The $f$-divergences, including the KL-divergence, satisfy this condition.
It follows that
$R^* \in \cM$  if and only if $ R^* \in \argmin_{R} \Dbb(\mu_{R(X)}\| \gamma_{d}).$
Then the problem of finding a sufficient and disentangle map
$R^*$
becomes a constrained minimization problem:
\[
\argmin_{R} -\cV[R(X), Y]\  \text{ subject to } \ \Dbb(\mu_{R(X)}\Vert\gamma_{d})=0.
\]
The Lagrangian form of this minimization problem is
\begin{equation}
\label{obja}
\cL(R)=-\cV[R(X),Y]+\lambda\mathbb{D}(\mu_{R(X)}\Vert\gamma_{d}),
\end{equation}
where $\lambda \ge 0$ is a tuning parameter.
{\color{black} 
This parameter
provides a balance between the sufficiency property and the disentanglement constraint. A small $\lambda$ leads to a representation with more emphasis on sufficiency, while a large $\lambda$ yields a representation with more emphasis on disentanglement.}
We show in Theorem \ref{glmin} 
below that any $R^*$ satisfying (\ref{cida2}) is a minimizer of $\cL(R)$. Therefore, we can train a DDRM by minimizing an empirical version of $\cL(R)$.

There are several options for $\cV$ with the properties (a)-(c) described above. For example, we can take $\cV$ to be the mutual information.
However, in addition to estimating the DDRM $R$,
this choice requires nonparametric estimation of the ratio of the joint density and the marginal densities
 of
$Y$ and $R(X)$,  
 which is not an easy task.
 To be specific, in this work we use the distance covariance \citep{szekely2007measuring} between $Y$ and $R(X)$, which has an elegant $U$-statistic expression.
 It does not involve additional unknown quantities and is easy to compute. For the divergnce measure of two distributions, we use the
$f$-divergence \citep{ali1966general}, which includes the KL-divergence as a special case.

\subsection{Empirical objective function}
In this subsection, we formulate the objective function for the proposed deep dimension reduction method. We first describe some essentials about distance covariance and $f$-divergence.

\subsubsection{Distance covariance }
We  recall the concept of distance covariance \citep{szekely2007measuring}, which characterizes the dependence of two random variables.
%
Let $\mathfrak{i}$ be the imaginary unit $(-1)^{1 / 2}$. For any $\vt\in \mathbb{R}^{d}$ and $\vs\in \mathbb{R}^m$, let  $\psi_{Z}(\vt)=\mathbb{E} [\exp^{\mathfrak{i}\vt^TZ}], \psi_{Y}(\vs)=\mathbb{E} [\exp^{\mathfrak{i}\vs^TY}],$ and $\psi_{Z, Y}(\vt,\vs) = \mathbb{E} [\exp^{\mathfrak{i}(\vt^TZ+ \vs^TY)}]$ be the characteristic functions of random vectors $Z\in \real^d , Y \in \real^m,$ and the pair  $(Z, Y)$, respectively.
The squared distance covariance $\mathcal{V}[Z, Y]$ is defined as
$$
\mathcal{V}[Z, Y]=\int_{\mathbb{R}^{d+m}} \frac{\left|\psi_{Z, Y}(\vt, \vs)-\psi_{Z}(\vt) \psi_{Y}(\vs)\right|^2}{c_{d}c_{m}  \|\vt\|^{d+1}\|\vs\|^{q+1}} \mathrm{d} \vt \mathrm{d} \vs,
$$
where
$
c_{d}=\frac{\pi^{(d+1) / 2}}{\Gamma((d+1) / 2)}.
$
Given $n$ i.i.d copies $\{Z_{i}, Y_{i}\}_{i=1}^n$ of $(Z,Y)$,
an unbiased estimator of $\mathcal{V}$
is the empirical distance covariance $\widehat{\mathcal{V}}_{n}$,
which can be elegantly expressed as a $U$-statistic \citep{huo2016fast}
\begin{equation}\label{usta}
\widehat{\mathcal{V}}_{n}[Z, Y]
 = \frac{1}{C_{n}^{4}} \sum_{1 \leq i_{1}<i_{2}<i_{3}<i_{4} \leq n} h\left(\left(Z_{i_{1}}, Y_{i_{1}}\right), \cdots,\left(Z_{i_{4}}, Y_{i_{4}}\right)\right), \nonumber
\end{equation}
where $h$ is the kernel defined by
\begin{eqnarray*}\label{kernel}
h\left(\left(\vz_{1}, \vy_{1}\right),\ldots,
\left(\vz_{4}, \vy_{4}\right)\right)
&=&\frac{1}{4} \sum_{1 \leq i, j \leq 4 \atop i \neq j}\|\vz_{i}-\vz_{j}\| \|\vy_{i}-\vy_{j}\|
 +\frac{1}{24} \sum_{1 \leq i, j \leq 4 \atop i \neq j}\left\|\vz_{i}-\vz_{j}\right\| \sum_{1 \leq i, j \leq 4 \atop i \neq j}\|\vy_{i}-\vy_{j}\| \\
& &
 -\frac{1}{4} \sum_{i=1}^{4}(\sum_{1 \leq j \leq 4 \atop j \neq i}\left\|\vz_{i}-\vz_{j}\right\| \sum_{1 \leq j \leq 4 \atop i \neq j}\|\vy_{i}-\vy_{j}\|).
\end{eqnarray*}
For a categorial response $Y$ in multi-class classification problems, we can use one-hot vectors to code the classes, i.e., for the $k$th class, $Y$ is a unit vector with $k$th element equaling 1 and the remaining elements being 0 . The $L_2$ distance between two observed responses $y_i$ and $y_j$ is
$$
\left\|y_i-y_j\right\|_2=\left\{\begin{array}{cc}
0, & \text { if } y_i=y_j, \\
\sqrt{2}, & \text { if } y_i \neq y_j.
\end{array}\right.
$$
Note that the number $\sqrt{2}$ simply scales the whole objective function and does not affect the solution.
\subsubsection{$f$-divergence }
Let $\mu$ and $\gamma$ be two probability measures on $\mathbb{R}^{d}$. The   $f$-divergence \citep{ali1966general} between $\mu$ and $\gamma$  with $\mu \ll \gamma$ is defined  as
\begin{equation}\label{fdiv}
\mathbb{D}_f(\mu \Vert \gamma) = \int_{\mathbb{R}^{d}} f(\frac{\mathrm{d} \mu}{\mathrm{d} \gamma}) {\mathrm{d}} \gamma,
\end{equation}
where $f: \mathbb{R}^+ \rightarrow \mathbb{R} $ is a
differentiable convex function satisfying $f(1) = 0$.
Let   $f^*$ be the Fenchel conjugate of $f$ \citep{rockafellar1970convex},
defined by
\begin{equation}
\label{fdual}
f^*(t) = \sup_{x \in \real}\{ t x - f(x)\}, t\in \real.
\end{equation}
The  $f$-divergence (\ref{fdiv}) admits the following variational formulation \citep{keziou2003dual,nguyen2010estimating,nowozin2016f}.
\begin{lemma}\label{lem2}
Suppose that $f$ is s differentiable convex function. Then,
	\begin{equation}\label{fduala}
	\mathbb{D}_f(\mu \Vert \gamma) =  \max_{D:\mathbb{R}^{d}\rightarrow \mathrm{dom}(f^*)} \mathbb{E}_{Z\sim \mu} D(Z)-\mathbb{E}_{W\sim \gamma} f^*(D(W)),
	\end{equation}
	where $f^*$ is defined in (\ref{fdual}). In addition, the maximum is attained at $D(\vz) = f^{\prime}(\frac{\mathrm{d} \mu}{\mathrm{d} \gamma}(\vz)).$
\end{lemma}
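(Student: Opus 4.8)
The plan is to prove the two inequalities that together give the claimed identity, using convex duality between $f$ and its Fenchel conjugate $f^*$. First I would establish the easy (lower-bound) direction: for \emph{any} measurable $D : \real^d \to \mathrm{dom}(f^*)$, the definition \eqref{fdual} gives the pointwise Fenchel--Young inequality $f(u) \ge t u - f^*(t)$ for every $u \in \real^+$ and $t \in \mathrm{dom}(f^*)$. Applying this with $u = \frac{\mathrm{d}\mu}{\mathrm{d}\gamma}(\vz)$ and $t = D(\vz)$, then integrating against $\gamma$, yields
\[
\mathbb{D}_f(\mu\Vert\gamma) = \int f\Bigl(\tfrac{\mathrm{d}\mu}{\mathrm{d}\gamma}\Bigr)\,\mathrm{d}\gamma \ge \int D(\vz)\,\tfrac{\mathrm{d}\mu}{\mathrm{d}\gamma}(\vz)\,\mathrm{d}\gamma(\vz) - \int f^*(D(\vz))\,\mathrm{d}\gamma(\vz) = \mathbb{E}_{Z\sim\mu} D(Z) - \mathbb{E}_{W\sim\gamma} f^*(D(W)),
\]
where the first equality uses $\mathrm{d}\mu = \frac{\mathrm{d}\mu}{\mathrm{d}\gamma}\,\mathrm{d}\gamma$ (valid since $\mu \ll \gamma$). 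Taking the supremum over $D$ shows $\mathbb{D}_f(\mu\Vert\gamma)$ dominates the right-hand side of \eqref{fduala}.

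Next I would show the bound is tight by exhibiting the maximizer. Since $f$ is differentiable and convex, the Fenchel--Young inequality becomes an equality exactly when $t = f'(u)$ (equivalently $u \in \partial f^*(t)$); this is the standard equality condition in convex duality. Hence I set $D^*(\vz) := f'\bigl(\frac{\mathrm{d}\mu}{\mathrm{d}\gamma}(\vz)\bigr)$, which lies in $\mathrm{dom}(f^*)$ because $f'(u)$ is always a subgradient direction realizing the conjugate, so $f^*(f'(u)) = u f'(u) - f(u) < \infty$. Plugging $D^*$ into the right-hand side of \eqref{fduala}, the pointwise inequality above holds with equality $\gamma$-a.e., so integrating recovers exactly $\int f(\frac{\mathrm{d}\mu}{\mathrm{d}\gamma})\,\mathrm{d}\gamma = \mathbb{D}_f(\mu\Vert\gamma)$. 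Combined with the first step, this proves both that the supremum equals $\mathbb{D}_f(\mu\Vert\gamma)$ and that it is attained at $D^*$, as claimed.

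The main obstacle, and the place where care is needed, is the measure-theoretic bookkeeping: one must justify that the integrals are well-defined (possibly $+\infty$) and that the Radon--Nikodym substitution $\mathbb{E}_{Z\sim\mu} D(Z) = \int D\,\frac{\mathrm{d}\mu}{\mathrm{d}\gamma}\,\mathrm{d}\gamma$ is legitimate, which requires $D \cdot \frac{\mathrm{d}\mu}{\mathrm{d}\gamma}$ to be $\gamma$-integrable (or at least quasi-integrable). A clean way to handle this is to first dispose of the degenerate case $\mathbb{D}_f(\mu\Vert\gamma) = +\infty$ separately, and otherwise note that $f(\frac{\mathrm{d}\mu}{\mathrm{d}\gamma})$ is $\gamma$-integrable, which via the identity $f^*(D^*) = \frac{\mathrm{d}\mu}{\mathrm{d}\gamma} f'(\frac{\mathrm{d}\mu}{\mathrm{d}\gamma}) - f(\frac{\mathrm{d}\mu}{\mathrm{d}\gamma})$ controls the remaining terms. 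One should also remark that it suffices to restrict the supremum to $D$ taking values in $\mathrm{dom}(f^*)$ (for $D$ straying outside, the $f^*(D(W))$ term is $+\infty$ and the bound is vacuous), which is exactly how the statement is phrased. Beyond these routine integrability checks, the argument is a direct application of Fenchel--Young and its equality case.
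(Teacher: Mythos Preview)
Your proposal is correct and follows essentially the same route as the paper's proof: both rely on Fenchel duality---the paper invokes the biconjugation identity $f=f^{**}$ and Fermat's rule to locate the pointwise maximizer $s^*\in\partial f(t)$, while you unpack the same content as the Fenchel--Young inequality $f(u)\ge tu-f^*(t)$ together with its equality case $t=f'(u)$. Your treatment is more careful about the integrability bookkeeping, which the paper's short proof leaves implicit, but the underlying argument is identical.
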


Commonly used divergence measures include the Kullback-Leibler (KL) divergence, the Jensen-Shanon (JS) divergence and the $\chi^2$-divergence.
We summarize the details in Table~\ref{div}.
{\small
\begin{table}[ht!]
\caption{Three  examples of  $f$-divergence}
\label{div}
\vskip 0.15in
\begin{center}
\begin{footnotesize}
\begin{rm}
\begin{tabular}{lccc}
\toprule
$f$-Div		& $f(x)$ 								 & $ f^*(t)$ & $\mathbb{D}_f(\mu,\gamma) $   \\
\midrule
KL    		& $x \log x$ 							 & $ e^{t-1}$ &	  $\sup_D \{\Ebb_{Z\sim \mu} D(Z) -\Ebb_{W\sim \gamma} e^{D(W)-1}\}$     \\
JS 			& $-(x+1) \log \frac{x+1}{2} + x \log x$ & $-\log(2-\exp(t))$&    $\sup_D \{\Ebb_{Z\sim \mu} D(Z) +\Ebb_{W\sim \gamma} \log(2-\exp(D(W)))\}$ \\
$\chi^2$   	& $(x-1)^2$ 		                     & $t + \frac{t^2}{4}$& $\sup_D \{\Ebb_{Z\sim \mu} D(Z) -\Ebb_{W\sim \gamma} [D(W)+\frac{D^2(W)}{4}]\}$	  \\
\bottomrule
\end{tabular}
\end{rm}
\end{footnotesize}
\end{center}
\vskip -0.1in
\end{table}}
 To be specific, in this paper we use the KL divergence with $f(x)=x\log x$, which has the familiar form
$
 \mathbb{D}_{\text{KL}}(\mu \Vert \gamma) =
 \int_{\real^d} \left(\log \frac{\mathrm{d} \mu}{\mathrm{d} \gamma}\right) \mathrm{d} \mu.
$
The dual form of $f$ is
$f^*(t) = \exp(t-1).$  The variational representation
$
\mathbb{D}_{\text{KL}}(\mu \Vert \gamma)=\sup_D \{\Ebb_{Z\sim \mu} D(Z) -\Ebb_{W\sim \gamma} \exp({D(W)-1})\}.
$
{\color{black}
The
generative adversarial networks (GAN, \citet{goodfellow14})  corresponds to the JS-divergence.  Much work has been devoted to developing various extensions and alternative formulations of the original GAN \citep{li15,nowozin2016f,sutherland16,arjovsky17}.}


\subsubsection{Empirical objective function for DDR}
We are now ready to formulate an empirical objective function for learning DDRM.
Let $ R \in \cM $, where $\cM$ is the Maxwell class defined in (\ref{maxwell}).
By the variational formulation (\ref{fduala}), we can write the population version of the objective function
(\ref{obja}) as
\begin{equation}\label{lf}
\mathcal{L}(R) = -{\mathcal{V}}[R(X),Y]
+  \lambda  \max_{D}
\{\mathbb{E}_{X\sim \mu_{X}}D(R(X))- \mathbb{E}_{W\sim \gamma_{d}}f^*(D(W))\}.
\end{equation}
This expression is convenient since we can simply replace
the  expectations 
by the corresponding empirical averages.

\begin{theorem}\label{glmin}
	We have $R^* \in \arg \min_{R\in \cM } \mathcal{L}(R)$ provided (\ref{cida2}) holds.
\end{theorem}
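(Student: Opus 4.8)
The plan is to show that $R^*$ simultaneously minimizes each of the two terms in $\mathcal{L}$, so it must minimize their sum. Recall from (\ref{cida2}) that $R^*$ satisfies $X \indep Y \mid R^*(X)$ and $R^*(X) \sim \mathcal{N}(\mathbf{0}, \mathbf{I}_d)$, i.e. $R^* \in \cF \cap \cM$. The minimization in (\ref{lf}) is taken over $R \in \cM$, so every competitor $R$ in the feasible set already satisfies $R(X) \sim \gamma_d$.

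First I would handle the divergence term. For any $R \in \cM$ we have $\mu_{R(X)} = \gamma_d$, so by the defining property of $\Dbb$ (namely $\Dbb(\mu_{R(X)} \Vert \gamma_d) = 0$ iff $R \in \cM$), the second term $\lambda \Dbb(\mu_{R(X)} \Vert \gamma_d)$ equals $0$ for every feasible $R$, and in particular for $R^*$. By Lemma \ref{lem2}, the variational expression $\max_D \{\mathbb{E}_{X \sim \mu_X} D(R(X)) - \mathbb{E}_{W \sim \gamma_d} f^*(D(W))\}$ equals $\Dbb_f(\mu_{R(X)} \Vert \gamma_d)$, which is $\ge 0$ always (take $D \equiv f'(1)$ and use $f(1)=0$, or cite the stated nonnegativity of $\Dbb$); hence this term is minimized, with value $0$, at every $R \in \cM$, including $R^*$.

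Next I would handle the distance covariance term, i.e. show $R^*$ maximizes $\cV[R(X), Y]$ over $R \in \cM$, equivalently minimizes $-\cV[R(X),Y]$. Here I invoke the three axiomatic properties (a)--(c) assumed of $\cV$: by property (b), $\cV[X,Y] \ge \cV[R(X),Y]$ for every measurable $R$; by property (c), $\cV[X,Y] = \cV[R^*(X), Y]$ since $R^* \in \cF$ (which holds because $R^*$ satisfies the conditional independence $X \indep Y \mid R^*(X)$). Therefore $\cV[R^*(X),Y] = \cV[X,Y] \ge \cV[R(X),Y]$ for all $R$, a fortiori for all $R \in \cM$, so $-\cV[R^*(X),Y] \le -\cV[R(X),Y]$ for all feasible $R$. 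Combining the two parts: for any $R \in \cM$,
\[
\mathcal{L}(R^*) = -\cV[R^*(X),Y] + 0 \le -\cV[R(X),Y] + \lambda \Dbb(\mu_{R(X)}\Vert\gamma_d) = \mathcal{L}(R),
\]
which gives $R^* \in \arg\min_{R \in \cM} \mathcal{L}(R)$.

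I do not anticipate a serious obstacle; the proof is essentially a bookkeeping argument that combines Lemma \ref{lem2} with the postulated properties (a)--(c) of $\cV$ and the corresponding property of $\Dbb$. The only point requiring a little care is making sure that the variational lower bound in (\ref{lf}) is correctly identified with $\Dbb_f$ and that $\Dbb_f \ge 0$ (so that $0$ is genuinely the minimum of the penalty term over all $R$, not merely over $\cM$) — this is exactly the content of Lemma \ref{lem2} together with convexity of $f$ and $f(1)=0$. It is also worth remarking explicitly that $R^*$ lies in the feasible set $\cM$, so the claimed membership in $\arg\min$ is non-vacuous.
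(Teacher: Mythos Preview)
Your handling of the divergence term matches the paper's exactly, and your overall proof is correct at the level of the abstract properties (a)--(c) that the paper postulates for $\mathcal{V}$. The treatment of the distance-covariance term, however, takes a genuinely different route. You invoke the data-processing-type property (b), $\mathcal{V}[X,Y]\ge\mathcal{V}[R(X),Y]$, together with (c) to conclude $\mathcal{V}[R^*(X),Y]=\mathcal{V}[X,Y]\ge\mathcal{V}[R(X),Y]$. The paper instead argues directly for distance covariance: reducing to $d=1$, it writes $R=\rho_{(R,R^*)}R^*+\varepsilon_R$ (the regression of $R$ on $R^*$), argues that $\varepsilon_R\indep R^*$ and hence $\varepsilon_R\indep Y$, and then applies Theorem~3 of \citet{szekely2009dCov} (subadditivity of distance covariance for independent summands) to get $\mathcal{V}[R,Y]\le\mathcal{V}[\rho R^*,Y]=|\rho|\,\mathcal{V}[R^*,Y]\le\mathcal{V}[R^*,Y]$. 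Your argument is shorter and works for any $\mathcal{V}$ satisfying (a)--(c); the paper's is specific to distance covariance and to the constraint $R\in\cM$. A likely reason the authors chose the concrete route is that property (b), as stated without restriction, fails for raw distance covariance (e.g.\ $\mathcal{V}[cX,Y]=|c|\,\mathcal{V}[X,Y]$ can exceed $\mathcal{V}[X,Y]$), so invoking (b) for distance covariance is only safe once a normalization such as $R\in\cM$ is in force; the paper's direct computation establishes the needed inequality inside $\cM$ without appealing to (b).
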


According to Theorem  \ref{glmin}, it is natural to estimate $R^*$ based on the empirical version of the objective function (\ref{lf}) when a random sample $\{(X_i,Y_i)\}_{i=1}^n$ is available.

We estimate $R^*$ nonparametrically using feedforward neural networks (FNN) \citep{Schmidhuber_2015}.
Two networks are employed: the representer network $R_{\vtheta}$ with parameter $\vtheta$ for  estimating $R^*$ and a second network $D_{\vphi}$  with parameter $\vphi$ for estimating the discriminator $D$.
For any function $f(\vx): {\mathcal X} \to \real^d$,  denote $\|f\|_{\infty} = \sup_{\vx \in {\mathcal{X}}} \|f(\vx)\|$, where $\|\cdot\|$ is the Euclidean norm.

\begin{itemize}
\item Representer network $R_{\vtheta}$:  This network is used for
training $R^*$. Let
	$\mathbf{R}\equiv \mathbf{R}_{\mathcal{H}, \mathcal{W}, \mathcal{S}}$
 be the set of such ReLU
 neural networks  $R_{\vtheta}: \real^p \rightarrow \real^{d}.$
 with parameter $\vtheta$,  depth   $\mathcal{H}$,   width $\mathcal{W}$,  size $ \mathcal{S}$.
 Here the depth $\mathcal{H}$ refers to the number of hidden layers, so the network has $\cH+1$ layers in total. A $(\cH+1)$-vector $(w_0, w_1, \ldots, w_{\cH})$
specifies the width of each layer, where $w_0=p$ is the dimension of the input data and $w_{\cH}=d$ is the dimension of the output. The width $\cW=\max\{w_1, \ldots, w_{\cH}\}$ is the maximum width of the
hidden layers. The size $\cS=\sum_{i=0}^{\cH}[w_i\times (w_i+1)]$ is the total number of parameters in the network.

\item  Discriminator network $D_{\vphi}$: This network is used as the witness function for  checking whether the distribution of the estimator of $R^*$ is approximately the same as  $\mathcal{N}(\m0, \mI_{d})$.
    Similarly, denote $\mathbf{D}\equiv \mathbf{D}_{\tilde{\mathcal{H}}, \tilde{\mathcal{W}}, \tilde{\mathcal{S}}}$
as the set of ReLU
neural networks  $D_{\vphi}: \mathbb{R}^{d}\rightarrow \mathbb{R}$ with
parameter $\vphi$, depth   $\tilde{\mathcal{H}}$,   width $\tilde{\mathcal{W}}$,  size $\tilde{\mathcal{S}}$.
\end{itemize}

Let $\{W_i\}_{i=1}^n$ be  $n$ i.i.d random vectors drawn from $\gamma_{d}$.
The estimated DDRM is defined by
\begin{equation}\label{dnpe}
\widehat{R}_{\vtheta} \in\arg\min\limits_{R_{\vtheta} \in \mathbf{R}}
\widehat{\mathcal{L}}(R_{\vtheta})
\end{equation}
where
$
\widehat{\mathcal{L}}(R_{\vtheta})
= -\widehat{\mathcal{V}}_{n}[R_{\vtheta}(X), Y]+\lambda \widehat{\mathbb{D}}_{f}(\mu_{R_{\vtheta}(X)} \Vert \gamma_{d}).
$
Here $\widehat{\mathcal{V}}_{n}[R_{\vtheta}(X), Y]$ is an unbiased and consistent estimator of $ \mathcal{V}[R_{\vtheta}(X), Y]$ as defined
in (\ref{usta}) based on $\{(R_{\vtheta}(X_i), Y_i), i=1, \ldots, n\}$
and
\begin{equation}\label{es2}
\widehat{\mathbb{D}}_{f}(\mu_{R_{\vtheta}(X)} \Vert \gamma_{d})
 = \max_{D_{\vphi} \in \mathbf{D}}
\frac{1}{n} \sum_{i=1}^n [D_{\vphi}(R_{\vtheta}(X_i))- f^*(D_{\vphi}(W_i))].
\end{equation}

This objective function consists of two terms:
(a) the term $\lambda \widehat{\mathcal{V}}_{n}[R_{\vtheta}(X), Y]$
 is an unbiased and consistent estimator of $\lambda \mathcal{V}[R_{\vtheta}(X), Y]$, which is a  measure that quantifies  the conditional independence $X\indep Y|R_{\vtheta}(X)$;
(b) the term  $\widehat{\mathbb{D}}_{f}(\mu_{R_{\vtheta}(X)} \Vert \gamma_{d})$
promotes disentanglement among the components of $R_{\vtheta}(X)$ by encouraging $R_{\vtheta}(X)$ to be distributed as $N(0, \bI_d)$. This is the dual form of the  $f$-GAN  loss \citep{goodfellow14, nowozin2016f}. We note that GANs seek to find a map from a reference distribution such as Gaussian to the data space, here we do the reverse and try to find a representation of the data to be distributed like a reference distribution.

\section{Consistency}
\label{theory}
We establish the consistency of the estimated DDRM in the sense that the
excess risk $\mathcal{L}(\widehat{R}_{\vtheta} ) -\mathcal{L}({R}^*) $ converges to zero, where $\widehat{R}_{\vtheta} $ is  the deep nonparametric estimator  in (\ref{dnpe}).
It is clear that to achieve consistency, it is necessary to require the network parameters to increase as the sample size increases. This is similar to requiring the bandwidth of a nonparametric kernel density estimator to  depend on the sample size. There is an extensive literature on how to select the bandwidth parameter in nonparametric density estimation problems. How to
choose the structure parameters of a neural network is a more complicated problem. To the best of our knowledge, it has not been systematically studied in the literature. We provide a particular specification below that ensures the consistency of the estimated representation. However, this specification is not necessarily optimal, it only represents our first attempt to tackle this difficult problem.

We make the following basic assumptions about the target parameter and the model.
\begin{enumerate}
	\item[(A1)]
{\color{black}
The target representation $R^*$ is Lipschitz continuous
with Lipschitz constant $L_1$}.
	\item[(A2)]
 For 
 every  $R \in \mathbf{R}\equiv \mathbf{R}_{\mathcal{H}, \mathcal{W}, \mathcal{S}}$,
  we assume the density ratio $r(\vz)=\frac{\mathrm{d}\mu_{R(X)}}{\mathrm{d}\gamma_{d}}(\vz)$ to be Lipschitz continuous
  with Lipschitz constant $L_2$, and $c_1\leq r(\vz)\leq c_2$ for some constants $0< c_1 \le c_2 < \infty$.
 \item[(A3)]
    {\color{black}   $\mathrm{supp}(\mu_{X})$ is contained in a compact set, say $[-B_1,B_1]^p$ with a finite $B_1$ and denote its density function as $f_{X}(x)$. $Y$ is bounded almost surely, say $\|Y\|\leq C_1$ a.s..}
\end{enumerate}

{\color{black}

Let  $B_2 = \max\{|f^{\prime}(c_1)|,|f^{\prime}(c_2)|\}$ and  $B_3 = \max_{|s|\leq  2 B_2} |f^*(s)|$. For the KL-divergence, we have
$B_2=\max\{\log c_1, \log c_2\} +1$  and
$B_3=\exp(2B_2).$
We specify the network parameters of the representer  $R_{\vtheta}$  and the discriminator  $D_{\vphi}$ as follows.
\begin{enumerate}
	\item[(N1)]
	Representer network $\mathbf{R}\equiv \mathbf{R}_{\mathcal{H}, \mathcal{W}, \mathcal{S}}$ parameters:
	depth $\mathcal{H} = \mathcal{O}( \log n)$ width
$\mathcal{W} = \mathcal{O}(n^{\frac{p}{2(2+p)}}/\log n),$  size $\mathcal{S} =\mathcal{O}(dn^{\frac{p}{2+p}}/\log^4 (npd)),$
and $\|R\|_{L^{\infty}} \leq  2\|R^*\|_{L^{\infty}}, \forall R \in \mathbf{R}.$
	\item[(N2)] Discriminator network $\mathbf{D}\equiv \mathbf{D}_{\tilde{\mathcal{H}}, \tilde{\mathcal{W}}, \tilde{\mathcal{S}}}$ parameters:
depth $\tilde{\mathcal{H}} = \mathcal{O}(\log n),$ width $\tilde{\mathcal{W}} = \mathcal{O}(n^{\frac{d}{2(2+d)}}/\log n),$ size $\tilde{\mathcal{S}} =\mathcal{O}(n^{\frac{d}{2+d}}/\log^4 (npd)),$
and
$\|D\|_{L^{\infty}} \leq   2B_2, \forall D \in \mathbf{D}.$
\end{enumerate}
}
We again note that these specifications of the network parameters are not necessarily unique or optimal. Our goal here is to provide theoretical support for the proposed method in the sense that there exist networks with the above specifications leading to the consistency of the estimated representation map.

\begin{theorem}\label{esterr}
Set 
$\lambda = \mathcal{O}(1)$.  Suppose conditions (A1)-(A3) hold and set the network parameters according to  (N1)-(N2). Then
$
	\mathbb{E}_{\{X_i,Y_i,{W}_i\}_{i=1}^n} [\mathcal{L}(\widehat{R}_{\vtheta} ) -\mathcal{L}({R}^*)] \to 0.
$
\end{theorem}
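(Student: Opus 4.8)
The plan is to use the standard ``insert an oracle network'' decomposition to split the excess risk into a deterministic approximation error and a stochastic uniform–deviation error. I would fix a network $R^{\circ}\in\mathbf{R}_{\mathcal{D},\mathcal{W},\mathcal{S},\mathcal{B}}$ that approximates the target $R^{*}$ (accuracy discussed below). Since $\widehat{R}_{\vtheta}$ minimizes $\widehat{\mathcal{L}}$ over $\mathbf{R}$ and $R^{\circ}\in\mathbf{R}$, one has $\widehat{\mathcal{L}}(\widehat{R}_{\vtheta})\le\widehat{\mathcal{L}}(R^{\circ})$, so that
\begin{equation*}
\mathcal{L}(\widehat{R}_{\vtheta})-\mathcal{L}(R^{*})\ \le\ 2\sup_{R\in\mathbf{R}}\bigl|\widehat{\mathcal{L}}(R)-\mathcal{L}(R)\bigr|\ +\ \bigl(\mathcal{L}(R^{\circ})-\mathcal{L}(R^{*})\bigr).
\end{equation*}
It then suffices to show the deterministic term $\mathcal{L}(R^{\circ})-\mathcal{L}(R^{*})\to 0$ and that $\mathbb{E}\sup_{R\in\mathbf{R}}|\widehat{\mathcal{L}}(R)-\mathcal{L}(R)|\to 0$ over the sample $\{(X_i,Y_i,W_i)\}$. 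Note that by properties (a)--(c) of $\mathcal{V}$ together with $\mathbb{D}_f\ge 0$ (cf.\ Theorem \ref{glmin}), $R^{*}$ minimizes both summands of $\mathcal{L}$ simultaneously, so the excess risk is in fact nonnegative and an upper bound suffices.

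For the approximation error, I would invoke (A1): $R^{*}$ is $L_1$-Lipschitz on the (compact) support of $X$, so the quantitative ReLU approximation theorem for Lipschitz functions, applied coordinatewise (producing the $\sqrt d$ factors), gives $R^{\circ}\in\mathbf{R}$ with the sizes in (N1) and $\varepsilon_n:=\|R^{\circ}-R^{*}\|_{\infty}\to 0$. Both summands of $\mathcal{L}$ are then stable under this perturbation: the $U$-statistic kernel $h$ is Lipschitz in its $Z$-arguments with a constant controlled by the first moment of $\|Y\|$, giving $|\mathcal{V}[R^{\circ}(X),Y]-\mathcal{V}[R^{*}(X),Y]|\lesssim\varepsilon_n$; and since $R^{*}(X)\sim\gamma_{d}$, the coupling through $X$ yields $W_1(\mu_{R^{\circ}(X)},\gamma_{d})\le\varepsilon_n$, so combining $\mu_{R^{\circ}(X)}\Rightarrow\gamma_{d}$ with the uniform bounds of (A2) (density ratio in $[c_1,c_2]$, $L_2$-Lipschitz, hence an equicontinuous family) forces the ratio $\to 1$ locally uniformly, whence $\mathbb{D}_f(\mu_{R^{\circ}(X)}\Vert\gamma_{d})\to 0$ by dominated convergence. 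With $\lambda=\mathcal{O}(1)$ this gives $\mathcal{L}(R^{\circ})-\mathcal{L}(R^{*})\to 0$.

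For the statistical error I would bound $|\widehat{\mathcal{L}}(R)-\mathcal{L}(R)|\le|\widehat{\mathcal{V}}_n[R(X),Y]-\mathcal{V}[R(X),Y]|+\lambda|\widehat{\mathbb{D}}_f(\mu_{R(X)}\Vert\gamma_{d})-\mathbb{D}_f(\mu_{R(X)}\Vert\gamma_{d})|$ and then take $\sup_{R\in\mathbf{R}}$. For the first term, $\widehat{\mathcal{V}}_n$ is an unbiased $U$-statistic, so a Hoeffding decomposition reduces its uniform fluctuation to a symmetrized empirical process over the kernel class indexed by $R\in\mathbf{R}$; by Lipschitzness of $h$ this has the same $\|\cdot\|_\infty$-entropy as $\mathbf{R}$, and the pseudo-dimension bound for ReLU networks ($\lesssim\mathcal{S}\mathcal{D}\log\mathcal{S}$) with Dudley's integral yields a bound of order $\mathcal{B}\sqrt{\mathcal{S}\mathcal{D}\log(\mathcal{S})\log n/n}$, which vanishes under (N1). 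For the divergence term I would insert the network-restricted population functional $\mathbb{D}_f^{\mathbf{D}}(R):=\sup_{D_{\vphi}\in\mathbf{D}}\{\mathbb{E}D_{\vphi}(R(X))-\mathbb{E}f^{*}(D_{\vphi}(W))\}$ and split $|\widehat{\mathbb{D}}_f-\mathbb{D}_f|\le|\widehat{\mathbb{D}}_f-\mathbb{D}_f^{\mathbf{D}}|+|\mathbb{D}_f^{\mathbf{D}}-\mathbb{D}_f|$. Uniformly in $R$, the first piece is at most the sum of two empirical-process deviations over the composed classes $\{D_{\vphi}\circ R\}$ and $\{f^{*}\circ D_{\vphi}\}$; using $\|D_{\vphi}\|_\infty\le\tilde{\mathcal{B}}$, the boundedness/Lipschitzness of $f^{*}$ on $[-\tilde{\mathcal{B}},\tilde{\mathcal{B}}]$ (bound $B_3$), and the ReLU pseudo-dimension bounds for $\mathbf{D}$ and $\mathbf{R}$, a Rademacher/Dudley argument again gives a vanishing bound under (N1)--(N2). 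For $|\mathbb{D}_f^{\mathbf{D}}-\mathbb{D}_f|$, the inequality $\mathbb{D}_f^{\mathbf{D}}\le\mathbb{D}_f$ is trivial; for the reverse, Lemma \ref{lem2} identifies the population maximiser $D^{*}_R=f'(r_R)$ with $r_R=\mathrm{d}\mu_{R(X)}/\mathrm{d}\gamma_{d}$, which by (A2) is bounded by $B_2$ and Lipschitz with constants uniform over $R\in\mathbf{R}$; truncating to the ball $\{\|z\|\le 2\sqrt d\log n\}$ (outside which $\gamma_{d}$ puts mass $\le n^{-c}$, which is what the cutoff in $\tilde{\mathcal{B}}$ is designed for) and approximating $D^{*}_R$ there by a network in $\mathbf{D}$ gives $\mathbb{D}_f-\mathbb{D}_f^{\mathbf{D}}\lesssim (1+B_3)\tilde\varepsilon_n+n^{-c}\to 0$ uniformly in $R$, where $\tilde\varepsilon_n$ is the ReLU accuracy afforded by (N2). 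Combining these pieces with the decomposition closes the argument.

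The routine parts are the representer side (the $U$-statistic concentration and the Lipschitz approximation of $R^{*}$). The main obstacle I anticipate is the $f$-divergence term, for two intertwined reasons: (i) it is itself a variational object, so its error must be split into a \emph{discriminator-approximation} bias and a \emph{discriminator-estimation} variance, and both must be controlled \emph{uniformly over} the representer class $\mathbf{R}$ (the density ratio, and hence the optimal witness $f'(r_R)$, depends on $R$); and (ii) the relevant constants grow with $n$ — the discriminator range $\tilde{\mathcal{B}}=2L_2\sqrt d\log n+B_2$ forces $f^{*}$ to have bound $B_3$ (for KL, $B_3=\exp(\tilde{\mathcal{B}})$, polynomial in $n$), and this feeds back into $\mathcal{B}=(2B_3L_1\sqrt p+\log n)\sqrt d$ and thus into the representer-side bounds. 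Verifying that the sub-polynomial network sizes of (N1)--(N2), together with the $\log^{4}(npd)$ deflation of $\mathcal{S}$ and $\tilde{\mathcal{S}}$, still drive every term to zero against these growing constants is the crux, and is precisely what dictates the particular calibration of the network parameters.
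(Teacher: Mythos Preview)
Your proposal is correct and follows essentially the same route as the paper: the identical oracle decomposition into approximation and statistical error, ReLU approximation of $R^{*}$ (the paper first truncates $R^{*}$ at $\log n$ to handle the unbounded Gaussian range, a step you should add), Lipschitz stability of the distance-covariance kernel, $U$-statistic symmetrization combined with VC/entropy bounds for ReLU classes, and the same two-stage handling of the $f$-divergence term via a discriminator-approximation bias (approximating $f'(r_R)$ in $\mathbf{D}$, uniformly in $R$) plus an empirical-process bound over $\mathbf{D}\times\mathbf{R}$. The only cosmetic differences are that the paper bounds $\mathbb{D}_f(\mu_{R^{\circ}(X)}\Vert\gamma_d)$ directly through the density-ratio difference rather than via your Wasserstein/equicontinuity argument, and uses a one-step covering-number maximal inequality in place of Dudley's integral.
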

The proof of this theorem is given in the appendix.
Conditions (A1) and (A2) are regularity conditions that are often assumed in nonparametric estimation problems. 
The result established in Theorem \ref{esterr} shows that the learned DDRM achieves asymptotic sufficiency under the conditions (A1) and (A2) and with the specifications
(N1) and (N2) for the network parameters.


{\color{black}
There have been intensive efforts devoted to understanding the theoretical properties of deep neural network models in recent years.
Several stimulating papers have studied the statistical convergence properties of nonparametric regression using neural networks
\citep{bauer2019deep, schmidt2020nonparametric,
farrell2021deep, jiao2021deep}.
There have also been some recent works on the non-asymptotic error bounds of GANs.
For example, 
\citet{zhang2018on} considered the generalization error of GANs.
\citet{liang2020} studied  the rates of convergence for learning distributions implicitly
with GAN under several forms of the integral probability metrics.
\citet{bai2018} analyzed the estimation error bound of GANs under the Wasserstein distance for a special class of distributions implemented by a generator.
\citet{zhao2020} studied the convergence rates of GAN distribution estimators when both the evaluation class and the target density class are H\"older classes.

In the present problem,  the objective function (\ref{dnpe})
is the combination of a loss that is a $U$-process indexed by a class of neural networks and a GAN-type loss indexed by two classes of neural networks. This objective function is more complicated than the least squares loss or the GAN loss analyzed in the aforementioned works. Therefore, the problem here is more difficult. To the best of our knowledge, the consistency property of the excess risk of the minimizer of such an objective function has not been analyzed in the literature.
}

\section{Computation}
\label{algorithm}

Lemma \ref{lem2} implies that  training  of  $\vphi$ with fixed $\vtheta$ 
is to push forward the distribution of $R(X)$ to the reference distribution $\gamma_{d}=N(0, \bI_{d})$. For this purpose, we need to estimate an optimal discriminator $D_{\vphi}$  approximating the optimal dual function $D(\vz) = f^{\prime}(r(\vz))$, where
$r(\vz)$
is the ratio for  the density of $\mu_{R_{\vtheta}(\vx)}$ over the density of $\gamma_{d}$.
Note that $f^{\prime}$ is a strictly increasing function if $f$ is strictly convex, which is true for all the commonly used divergence measures. Thus the problem of estimating the discriminator is essentially that of estimating the density ratio. Therefore,
in our implementation, we utilize  the computationally stable  particle method based on gradient flow in probability measure spaces \citep{yuan2019deep, gao2020learning}.
The key idea of this particle method is
to seek a sequence of nonlinear but simpler residual maps,   $\mathbb{T}(\vz) = \vz+s \mathbf{v}(\vz)$, where $s>0$ is a small step size,
pushing the samples from $\mu_{R_{\vtheta}(\vx)}$ to the target distribution $\gamma_{d}$ along a velocity fields $\mathbf{v}(\vz) 
= -\nabla f^{\prime}(r(\vz))$ that most decreases the $f$-divergence $\mathbb{D}_f(\cdot||\gamma_{d^*})$  at $\mu_{R_{\vtheta}(X)}$ \citep{yuan2019deep}.  The residual maps can be estimated via deep density-ratio estimation. Specifically, the estimated residual maps take the form
$
\mathbb{T}(\vz)=\vz+s \widehat{\mathbf{v}}(\vz),\  \vz
\in \real^{d},
$
where
$\widehat{\mathbf{v}}(\vz)=-\nabla f^{\prime}(\hat{r}(\vz)).$
Here $\hat{r}(\vz)$ is an estimated density ratio of the density of $R_{\vtheta}(\vx)$ at the current value of $\vtheta$ over the density of the reference distribution. The estimator $\hat{r}(\vz)$ is constructed as follows.
Let  $Z_i = R_{\vtheta}(X_i)$ and generate $W_i \sim \gamma_{d},  i =1,2,\ldots, n$. We solve
\begin{equation}
\label{dra}
\widehat{D}_{\vphi} \in \arg \min_{D_{\vphi}} \frac{1}{n}
\sum_{i=1}^{n}\left\{\log [1+\exp(D_{\vphi}(Z_{i}))]+ \log [1+\exp(-D_{\vphi}(W_{i}))]\right\}
\end{equation}
with stochastic gradient descent (SGD). Then the estimated density ratio
$\hat{r}(\vz) = \exp(-\widehat{D}_{\vphi}(\vz))$.
Here we note that the population version of the loss function in (\ref{dra})
 is minimized at
$-\log(r(\vz))$. Therefore,
$\widehat{D}_{\vphi}(\vz) $
in (\ref{dra}) provides a good estimator of $-\log(r(\vz))$.
See \cite{gao2020learning} for a detailed description of this particle approach. Here, we use this approach
to transform  $Z_i = R_{\vtheta}(X_i), i = 1,\ldots, n$ into Gaussian samples (we still denote them as $Z_i$) directly. Once this is done, we update $\vtheta$ via minimizing the loss $$\frac{1}{n}\sum_{i=1}^n\|R_{\vtheta}(X_i)-Z_i\|^2-\lambda \widehat{\mathcal{V}}{n}[R_{\vtheta}(X), Y].$$

We depict the DDR algorithm in the flowchart in Figure \ref{flc}
and give a detailed description below.
\textbf{Pseudo-code for the DDR algorithm}
\begin{itemize}
\setlength{\itemsep}{-5 pt}
	\item Input $\{X_i,Y_i\}_{i=1}^n$.
	Tuning parameters:
	$s, \lambda, d$. Sample  $\{W_i\}_{i=1}^n \sim \gamma_{d}$.
	\item \textit{Outer loop for $\vtheta$}
\vspace{-0.2 cm}
	\begin{itemize}
\setlength{\itemsep}{-3 pt}
		\item \textit{Inner loop (particle method)}
\vspace{-0.2 cm}
		\begin{itemize}
\setlength{\itemsep}{-3 pt}
			\item Let  $Z_i = R_{\vtheta}(X_i),   i =1,\ldots,,n$.
			\item Solve  $\widehat{D}_{\vphi} \in \arg \min_{D_{\vphi}} 
		\frac{1}{n}	\sum_{i=1}^{n}\left\{\log [1+\exp(D_{\vphi}({Z}_{i}))]+ \log [1+\exp(-D_{\vphi}(W_{i}))]\right\}.$
			\item  Define the residual map $\mathbb{T}(\vz)=\vz - s \nabla f^{\prime}(\hat{r}(\vz))$
with $\hat{r}({\vz})=\exp(-\widehat{D}_{\vphi}(\vz)).$
			\item  Update the particles  $Z_i = \mathbb{T}(Z_i)$,   $i =1,2,...,n$.
		\end{itemize}
\vspace{-0.2 cm}
		\item \textit{End inner loop}
		\item Update $\vtheta$   via minimizing
$ -\widehat{\mathcal{V}}_{n}[R_{\vtheta}(X), Y] +\lambda\sum_{i=1}^n\|R_{\vtheta}(X_i)-Z_i\|^2/n$ using SGD.
	\end{itemize}
	\item \textit{End outer loop}
\end{itemize}
\begin{figure}[htbp]
\centering
	\includegraphics[width=0.7\columnwidth]{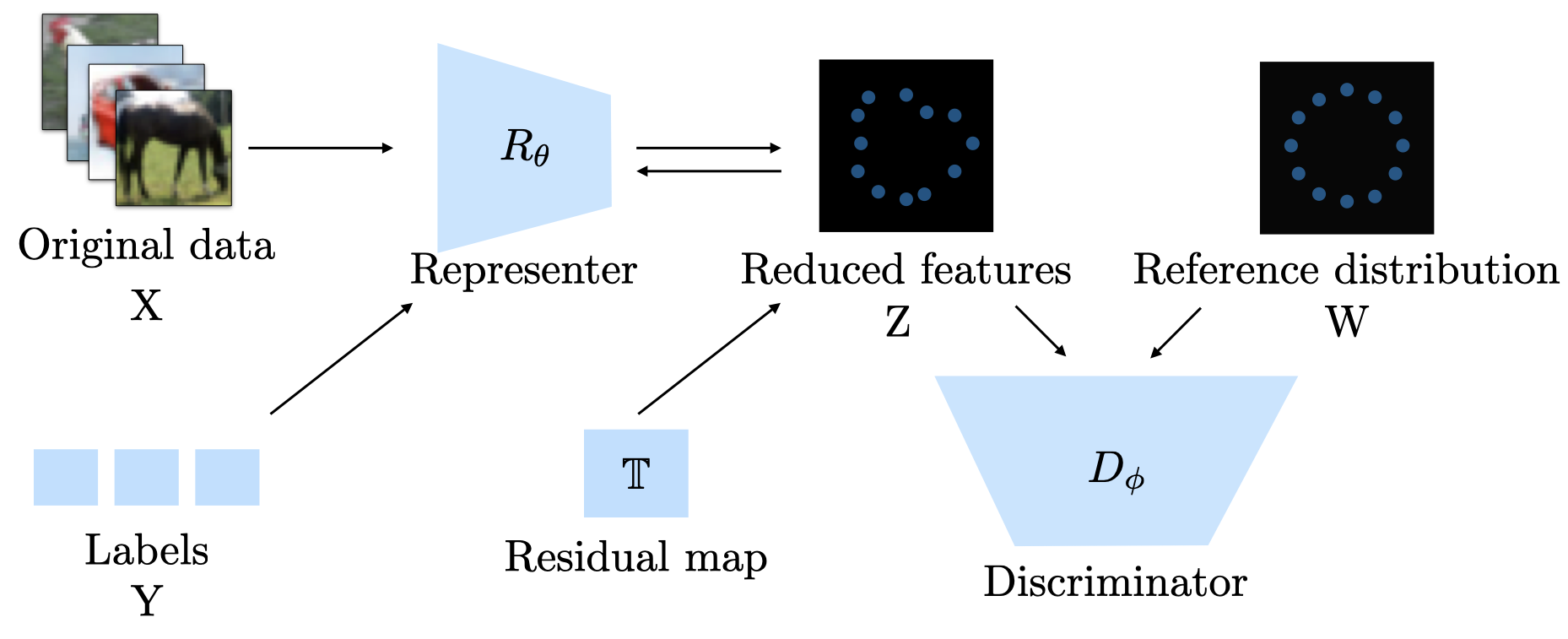} 
	\caption{Flow chart for deep dimension reduction (DDR) }
	\label{flc}
\end{figure}
\section{Numerical experiments}
\begin{table}[htbp]
\caption{\color{black}Summary information of DDR and compared methods. $X^l$ and $X^u$ represent labeled and unlabeled data, respectively, while $Y^l$ represents labeled targets.}
\label{tab:methods}
\centering
\resizebox{\columnwidth}{!}{%
\begin{tabular}{@{}llccc@{}}
\toprule
Method   & Name                                            & Input                             & Supervision     & Based Model     \\\midrule
DDR      & Deep Dimension Reduction                        & $X^{l}, Y^{l}$                    & Supervised      & Neural networks \\
NN       & Neural Networks                                 & $X^{l}, Y^{l}$                    & Supervised      & Neural networks \\
dCorAE   & Distance Correlation Autoencoder                & $X^{l}, Y^{l}$                    & Supervised      & Neural networks \\
OLS      & Ordinary Least Squares                          & $X^{l}, Y^{l}$                    & Supervised      & Linear          \\
SIR      & Sliced Inverse Regression                       & $X^{l}, Y^{l}$                    & Supervised      & Linear          \\
SAVE     & Sliced Average Variance Estimation              & $X^{l}, Y^{l}$                    & Supervised      & Linear          \\
GSIR     & Generalized Sliced Inverse Regression           & $X^{l}, Y^{l}$                    & Supervised      & Kernel          \\
GSAVE    & Generalized Sliced Average Variance Estimation  & $X^{l}, Y^{l}$                    & Supervised      & Kernel          \\
Semi-VAE & Semi-supervised Variational Autoencoders        & $X^{l}, Y^{l}$ and $X^{u}$ & Semi-supervised & Neural networks \\
InfoVAE  & Information Maximizing Variational Autoencoders & $X^{l}, Y^{l}$ and $X^{u}$ & Semi-supervised & Neural networks \\
PCA      & Principal Component Analysis                    & $X^{l}$ and $X^{u}$               & Unsupervised    & Linear          \\
SPCA     & Sparse Principal Component Analysis             & $X^{l}$ and $X^{u}$               & Unsupervised    & Linear       \\  \bottomrule
\end{tabular}}
\vskip -0.1in
\end{table}
\label{experiments}
We evaluate the performance of DDR using simulated and benchmark real data.
Since DDR is not trying to estimate a classifier or a regression function directly, but rather to learn a representation with the desired properties of sufficiency, low-dimensionality and disentanglement, we design the experiments to evaluate the performance of the learned representations based on DDR in terms of prediction when using these representations. The results demonstrate that a simple classification or regression model using the learned representations performs better than or comparably with the best classification or regression methods using deep neural networks.
Details on the network structures and hyperparameters are included in the appendix.
{\color{black} Summary information of DDR and compared methods, including the names of methods, their input, learning types, and models of methods, is given in Table \ref{tab:methods}.}
Our experiments were conducted on Nvidia DGX Station workstation using a single Tesla V100 GPU unit.

\subsection{Simulated data}
In this subsection, we evaluate DDR on simulated regression and classification problems.

\noindent
{\color{black}\textbf{Regression \uppercase\expandafter{\romannumeral1}.}
We generate $10,000$ data points from two models:
\begin{enumerate}[ ]
\setlength{\itemsep}{-5 pt}
	\item Model (a). \(Y={\rx_{1}}{[0.5+\left(\rx_{2}+1.5\right)^{2}]^{-1}}+\left(1+\rx_{2}\right)^{2}+\sigma \vveps \), where \(X \sim N\left(\m0, {\mI}_{20}\right)\);
	\item Model (b).  \(Y=\sin ^{2}\left(\pi X_{1}+1\right)+\sigma \vveps\), where \(X \sim \mathrm{Uniform}[0,1]^{20}\).
\end{enumerate}
In both models, \(\vveps\sim N\left(\m0, {\mI}\right)\).
We use  a  3-layer network 
with ReLU activation for $R_{\vtheta}$ and a single hidden layer ReLU network for $D_{\vphi}$.
We compare DDR with four prominent 
sufficient dimension reduction methods:   sliced inverse regression (SIR) \citep{li1991sliced}, sliced average variance estimation (SAVE) \citep{cook1991sliced},  generalized sliced inverse regression (GSIR) and generalized sliced average variance estimation (GSAVE) \citep{lee2013gen, li2018sufficient}.
{\color{black} SIR slices the range of $Y$ and obtains the crude estimation of the inverse regression $E({X}|Y)$.  
Then the eigenvectors of the covariance matrix $Cov(E({X}|Y))$ that lie in the central subspace of data can be estimated via weighted PCA. SIR is a first moment method to estimate the central subspace from $E({X}|Y)$, while SAVE is a second moment method to estimate the space from $Var(E({X}|Y))$ that is primarily used to solve symmetric data problems. 
Similarly, SAVE also utilizes the weighted PCA to estimate eigenvectors that lie in the central subspace. 
GSIR and GSAVE are generalized versions of SIR and SAVE, respectively. 
Both of them estimate central subspace 
in the reproducing kernel Hilbert space (RKHS) instead of using the covariance matrix in both SIR and SAVE.}
Also, we compare DDR with two deep learning based methods: neural networks (NN) with least square (LS) loss as the last layer, denoted as NN+LS,  and distance correlation autoencoder (dCorAE) \citep{wang2018distance}. 
{\color{black} dCorAE targets at two objectives for both reconstruction and classification, presenting a trade-off between two tasks during training.}

\begin{table}[htpb]
	\caption{{\color{black}Average prediction errors  and their standard errors (based on 5-fold validation)}}
	\label{tab:sim}
	\begin{center}
			\scriptsize
			\begin{tabular}{lcccccc}
				\toprule
				&\multicolumn{3}{c}{ Model  (a)   }&\multicolumn{3}{c}{Model  (b)  } \\
				\cmidrule(r){2-4}\cmidrule(r){5-7}
				Method     & $\sigma = 0.1$ & $\sigma = 0.4$ &$\sigma = 0.8$& $\sigma  = 0.05$ & $\sigma = 0.1$ & $\sigma = 0.2$  \\
				\midrule
DDR &	\textbf{0.127	 \(\pm\) .005}	&	\textbf{0.555	 \(\pm\) .010}&	\textbf{1.088	 \(\pm\) .009}& \textbf{0.052	 \(\pm\) .001} 	& \textbf{0.105	 \(\pm\) .003}	&	\textbf{0.241	 \(\pm\) .010}\\
				NN+LS	&	0.147	 \(\pm\) .028	&	0.575	 \(\pm\) .008	&	1.150	 \(\pm\) .013	&	0.053	 \(\pm\) .001	& 0.107	 \(\pm\) .002	&	0.242	 \(\pm\) .010\\
				dCorAE	&		0.153 \(\pm\) .015	&		0.549 \(\pm\) 	.012&		1.101 \(\pm\) .015 &	0.065	 \(\pm\) .001	& 0.135	 \(\pm\) .001	&	0.275	 \(\pm\) .004\\
				SIR	&	1.484	 \(\pm\) .047	&	1.599	 \(\pm\) .050	&	1.712	 \(\pm\) .037	&	0.252	 \(\pm\) .002	&0.268	 \(\pm\) .002	&	0.323	 \(\pm\) .005	\\
				SAVE	&	1.482	 \(\pm\) .048	&	1.588	 \(\pm\) .049	&	1.715	 \(\pm\) .038	&0.252	 \(\pm\) .002	&	0.268	 \(\pm\) .003	&	0.323	 \(\pm\) .005\\
				GSIR	&	1.477	 \(\pm\) .047	&	1.598	 \(\pm\) .050	&	1.707	 \(\pm\) .039	&0.267	 \(\pm\) .004	&	0.269	 \(\pm\) .004	&	0.322	 \(\pm\) .006\\
				GSAVE	&	1.478	 \(\pm\) .048	&	2.602	 \(\pm\) .079	&	2.654	 \(\pm\) .041	&0.265	 \(\pm\) .003	&	0.267	 \(\pm\) .004	&	0.339	 \(\pm\) .006\\
				\bottomrule
			\end{tabular}
	\end{center}
	\vskip -0.1in
\end{table}
We fit a linear model with the learned features and the response variable, and report the  prediction errors in Table \ref{tab:sim}.
We see that DDR outperforms SIR, SAVE, GSIR, GSAVE, NN+LS and dCorAE in terms of prediction error.}

%


\noindent
\textbf{Regression \uppercase\expandafter{\romannumeral2}.}
We generate $5000$ data points from three simulated models:
\begin{enumerate}[ ]
\setlength{\itemsep}{-5 pt}
\item Model (a).
  \(Y=\left(\rx_{1}+\rx_{2}\right)^{ 2} + (1+\exp({\rx_1}))^2+\vveps\);
\item Model (b).
 \(Y=\sin \left({\pi\left(\rx_{1}+\rx_{2}\right)}/{10}\right)+\rx_1^2+\vveps\);
\item Model (c).
 \(Y=\left(\rx_{1}^{2}+\rx_{2}^{2}\right)^{{1}/{2}} \log \left(\rx_{1}^{2}+\rx_{2}^{2}\right)^{{1}/{2}}+\vveps\),
\end{enumerate}
where $\vveps \indep X$ and $ \vveps \sim N(\m0, 0.25\cdot{\mI}_{10}).$
For the distribution of the 10-dimensional predictor \(X\), we consider three
scenarios:
Scenario (\romannumeral1): \(X \sim N\left(\m0, {\mI}_{10}\right)\);
independent Gaussian predictors;
 Scenario (\romannumeral2): \(X \sim \frac{1}{3}N\left(-2\cdot\textbf{1}_{10}, \mI_{10}\right)+\frac{1}{3} \mathrm{Uniform}[-1,1]^{10}+\frac{1}{3}N\left(2\cdot\textbf{1}_{10}, \mI_{10}\right) \), independent non-Gaussian predictors;
Scenario (\romannumeral3): \(X \sim N\left(\m0,0.3\cdot{\mI}_{10}+0.7\cdot\textbf{1}_{10} \textbf{1}_{10}^{\top}\right)\).
 correlated Gaussian predictors.
These models and the distributional scenarios are modified from \citep{lee2013gen, li2018sufficient}.

{\setlength\tabcolsep{1.5pt}
\begin{table}[htpb]
	\caption{\color{black}Average prediction errors (APE), distance correlation (DC), conditional Hilbert-Schmidt independence criterion (HSIC)  and their standard errors (based on 5-fold validation)}
	\label{tab:sim2}
	\begin{center}
   	 \tiny
        \begin{tabular}{llccccccccc}
            \toprule
            & & \multicolumn{3}{c}{Model  (a)  }                                                 & \multicolumn{3}{c}{Model  (b)}                    & \multicolumn{3}{c}{Model  (c)}                                                 \\
            \cmidrule(r){3-5}\cmidrule(r){6-8}\cmidrule(r){9-11}
                     & Method & APE& DC &HSIC& APE& DC &HSIC& APE& DC &HSIC\\
            \midrule
            \multirow{3}{*}{Scenario (\romannumeral1)} & DDR    & \textbf{6.1      \(\pm\) 3.5}    & \textbf{1.0      \(\pm\) .0 }     & \textbf{34.7      \(\pm\) 3.5}    & \textbf{0.3      \(\pm\) .0 }     & \textbf{1.0       \(\pm\) .0 }    & \textbf{49.1      \(\pm\)7.7}    & \textbf{ 0.3       \(\pm\) .0 }     & \textbf{0.9       \(\pm\) .0 }     & \textbf{36.4     \(\pm\) 2.8 }    \\
                               & GSIR   & 28.3     \(\pm\) 7.5    & 0.2       \(\pm\) .0      & 64.4      \(\pm\) 2.9    & 1.4      \(\pm\) .1    & 0.1       \(\pm\) .0      & 134.8     \(\pm\)13.5   & 0.8       \(\pm\) .0      & 0.2      \(\pm\) .0       &\text{ 67.4     \(\pm\) 3.9}     \\
                               & GSAVE  & 28.3     \(\pm\) 7.4    & 0.1       \(\pm\) .0      & \text{72.1      \(\pm\) 4.1 }   & 1.4      \(\pm\) .0      & 0.1        \(\pm\) .0    &\text{ 175.9     \(\pm\)6.4 }   & 0.8      \(\pm\) .0      & 0.2       \(\pm\) .0       & 66.5     \(\pm\) 2.8     \\
                               \cmidrule(r){1-11}
            \multirow{3}{*}{Scenario (\romannumeral2)}  & DDR    & 1\textbf{83.1    \(\pm\) 98.7 }  &\textbf{ 0.9      \(\pm\) .1  }   & \text{45.1      \(\pm\) 3.3 }   & \textbf{0.4      \(\pm\) .1 }   & \textbf{1.0       \(\pm\) .0 }    &\textbf{ 16.8      \(\pm\)2.0 }     &\textbf{ 0.3      \(\pm\) .1 }   & \textbf{1.0      \(\pm\) .0 }     & \textbf{8.6      \(\pm\) 0.7}     \\
                               & GSIR   & 664.6    \(\pm\) 38.1   & 0.1      \(\pm\) .0     & \textbf{43.9      \(\pm\) 2.6}    & 3.3      \(\pm\) .2    & 0.1       \(\pm\) .0     & 27.6      \(\pm\)1.7    & 1.5      \(\pm\) .1    & 0.6      \(\pm\) .0       & 14.6     \(\pm\) 0.8     \\
                              & GSAVE  & 662.0    \(\pm\) 38.0     & 0.0      \(\pm\) .0      & \text{48.0      \(\pm\) 2.7 }   & 3.2      \(\pm\) .2    & 0.2        \(\pm\) .0     & \text{32.0      \(\pm\)1.5 }   & 2.4      \(\pm\) .0      & 0.0       \(\pm\) .0     & \text{15.5     \(\pm\) 0.6 }    \\
                                \cmidrule(r){1-11}
            \multirow{3}{*}{Scenario (\romannumeral3)} & DDR    & \textbf{12.5     \(\pm\) 11.1 }  & \textbf{0.8      \(\pm\) .3  }   & \textbf{37.0      \(\pm\) 4.7 }   & \textbf{ 0.3      \(\pm\) .0 }     & \textbf{1.0        \(\pm\) .0  }   &\textbf{ 48.6      \(\pm\)7.1 }   & \textbf{ 0.3      \(\pm\) .1 }   & \textbf{0.9      \(\pm\) .1}     & \text{36.8     \(\pm\) 5.9}     \\
                               & GSIR   & 32.2     \(\pm\) 6.1    & 0.2      \(\pm\) .1     & \text{61.4      \(\pm\) 6.0 }     & 1.0      \(\pm\) .1    & 0.2       \(\pm\) .0     & 51.3      \(\pm\)51.3   & 0.6      \(\pm\) .0      & 0.6      \(\pm\) .0      & \textbf{25.8     \(\pm\) 25.6}    \\
                               & GSAVE  & 31.8     \(\pm\) 6.4    & 0.2      \(\pm\) .1     & 60.0      \(\pm\) 4.4    & 1.0      \(\pm\) .0      & 0.3        \(\pm\) .0      &\text{ 119.3     \(\pm\)8.8 }   & 0.6      \(\pm\) .0      & 0.7      \(\pm\) .0       & \text{54.3     \(\pm\) 3.8}\\
                               \bottomrule
        \end{tabular}
	\end{center}
	\vskip -0.1in
\end{table}
}
We compare DDR with generalized sliced inverse regression (GSIR) and generalized sliced average variance estimation (GSAVE) \citep{lee2013gen, li2018sufficient}.
In DDR, we adopt a 4-layer network for $R_{\vtheta}$ and a 3-layer network for $D_{\vphi}$ with Leaky ReLU activation. For all methods, we fit a linear model with the learned features and the response variable, and report the prediction error, distance correlation between representation and the response variable, and conditional Hilbert-Schmidt independence criterion (HSIC) \citep{fukumizu2008kernel}. 
The results are presented in Table \ref{tab:sim2}.
{\color{black}The representations learned with DDR present higher distance
correlations with the response and lower conditional HSICs, suggesting that DDR is capable of better capturing data information and conditional independence property than other methods.
Moreover, DDR significantly outperforms GSIR and GSAVE in terms of prediction errors in all scenarios.}

\medskip
\noindent
\textbf{Classification.}
We visualize the learned features of DDR on three simulated datasets.
We first generate (1) 2-dimensional  concentric circles from two classes as in Figure \ref{fig:latent} (a);
(2) 2-dimensional moons data from  two classes as in Figure \ref{fig:latent} (e);
(3)  3-dimensional  Gaussian data from six classes as in Figure \ref{fig:latent}  (i).
In each dataset, we generate 5,000 data points for each class.
We next map the data  into 100-dimensional space using matrices with  entries  i.i.d $\mathrm{Unifrom}([0,1])$.
Finally, we apply DDR to these $100$-dimensional datasets to learn $2$-dimensional features.
We use a 10-layer dense convolutional network (DenseNet) \citep{huang2017densely} as  $R_{\vtheta}$ and a 4-layer
network 
as $D_{\vphi}$.
We display the evolutions of the learned 2-dimensional features by DDR in Figure \ref{fig:latent}.  For ease of visualization, we push all the distributions onto the uniform distribution on the unit circle, which is done by normalizing the standard Gaussian random vectors to length one.
Clearly, the learned features for different classes in the
examples are well disentangled.
\begin{figure} [htpb]
	\centering
	\begin{tabular}{cccc}
		\includegraphics[width=0.1\columnwidth]{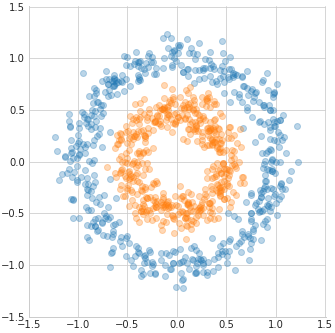}&
		\includegraphics[width=0.1\columnwidth]{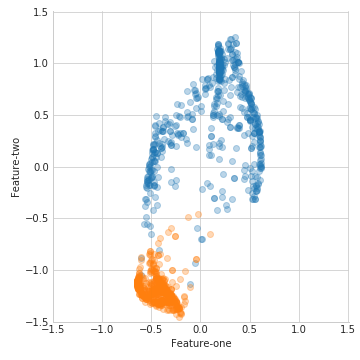} &
		\includegraphics[width=0.1\columnwidth]{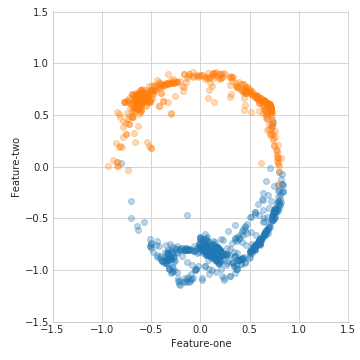} &
		\includegraphics[width=0.1\columnwidth]{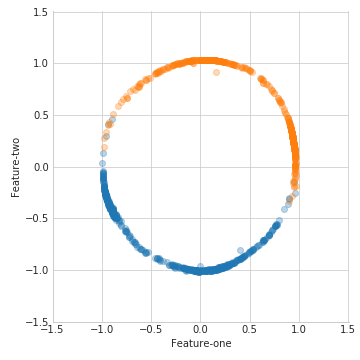} \\
{\scriptsize	(a) Epoch = 0} & {\scriptsize (b)  10} & {\scriptsize (c) 30 }&{ \scriptsize(d) 500} \\
	\end{tabular}
	\begin{tabular}{cccc}
		\includegraphics[width=0.1\columnwidth]{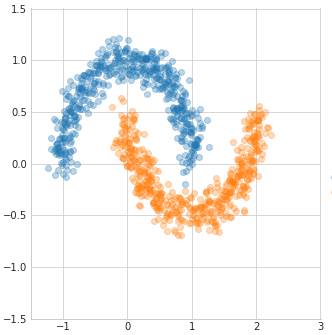}&
		\includegraphics[width=0.1\columnwidth]{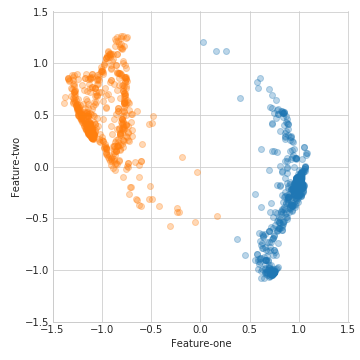} &
		\includegraphics[width=0.1\columnwidth]{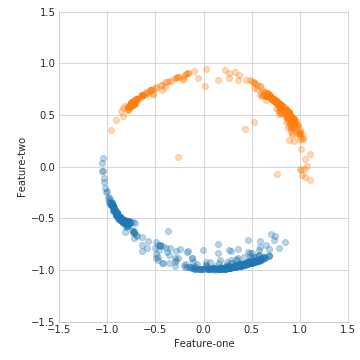} &
		\includegraphics[width=0.1\columnwidth]{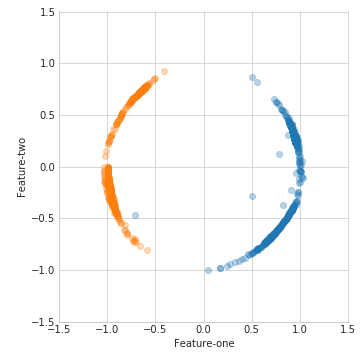} \\
{\scriptsize (e) Epoch = 0 }& {\scriptsize (f)  10}&{\scriptsize (g) 30 }& {\scriptsize (h) 500 }\\
	\end{tabular}
	\begin{tabular}{cccc}
		\includegraphics[width=0.1\columnwidth]{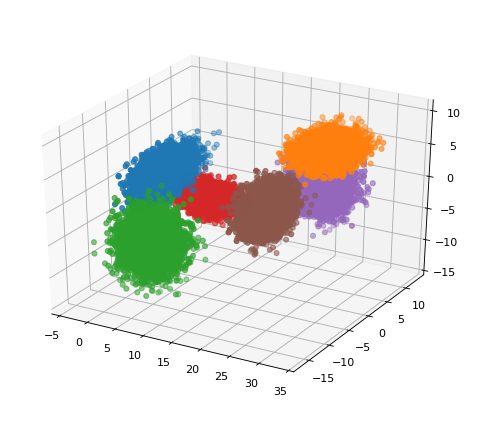}&
		\includegraphics[width=0.1\columnwidth]{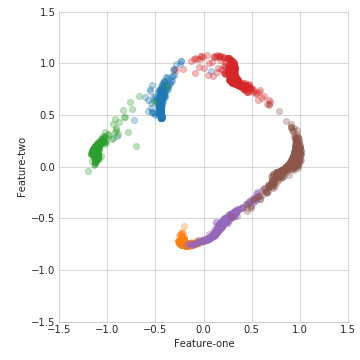} &
		\includegraphics[width=0.1\columnwidth]{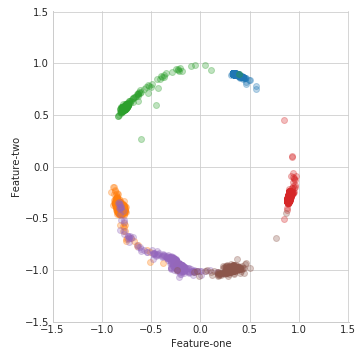} &
		\includegraphics[width=0.1\columnwidth]{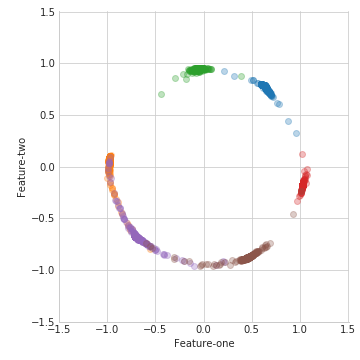} \\
{\scriptsize		(i) Epoch = 0} & {\scriptsize (j)  10}&{\scriptsize (k) 30} & {\scriptsize (l)  500} \\
	\end{tabular}
	\caption{
Evolving learned features at Epoch $=$ 0, 10, 30, and 500. The first, second and third
rows show concentric circles, moons
and 3D Gaussian datasets, respectively.}
	\label{fig:latent}
\end{figure}

\subsection{Real datasets}
{\color{black}We benchmark DDR on a variety of real datasets from both regression and classification problems. Summary information of those datasets used in the analysis is given in Table \ref{tab:datasets}.}
\begin{table}[htbp]
\scriptsize
\centering
\caption{\color{black} Summary information for real datasets.}
\label{tab:datasets}
\begin{tabular}{@{}lccccc@{}}
\toprule
Dataset                & Feature size                     & Training size & Test size&Task \\\midrule
YearPredictionMSD      & $90$                       & $412,276$     & $103,069$& Regression \\
Pole-Telecommunication & $48$                       & $12,000$      & $3,000$  & Regression \\
MNIST                  & $28\times 28\times 1$  & $60k$         & $10k$  & Classification with 10 categories    \\
Kuzushiji-MNIST        & $28\times 28\times 1$  & $60k$         & $10k$    & Classification with 10 categories \\
FashionMNIST           & $28\times 28\times 1$  & $60k$         & $10k$  & Classification with 10 categories   \\
CIFAR-10                & $32\times 32\times 3$  & $50k$         & $10k$  & Classification with 10 categories \\
CIFAR-100                & $32\times 32\times 3$  & $50k$         & $10k$  & Classification with 100 categories  \\ \bottomrule
\end{tabular}
\vskip -0.1in
\end{table}

\noindent \textbf{Regression.}
We benchmark the prediction performance of regression models using the representations learned based on DDR.
Here, we use the  YearPredictionMSD dataset\footnote {The YearPredictionMSD dataset is available at \url{https://archive.ics.uci.edu/ml/datasets/YearPredictionMSD}.} and the
Pole-Telecommunication dataset\footnote {The Pole-Telecommunication dataset is available at \url{https://www.dcc.fc.up.pt/~ltorgo/Regression/DataSets.html}.}.
The YearPredictionMSD dataset contains 515,345 observations with 90 predictors.
The problem is to predict the year of song release.
The Pole-Telecommunication dataset consists of 15,000 observations with 48 predictors for determining the placement of antennas.
We randomly split the data into five folds to evaluate the prediction performance using 5-fold cross validation.
We employ a 3-layer network 
for both $D_{\vphi}$ and $R_{\vtheta}$ on the YearPredictionMSD dataset; a 2-layer network for $D_{\vphi}$ and a 4-layer network $R_{\vtheta}$ are adopted on the Pole-Telecommunication dataset. 
In comparison, we conduct a nonlinear regression using neural networks (NN) with a least squares (LS) loss in the last layer, denoted as NN + LS. That is, we do not impose any desired characteristics for the learned representations in the pultimate layer. Note that, for both DDR and NN+LS, we use the same networks to learn representative features.
We also consider the popular dimension reduction methods, including principal component analysis (PCA) and sparse principal component analysis (SPCA), to obtain data representation.
{\color{black} For the comparison with supervised dimension reduction methods, we
consider SIR and SAVE and the deep learning based sufficient dimension reduction method dCorAE.
For those methods, we first obtain the estimated representative features and fit a linear regression model of the response on the learned representations.
The average prediction errors  and their standard errors based on DDR, NN+LS, dCorAE, PCA, SPCA, SIR, SAVE  and  the ordinary least squares (OLS) regression with the original data  are reported in Tables \ref{tab:music} and \ref{tab:pole}.  DDR outperforms other methods in terms of prediction accuracy.}

\begin{table}[htbp]
	\caption{Prediction error $\pm$ standard error:  YearPredictionMSD dataset}
	\label{tab:music}
	\vskip 0.15in
	\begin{center}
		\scriptsize
			\begin{tabular}{lcccc}
				\toprule
		Methods    & $d = 10$ & $d=20$ & $d=30$ &$d =40$ \\
				\midrule
	DDR &	\textbf{8.8	 \(\pm\) 0.1}	&	\textbf{8.9	 \(\pm\) 0.1}	&	\textbf{8.9	 \(\pm\) 0.1}	&	\textbf{8.8	 \(\pm\) 0.1}	\\
					dCorAE	&	8.9	 \(\pm\) 0.1	&	9.0	 \(\pm\) 0.1	&	9.2	 \(\pm\) 0.1	&	8.9	 \(\pm\) 0.1	\\
				NN+LS	&	9.2	 \(\pm\) 0.1	&	9.3	 \(\pm\) 0.1	&	9.2	 \(\pm\) 0.1	&	9.2	 \(\pm\) 0.1	\\
				SPCA	&	10.6	 \(\pm\) 0.1	&	10.4	 \(\pm\) 0.1	&	9.6	 \(\pm\) 0.1	&	10.2	 \(\pm\) 0.1	\\
				PCA	&		10.6	 \(\pm\) 0.1	&	10.4	 \(\pm\) 0.1	&	10.3	 \(\pm\) 0.1	&	10.2	 \(\pm\) 0.1	\\
				SIR	&	9.6	 \(\pm\) 0.1	&	9.6	 \(\pm\) 0.1	&	9.6	 \(\pm\) 0.1	&	9.6	 \(\pm\) 0.1	\\
				SAVE	&		10.3	 \(\pm\) 0.1	&	9.7	 \(\pm\) 0.1	&	9.6	 \(\pm\) 0.1	&	9.6	 \(\pm\) 0.1	\\
				OLS & \multicolumn{4}{c}{---------9.6	 \(\pm\)0.1---------} \\
				\bottomrule
			\end{tabular}
	\end{center}
	\vskip -0.1in
\end{table}

\begin{table}[htpb]
	\caption{Prediction error $\pm$ standard error:  Pole-Telecommunication dataset}
	\label{tab:pole}
	\vskip 0.15in
	\begin{center}
		\scriptsize
			\begin{tabular}{lcccc}
				\toprule
		Methods    & $d = 5$ & $d=10$ & $d=15$ &$d =20$ \\
				\midrule
				DDR &	 \textbf{2.1   \(\pm\) 0.2} & \textbf{2.1    \(\pm\) 0.1}  & \textbf{2.2    \(\pm\) 0.1} & \textbf{2.2  \(\pm\) 0.2} \\
				dCorAE&  3.1  \(\pm\) 0.1 & 3.1   \(\pm\) 0.3 & 3.1    \(\pm\) 0.2 & 3.0    \(\pm\) 0.1 \\
				NN +LS&  3.0   \(\pm\) 0.5 & 3.1   \(\pm\) 1.1 & 2.7     \(\pm\) 0.8 & 3.2    \(\pm\) 0.6 \\
				SPCA	&	40.3  \(\pm\) 0.3 & 40.1   \(\pm\) 0.3  & 30.5   \(\pm\) 0.2& 30.5  \(\pm\) 0.2 \\
				PCA	&		40.3  \(\pm\) 0.3 & 40.1   \(\pm\) 0.3  & 30.5   \(\pm\) 0.2 & 30.5   \(\pm\) 0.2 \\
				SIR	&	30.4  \(\pm\) 0.1 & 30.4   \(\pm\) 0.1  & 30.5   \(\pm\) 0.1& 30.5  \(\pm\) 0.1 \\
				SAVE	&		31.2  \(\pm\) 0.3 & 30.5   \(\pm\) 0.1  & 30.5   \(\pm\) 0.1& 30.5   \(\pm\) 0.1 \\
				OLS & \multicolumn{4}{c}{---------30.5	 \(\pm\) 0.2---------} \\
				\bottomrule
			\end{tabular}
	\end{center}
	\vskip -0.1in
\end{table}
\noindent
\textbf{\color{black}Classification \uppercase\expandafter{\romannumeral1}.}
{\color{black}We benchmark the classification performance of DDR  using MNIST \citep{mnist}, FashionMNIST \citep{fashion_mnist}, CIFAR-10, and CIFAR-100 \citep{krizhevsky2009learning} datasets against some existing methods, including neural networks (NN) with cross entropy (CN) loss as the last layer, denoted as CNN,  and distance correlation autoencoder (dCorAE) \citep{wang2018distance}.
With CNN, we use  the feature extractor by dropping the last layer for the CN loss of the NN trained for classification as networks. Note that, for both DDR and CNN, we apply the same networks to learn representations.  }

{\color{black}The MNIST and FashionMNIST datasets consist of $60k$ and $10k$ grayscale images with $28\times 28$ pixels for training and testing,
respectively,  while the CIFAR-10 and CIFAR-100 datasets contain $50k$ and $10k$  colored images with $32\times 32$ pixels for training and testing, respectively.}
{\color{black}
The representer network $ R_{\vtheta}$ contains 20 layers for MNIST data and 100 layers for CIFAR-10 data.

{\color{black}To fully utilize computational resources and improve classification accuracy, we further combine DDR with the CN loss, denoted as DDR+CN,  by applying the transfer learning technique \citep{torrey2010transfer, pan2009survey, tan2018survey} on CIFAR-10 and CIFAR-100. 
Data structures for both CIFAR-10 and ImageNet are the same (with three channels), which makes the use of transfer learning straightforward by leveraging the pretrained model of ImageNet.
The pretrained WideResnet-101 model \citep{zagoruyko2016wide} on the ImageNet dataset with Spinal FC \citep{kabir2020spinalnet} is chosen for $R_{\vtheta}$. 
In our experiments for transfer learning, we  first train the WideResnet model on ImageNet. We then use the parameters of the pretrained neural network as the initialization parameters to train CIFAR-10. 
In contrast to transfer learning, the initialization parameters of learning from scratch  are random.}  
The discriminator network $D_{\vphi}$ is a 4-layer network.
The architecture of  $R_{\vtheta}$ and most hyperparameters are shared across all four methods - DDR, {\color{black}CNN}, DDR+CN and dCorAE~\citep{wang2018distance}.}
Finally, we use the $k$-nearest neighbor ($k=5$) classifier on the learned features for all methods.

As shown in Table \ref{tab:mnist},  the classification accuracies of DDR for MNIST and FashionMNIST are better than or comparable with those of {\color{black}CNN} and dCorAE.
{\color{black}As shown in Table \ref{tab:cifar10}, the classification accuracy of DDR using the CN loss outperforms that of CNN on CIFAR-10 and CIFAR-100.}
We also calculate the estimated distance correlation (DC)
between the learned features and their labels.
Figure \ref{fig:dc} shows the values of DC for MNIST, FashionMNIST and CIFAR-10 data.
Higher DC values mean that the learned features are of higher quality.
DDR and DDR+CN achieves higher DC values.

\begin{figure}[htbp]
	\centering
	\begin{tabular}{cccccc}
		\includegraphics[width=0.14\columnwidth]{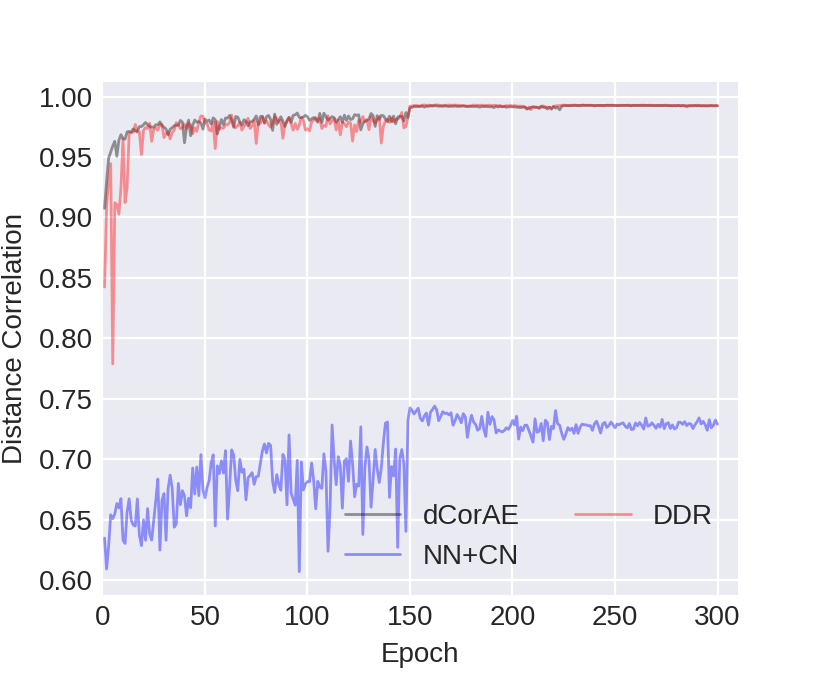}&
		\includegraphics[width=0.14\columnwidth]{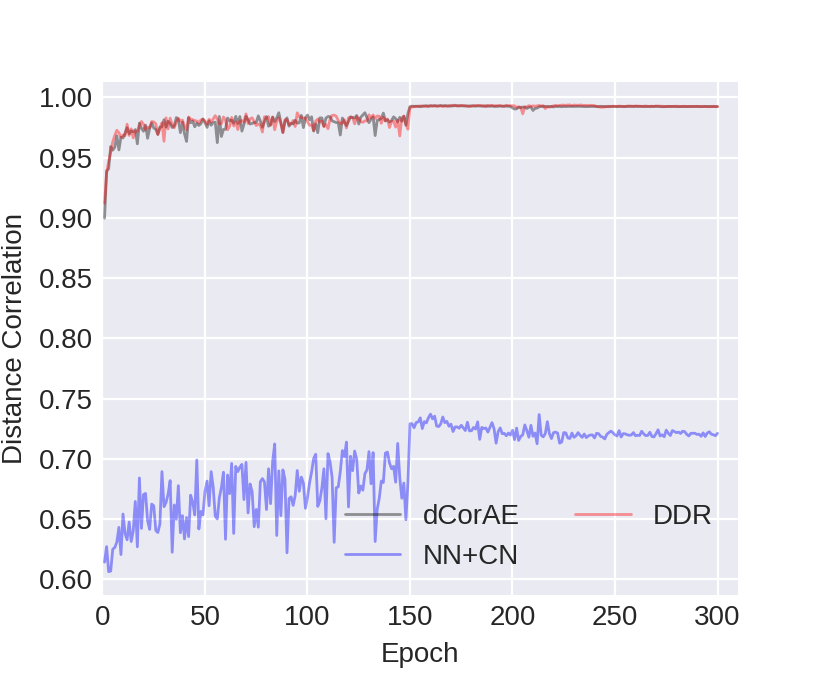} &
		\includegraphics[width=0.14\columnwidth]{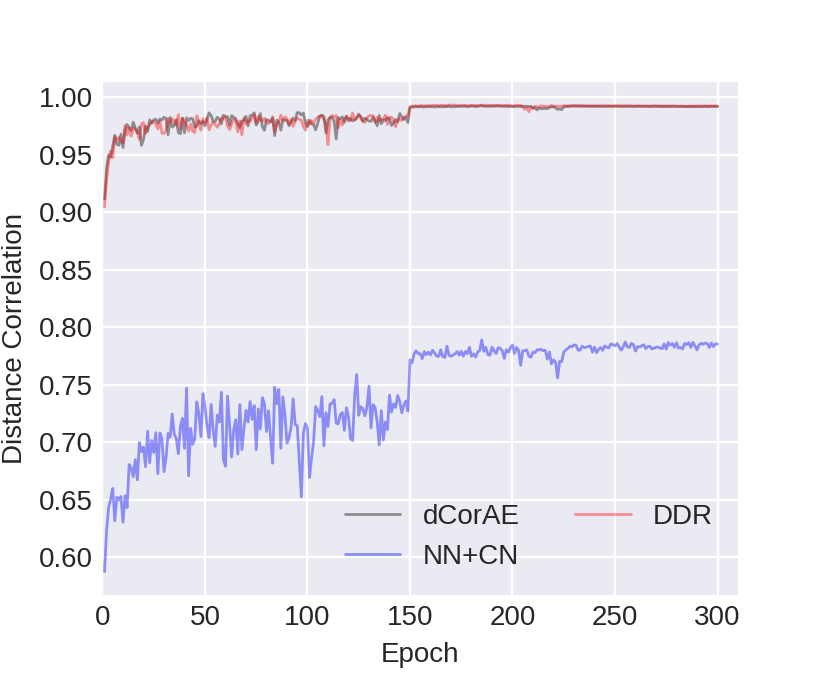} &
	\includegraphics[width=0.14\columnwidth]{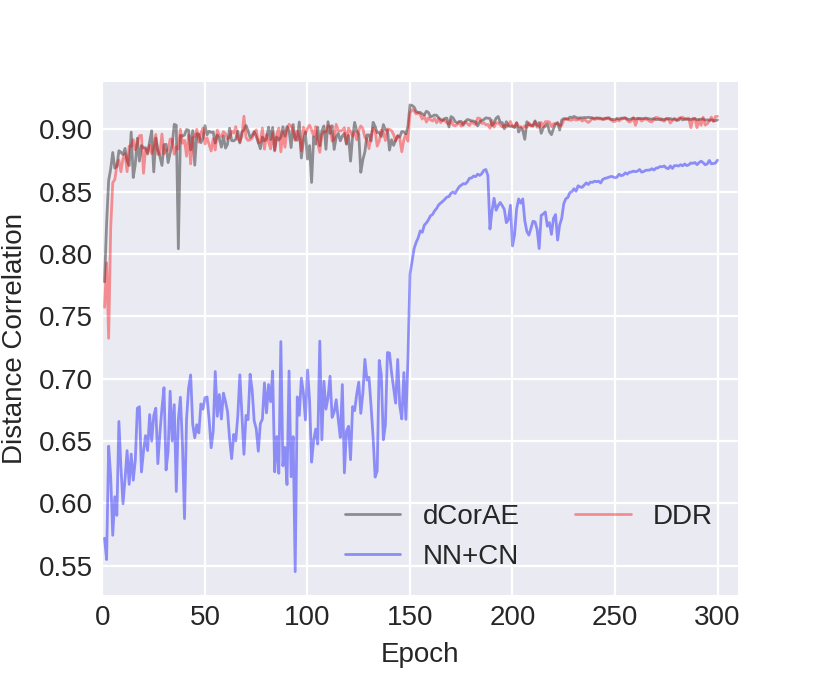}&
		\includegraphics[width=0.14\columnwidth]{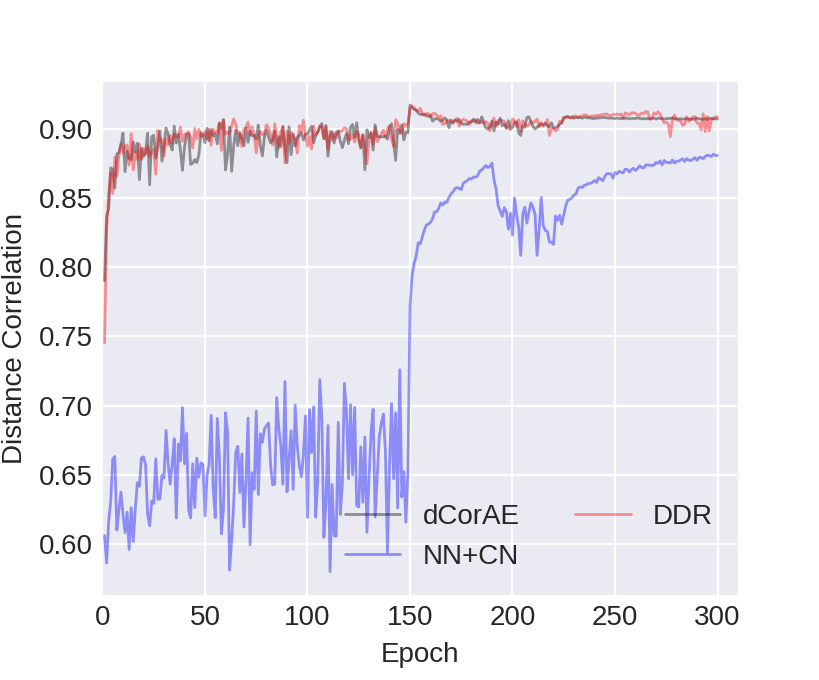} &
		\includegraphics[width=0.14\columnwidth]{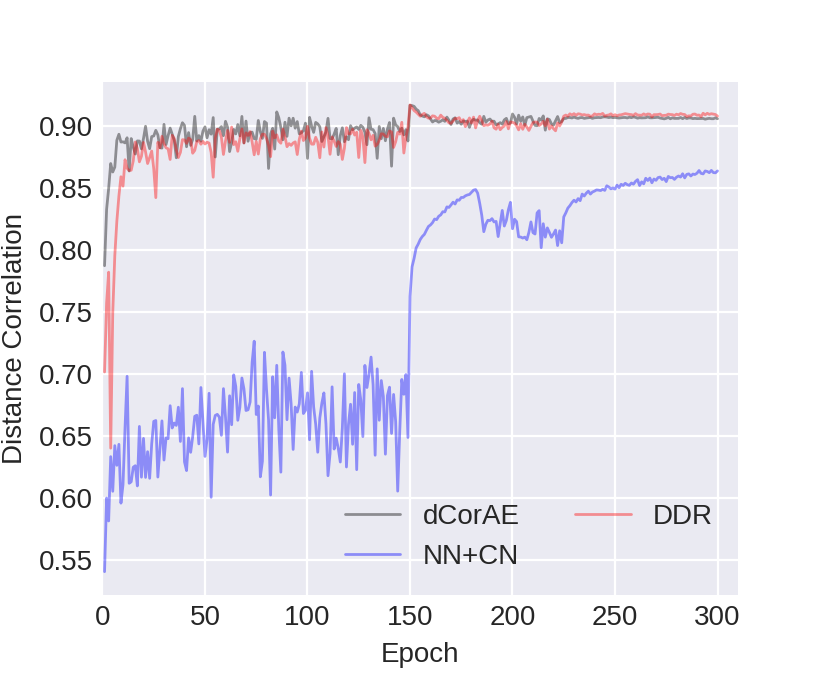}\\
{\tiny	(a) MNIST, $16$} & {\tiny (b) MNIST, $32$} & {\tiny (c) MNIST, $64$ }&
{\tiny	(d) FMNIST, $16$} &  {\tiny (e) FMNIST, $32$}
& {\tiny (f)  FMNIST, $64 $}
\end{tabular}

\begin{tabular}{cccccc}
		\includegraphics[width=0.14\columnwidth]{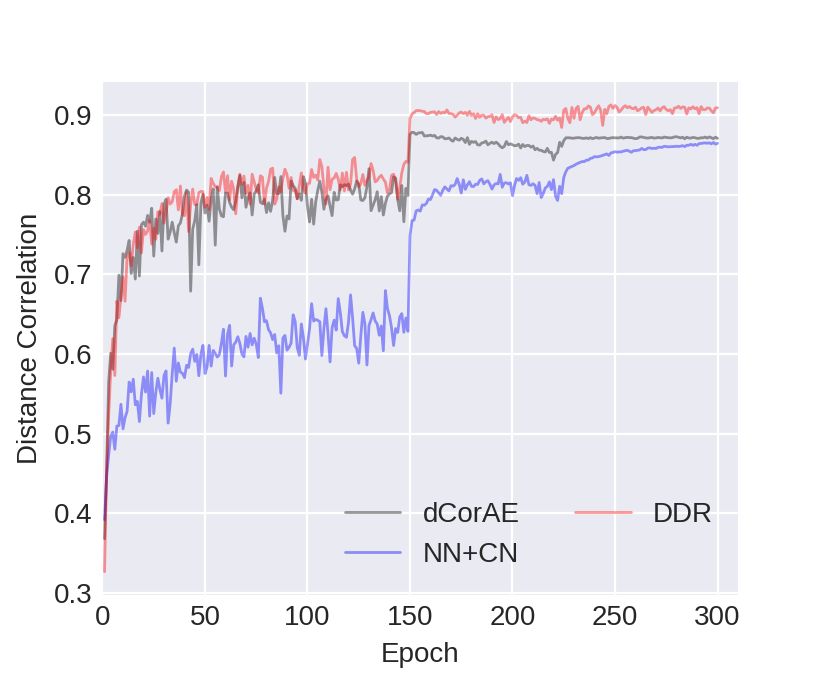}&
		\includegraphics[width=0.14\columnwidth]{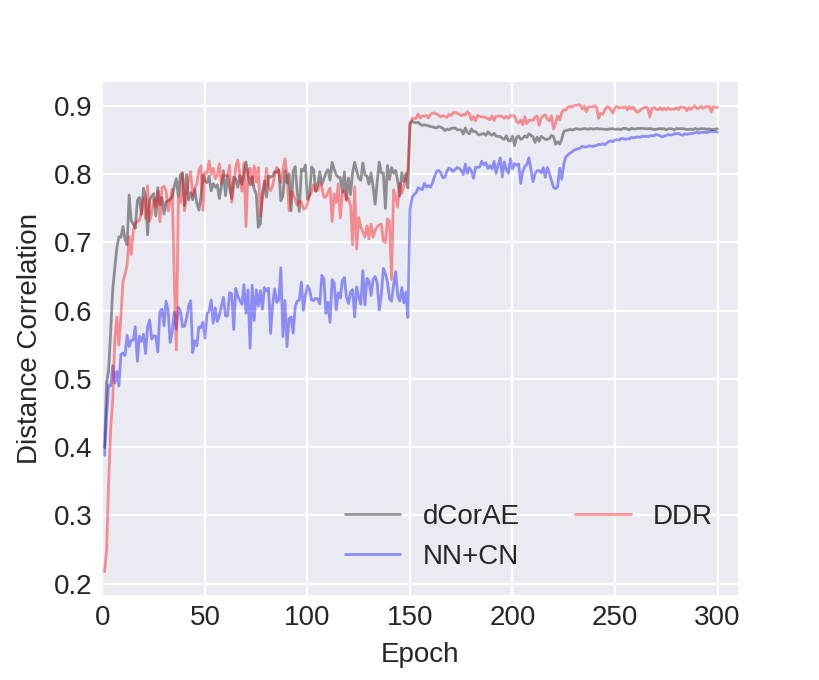} &
		\includegraphics[width=0.14\columnwidth]{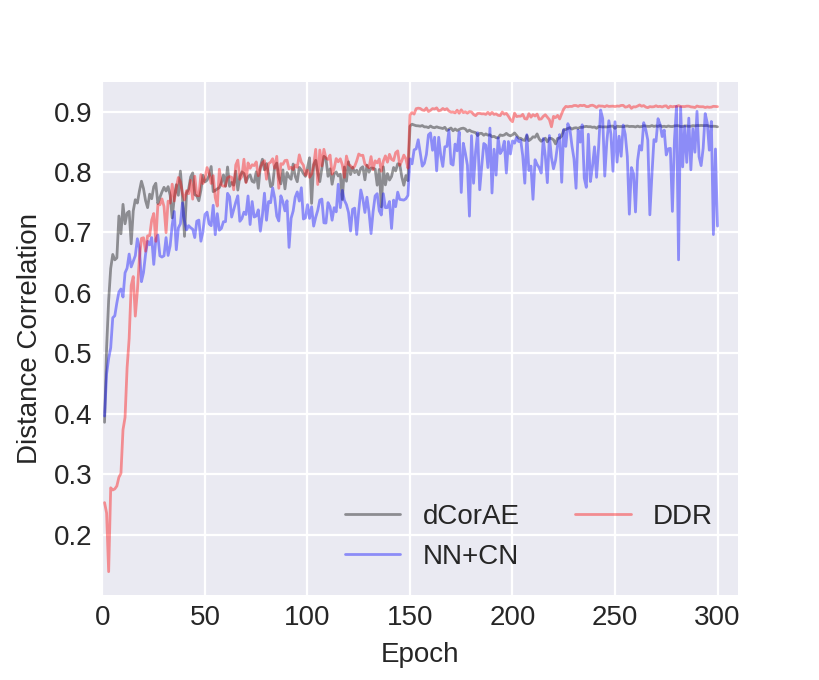} &
		\includegraphics[width=0.14\columnwidth]{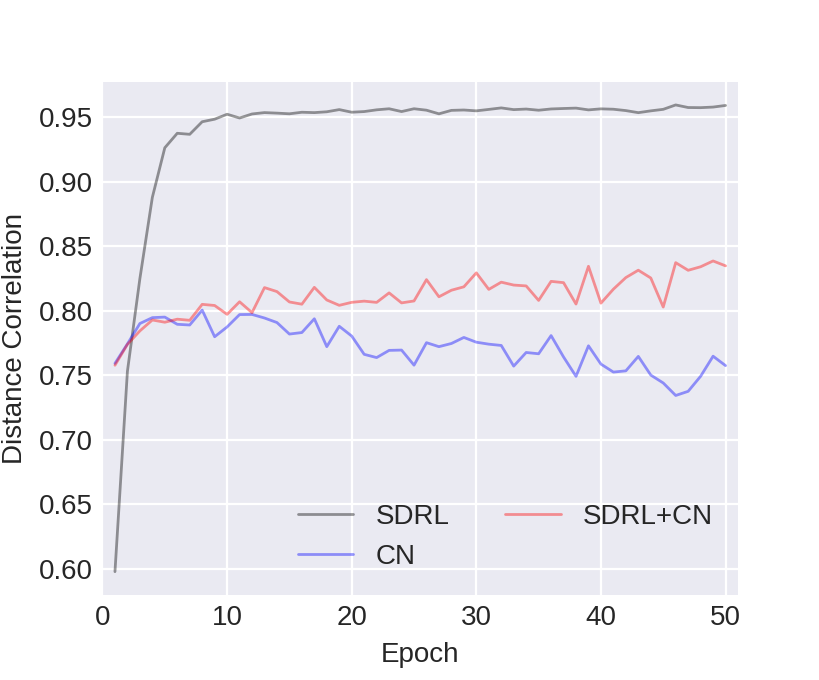}&
		\includegraphics[width=0.14\columnwidth]{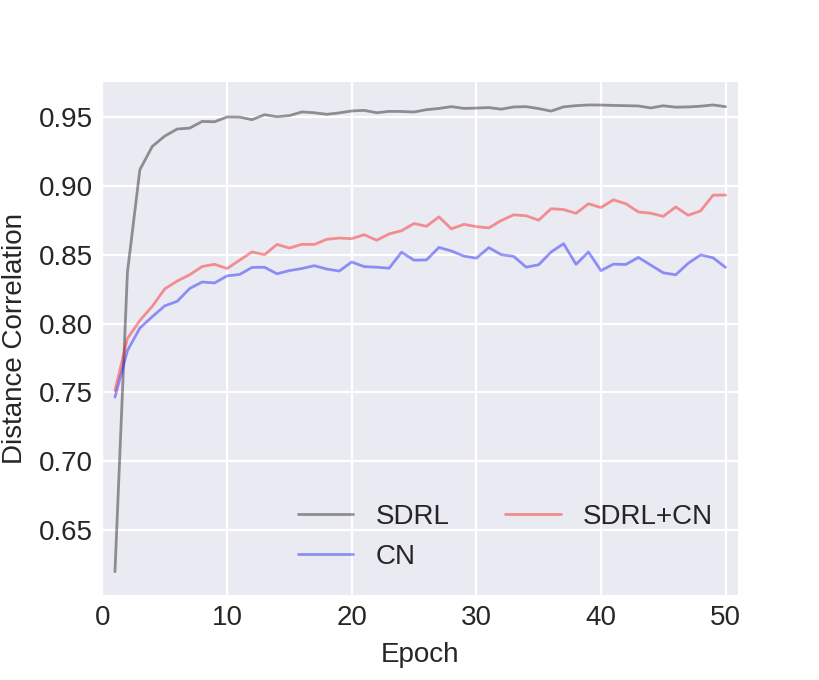} &
		\includegraphics[width=0.14\columnwidth]{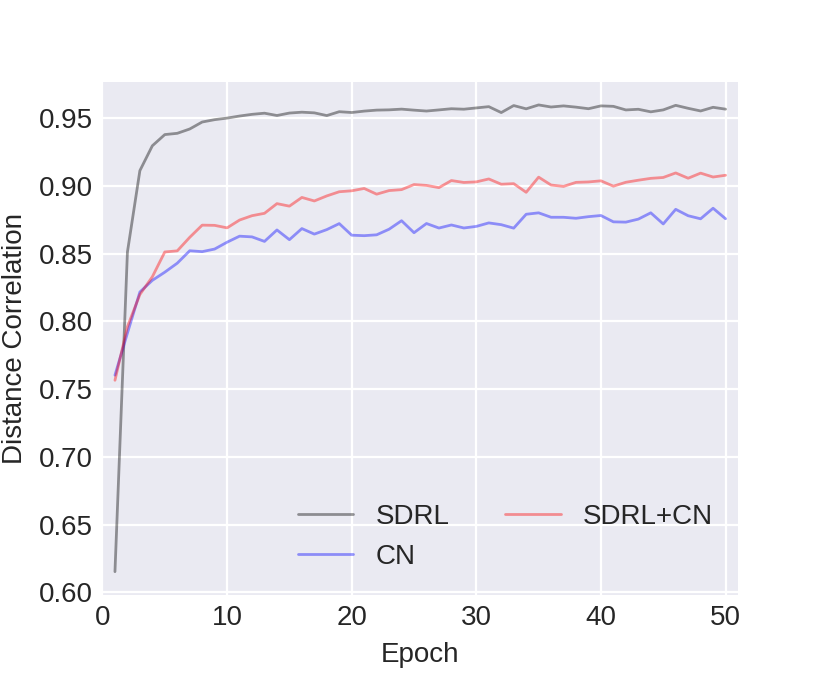}\\
{\tiny (g) CIFAR (FS), $16$} &{\tiny (h) CIFAR (FS), $32$}&
 {\tiny (i) CIFAR (FS), $64$ } &
{\tiny		(j) CIFAR (TL), $16$} &{\tiny (k) CIFAR (TL), $32$}&
 {\tiny (l) CIFAR (TL), $64$ }
	\end{tabular}
	\caption{\color{black}The distance correlations of labels with learned features based on DDR, CNN and  dCorAE with {\color{black} $d=16, 32$ and 64} for MNIST,
FashionMNIST (FMNIST)  and CIFAR-10 (CIFAR) data (FS: from scratch; TL: transfer learning).}
	\label{fig:dc}
\end{figure}



{ \color{black} Because both GSIR and GSAVE require computation of $n\times n$ kernel matrices, which is computationally prohibitive when $n=10,000$ to $60,000$ and $p \approx 1,000$, it does not allow us to apply both methods to analyze datasets from MNIST, FashionMNIST, CIFAR-10 and CIFAR-100}.
\begin{table}[ht!]
	\caption{{\color{black} Classification accuracy for MNIST and FashionMNIST. In the table, dCor=dCorAE.}}
	\label{tab:mnist}
	\begin{center}
		\scriptsize
			\begin{tabular}{lccccccc}
				\toprule
				&&\multicolumn{1}{c}{ MNIST} & &\multicolumn{3}{c}{ FashionMNIST}  \\
				\cmidrule(r){2-4} \cmidrule(r){5-7}
				$d$	   		&	DDR	&	dCor	& CNN	&	DDR	&	dCor	&	CNN	\\
				\midrule
				$d=16$		&	99.41	&	99.58	&	99.39	&	\textbf{94.44}	&	94.18	&	94.21		\\
				$d=32$		&	\textbf{99.61}	&	99.54	&	99.45	&	94.18	&	93.89	&	94.41	\\
				$d=64$		&	\textbf{99.56}	&	99.53	&	99.49	&	94.13	&	94.24	&	94.38	\\
				\bottomrule
			\end{tabular}
	\end{center}
	\vskip -0.1in
\end{table}

\begin{table}[ht!]
	\caption{{\color{black} Classification accuracy for CIFAR-10 and CIFAR-100 data. 
 }}
	\label{tab:cifar10}
	\begin{center}
		\scriptsize
			\begin{tabular}{@{}lcccccclccc@{}}
\toprule
       & \multicolumn{6}{c}{CIFAR-10}                                                                       && \multicolumn{3}{c}{CIFAR-100}                              \\ 	\cmidrule(r){2-7} \cmidrule(r){8-11}
       & \multicolumn{3}{c}{Learning from scratch}        & \multicolumn{3}{c}{Transfer learning}           && \multicolumn{3}{c}{Transfer learning}                      \\
				\cmidrule(r){2-4} \cmidrule(r){5-7} \cmidrule(r){9-11}
$d$    & dCorAE & CNN   & DDR                             & CNN   & DDR   & DDR+CN                          & $d$     & dCorAE & CNN   & DDR+CN                          \\\midrule
$d=16$ & 94.15  & 94.21 & \textbf{94.29} & 97.44 & 97.52 & \textbf{97.68} & $d=200$ & 85.39  & 86.29 & \textbf{86.36} \\
$d=32$ & 94.18  & 94.92 & 94.58                           & 97.79 & 97.33 & \textbf{97.96} & $d=300$ & 85.57  & 85.95 & \textbf{86.04} \\
$d=64$ & 94.66  & 95.09 & 94.46                           & 97.90 & 97.49 & \textbf{97.91} & $d=400$ & 85.55  & 86.21 & \textbf{86.30} \\ \bottomrule
\end{tabular}
	\end{center}
	\vskip -0.1in
\end{table}

\noindent
{\color{black} \textbf{Classification \uppercase\expandafter{\romannumeral2}.}  To compare the performance of DDR with semi-supervised methods, we benchmark DDR on MNIST dataset with varying amounts of labeled data for training.  In detail, we consider some widely used 
semi-supervised learning methods, including semi-supervised variational autoencoders (Semi-VAE) \citep{kingma2014semi} and information maximizing variational autoencoders (InfoVAE) with the semi-supervised setting \citep{kingma2014semi, zhao2019infovae}, and the supervised method, CNN. 
InfoVAE utilizes all training images to learn representations, and then trains the $k$-nearest neighbor ($k=5$) classifier with the learned representations and partially known labels. 
All four methods share the same network architecture for $50$-dimensional learning representation.
We adopt the double-hidden-layer MLP networks, with 600 neurons for each layer, the softplus activation function, and the Adam optimizer.
For both Semi-VAE and InfoVAE, we apply a semi-supervised setting to analyze all $60k$ images with the varying number of labeled images as the training data and validate the performance using $10k$ test data.
But for both DDR and CNN, we apply a supervised setting to analyze only data with labels and discard the rest of images in the training set.
\begin{figure}[htbp]
\begin{center}
\includegraphics[width=0.9\textwidth]{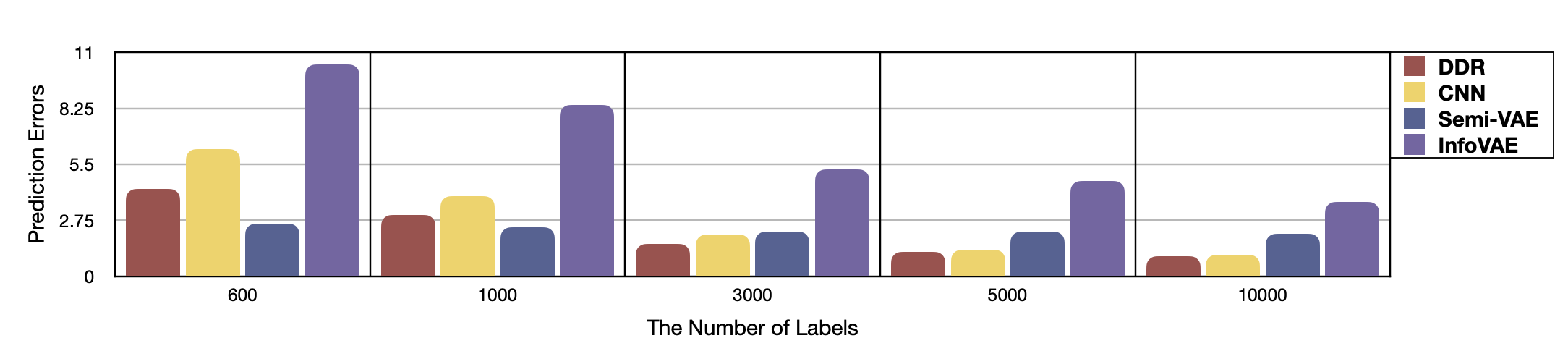}
\caption{\color{black} The classification errors comparison on varying amounts of labeled MNIST data. }
\label{fig:VAE}
\end{center}
\end{figure}

The prediction accuracy using images with the varying number of labels, 
from $600$ to $10,000$, is shown in Fig. \ref{fig:VAE}. For supervised methods, we observe that the accuracy of DDR outperforms that of CNN for the varying number of images with labels. When the proportion of images with labels is low,  Semi-VAE performs the best 
by fully utilizing all $60k$ training images.
However, the accuracy of Semi-VAE does not improve over the increasing proportion of images with labels.   
This is because the objective function of Semi-VAE is primary to maximize the lower bound of the joint likelihood rather than the classification loss. 
In all, compared with semi-supervised learning, DDR uses a small amount of data with labels in training, but achieves better classification accuracy when the number of images with labels are larger than 1,000.
}
\section{Conclusion and future work}
\label{conclusion}
{\color{black}
Since the framework of supervised dimension reduction was first introduced over twenty years ago by \citet{li1991sliced}, there have been a variety of important methods developed in this framework.
However, most of the existing works focused on finding linear representations of the input data. There are two important reasons why it has been difficult to develop nonparametric supervised representation learning approaches.
First, supervised representation learning is more difficult than and fundamentally different
from supervised learning.  Indeed, it is challenging to formulate a clear and simple objective function for supervised representation learning in the first place. This is in clear contrast to supervised learning,
whose objective is clear-cut.  For example, in classification, the objective is to minimize the misclassification rate or surrogate objective function; in regression, a least squares criterion for the fitting error is usually used. However,  how to construct an objective function for nonparametric supervised representation
learning in a principled way has remained an open question \citep{bengio2013representation, alain2017understanding}.
Second, it is difficult to apply standard techniques for nonparametric estimation such as smoothing,  splines and kernel methods in multi-dimensional problems. They are either not flexible enough for providing accurate adaptive and data-driven based approximation to multi-dimensional functions or are computational prohibitive with high-dimensional data.
}

In this work, we propose a nonparametric DDR approach to achieving a good data representation for supervised learning with certain desired characteristics including sufficiency, low-dimensionality and disentanglement.
We estimate the representation map nonparametrically by taking advantage of the powerful capabilities of deep neural networks in approximating multi-dimensional functions.
The proposed DDR is validated via comprehensive numerical experiments and real data analysis in the context of regression and classification.

Several questions deserve further study.
First,  it would be interesting to consider other measures of conditional independence such as
conditional covariance operators on reproducing kernel Hilbert spaces
\citep{fukumizu2009kernel} and heteroscedastic conditional variance operator on Hilbert spaces \citep{lee2013gen}.
It is also possible to use mutual information for measuring conditional independence \citep{suzuki2013sufficient}, although with this measure the loss function itself needs to be estimated.
It would be interesting to develop algorithms and theoretical understanding of these criteria and evaluate the relative performance of the learned representations based on
these different conditional independence measures.

We used the standard Gaussian as the reference distribution for DDRM to promote disentanglement of the representation. Another convenient choice is the uniform distribution on the unit cube.
It is worth examining whether there is any difference in the performance of DDR with different reference distributions. Another question is how to determine the dimension of the learned representation. This is also an important problem in linear SDR. Since the purpose of dimension reduction is often to build prediction models in high-dimensional settings, the problem of determining the dimension of the representation can be best addressed based on cross validation or related data-driven methods in the model building phase.

{\color{black} Last but not least, due to the non-uniqueness of the target, it is challenging to provide the consistency of the estimated nonlinear dimension reduction map. It will be interesting to explore this property in the future work.  } 
\newpage
\section{Appendix}
{\color{black}In the appendix, we show additional experiments using DDR and InfoVAE.
In addition, we provide the implementation details about numerical settings, network structures, SGD optimizers, and hyper-parameters used in the numerical studies. }
We also give the detailed proofs of Lemmas \ref{lem1}-\ref{glmin} and Theorem \ref{esterr}.
\subsection{Additional experiments}
{\color{black}We make comparisons of DDR and InfoVAE on MNIST, Kuzushiji-MNIST and Fashion-MNIST.
Because InfoVAE is an unsupervised method based on autoencoders with maximum mean discrepancy (MMD) loss for constrained representation, we use the semi-supervised setting of InfoVAE \citep{kingma2014semi, zhao2019infovae} in our experiments. 
We utilize the double-hidden-layer MLP networks with the softplus activation function, and the Adam optimizer.
\begin{table}[htbp]
\caption{\color{black}MMD metric and classification error for MNIST,  Kuzushiji-MNIST and Fashion-MNIST.}
	\label{tab:info1}
	\begin{center}
		\scriptsize
\begin{tabular}{@{}ll|cccc|cccc@{}}
\toprule
                                                      &                          & \multicolumn{4}{c|}{MMD (\%)}                       & \multicolumn{4}{c}{Classification error (\%)}                          \\ \midrule
\multicolumn{1}{l|}{}                                 & \multicolumn{1}{c|}{\#label} & 600             & 1000            & 3000   & 5000   & 600            & 1000           & 3000           & 5000           \\ \midrule
\multicolumn{1}{l|}{\multirow{2}{*}{MNIST}}           & DDR                      & \textbf{0.0249} & \textbf{0.0242} & 0.0367 & 0.127  & \textbf{9.92}  & \textbf{6.70}  & \textbf{2.80}  & \textbf{2.03}  \\
\multicolumn{1}{l|}{}                                 & InfoVAE                & 0.0272          & 0.0273          & 0.0273 & 0.0273 & 13.86          & 11.27          & 7.42           & 6.29           \\ \midrule
\multicolumn{1}{l|}{\multirow{2}{*}{Kuzushiji-MNIST}} & DDR                      & 0.0214          & 0.0242          & 0.0287 & 0.0247 & \textbf{27.24} & \textbf{22.75} & \textbf{14.04} & \textbf{9.63}  \\
\multicolumn{1}{l|}{}                                 &InfoVAE                 & 0.0202          & 0.0202          & 0.0202 & 0.0202 & 34.58          & 28.57          & 18.89          & 15.19          \\ \midrule
\multicolumn{1}{l|}{\multirow{2}{*}{Fashion-MNIST}}   & DDR                      & \textbf{0.0278} & \textbf{0.0344} & 0.0475 & 0.0474 & \textbf{21.25} & \textbf{18.36} & \textbf{16.40} & \textbf{15.15} \\
\multicolumn{1}{l|}{}                                 & InfoVAE                & 0.0404          & 0.0404          & 0.0404 & 0.0404 & 21.47          & 20.27          & 17.66          & 16.67          \\ \bottomrule
\end{tabular}
\end{center}
\vskip -0.1in
\end{table}

%
%
%
We compare  DDR with InfoVAE in terms of classification error and MMD metric, as shown in Table \ref{tab:info1}.
We observe that DDR and InfoVAE are comparable to learn a standard Gaussian distribution, but DDR outperforms InfoVAE across all settings in terms of the classification accuracy.
In DDR, there exists a trade-off between two objectives: learn a Gaussian distribution and improve the performance of downstream tasks.
From the experimental results, we conclude that DDR can achieve satisfactory prediction accuracy with mild constraints on the representation distributions.}
\subsection{Experimental details}

\subsubsection{Simulation studies}
The values of the hyper-parameters for the simulated experiments are given in Table \ref{toy}, where \(\lambda\) is the penalty parameter, \(d \) is the dimension of the SDRM, \(n\) is the mini-batch size in SGD,  \(T_1\) is the number of inner loops to push forward particles \(\rvz_i \), \(T_2\) is the number of outer loops for training \(R_{\vtheta}\), and \(s\) is the step size to update particles. For the regression models, the neural network architectures are shown in {\color{black}Table \ref{mlpr1}.}

As shown in Table \ref{mlp}, a multilayer perceptron (MLP) is utilized for the neural structure \(D_{\vphi}\) in the classification  problem. The detailed architecture of 10-layer dense convolutional network (DenseNet) \citep{huang2017denselyb, amos2017densenet} deployed for \(R_{\vtheta}\) is shown in Table \ref{toy_R}. For all the settings, we adopted the Adam \citep{kingma2014adam} optimizer with an initial learning rate of 0.001 and weight decay of 0.0001.
\begin{table}[htbp]
	\caption{\label{toy}Hyper-parameters for simulated examples, where \(s\) varies according to epoch}
		\begin{center}
\footnotesize
			\begin{tabular}{lcccccccc}
				\toprule
				&& & &&&\multicolumn{3}{c}{\(s\)} \\
				\cmidrule(r){7-9}
				Task& \(\lambda\) & \(d\) & \(n\)& \(T_1\) & \(T_2\)& 0-150  &151-225 & 226-500  \\
				\midrule
				Regression  
&1.0&2 or 1&64& 1&500&3.0&2.0&1.0\\
				Classification&1.0&2&64&1&500&2.0&1.5&1.0\\
				\bottomrule
			\end{tabular}
		\end{center}
		\vskip -0.1in
\end{table}

\begin{table}[htbp]
	\caption{\label{mlpr1}MLP architectures for  \(D_{\vphi}\) and \(R_{\vtheta}\) in regression } 
		\begin{center}
\footnotesize
			\begin{tabular}{lccccc}
				\toprule
				&\multicolumn{2}{c} { \(D_{\vphi}\) } &&\multicolumn{2}{c}{\(R_{\vtheta}\)} \\
				\cmidrule(r){2-3}  \cmidrule(r){5-6}
				Layers & Details & Output size&& Details & Output size \\
				\midrule
				Layer 1 & Linear, LeakyReLU& 16&& Linear, LeakyReLU& 16 \\
				Layer 2 & Linear& 1&& Linear, LeakyReLU& 8 \\
				Layer 3 & & && Linear&   \(d\) \\
				\bottomrule
			\end{tabular}
		\end{center}
		\vskip -0.1in
\end{table}

\begin{table}[htbp]
	\caption{\label{mlp} MLP architecture for  \(D_{\vphi}\) in the simulated classification examples and  the benchmark classification datasets}
		\begin{center}
\footnotesize
			\begin{tabular}{lcc}
				\toprule
				Layers & Details & Output size \\
				\midrule
				Layer 1 & Linear, LeakyReLU& 64 \\
				Layer 2 & Linear, LeakyReLU& 128 \\
				Layer 3 & Linear, LeakyReLU& 64 \\
				Layer 4 & Linear& 1 \\
				\bottomrule
			\end{tabular}
		\end{center}
		\vskip -0.1in
\end{table}

\begin{table}[htbp]
	\caption{\label{toy_R}DenseNet architecture for  \(R_{\vtheta}\) in the simulated classification examples}
		\begin{center}
\footnotesize
			\begin{tabular}{lcc}
				\toprule
				Layers & Details & Output size \\
				\midrule
				Convolution & \(3 \times 3\) Conv& \(24 \times 20\times 20\) \\
				Dense Block 1& \(\left[\begin{array}{l}\text{BN, } 1 \times 1 \text { Conv } \\ \text{BN, } 3 \times 3 \text { Conv }\end{array}\right] \times 1\)& \(36 \times 20 \times 20\) \\
				Transition Layer 1 &  BN, ReLU, \(2 \times 2\)  Average Pool,\(1 \times 1\) Conv &\(30 \times10 \times 10\) \\
				Dense Block 2& \(\left[\begin{array}{l}\text{BN, } 1 \times 1 \text { Conv } \\ \text{BN, } 3 \times 3 \text { Conv }\end{array}\right] \times 1\)& \(18 \times 10\times 10\) \\
				Transition Layer 2&  BN, ReLU, \(2 \times 2\)  Average Pool, \(1 \times 1\) Conv &\(15 \times5 \times 5\) \\
				Dense Block 3& \(\left[\begin{array}{l}\text{BN, } 1 \times 1 \text { Conv } \\ \text{BN, } 3 \times 3 \text { Conv }\end{array}\right] \times 1\)& \(27 \times 5 \times 5\) \\
				Pooling&   BN, ReLU, \(5 \times 5\)  Average Pool, Reshape&\(27\) \\
				Fully connected&  Linear&\(2\) \\
				\bottomrule
			\end{tabular}
		\end{center}
		\vskip -0.1in
\end{table}

\subsubsection{Real datasets}
\textbf{Regression:}  In the regression problems,  hyper-parameters
are presented in Table \ref{reg}.
The Adam optimizer with an initial learning rate of 0.001 and weight decay of 0.0001 is adopted. The MLP architectures of  \(D_{\vphi}\) and \(R_{\vtheta}\)
for the YearPredictionMSD data are shown in Table \ref{mlp2} {\color{black} and for the Pole-Telecommunication data are shown in Table \ref{mlp4}.}

\begin{table}[htbp]
	\caption{\label{reg}{\color{black}Hyper-parameters for real regression datasets}}
		\begin{center}
\footnotesize
			\begin{tabular}{lcccccc}
				\toprule
				Dataset& \(\lambda\) & \(d\) & \(n\)& \(T_1\) & \(T_2\)& \(s\)  \\
				\midrule
				YearPredictionMSD&1.0&10, 20, 30, 40&64& 1&500&1.0\\
				Pole-Telecommunication&1.0&5, 10, 15, 20&64&1&200&1.0\\
				\bottomrule
			\end{tabular}
		\end{center}
		\vskip -0.1in
\end{table}
\begin{table}[htbp]
	\caption{\label{mlp2}MLP architectures for  \(D_{\vphi}\) and \(R_{\vtheta}\) for YearPredictionMSD data}
		\begin{center}
\footnotesize
			\begin{tabular}{lccccc}
				\toprule
				&\multicolumn{2}{c} { \(D_{\vphi}\)}&&\multicolumn{2}{c}{\(R_{\vtheta}\)} \\
				\cmidrule(r){2-3}  \cmidrule(r){5-6}
				Layers & Details & Output size&& Details & Output size \\
				\midrule
				Layer 1 & Linear, LeakyReLU& 32&& Linear, LeakyReLU& 32 \\
				Layer 2 & Linear, LeakyReLU& 8&& Linear, LeakyReLU& 8 \\
				Layer 3 & Linear& 1&& Linear&   \(d\) \\
				\bottomrule
			\end{tabular}
		\end{center}
		\vskip -0.1in
\end{table}

\begin{table}[htbp]
	\caption{\label{mlp4}{\color{black}MLP architectures for  \(D_{\vphi}\) and \(R_{\vtheta}\) for Pole-Telecommunication data}}
		\begin{center}
\footnotesize
			\begin{tabular}{lccccc}
				\toprule
				&\multicolumn{2}{c} { \(D_{\vphi}\)}&&\multicolumn{2}{c}{\(R_{\vtheta}\)} \\
				\cmidrule(r){2-3}  \cmidrule(r){5-6}
				Layers & Details & Output size&& Details & Output size \\
				\midrule
				Layer 1 & Linear, LeakyReLU& 8&& Linear, LeakyReLU& 16 \\
				Layer 2 & Linear& d&& Linear, LeakyReLU& 32 \\
				Layer 3 & & &&Linear, LeakyReLU& 8 \\
				Layer 4 & & && Linear&   \(d\) \\
				\bottomrule
			\end{tabular}
		\end{center}
		\vskip -0.1in
\end{table}

\noindent
\textbf{Classification:}
We again use Adam as the SGD optimizers for both $D_\vphi$ and $R_\vtheta$. Specifically, learning rate of 0.001 and weight decay of 0.0001 are used for $D_\vphi$ in all datasets and for $R_\vtheta$ on MNIST \citep{mnist}. We customized the SGD optimizers with momentum at 0.9, weight decay at 0.0001, and learning rate \(\rho\) in Table \ref{lr}.
{\color{black} For the transfer learning of CIFAR-10, we use customized SGD optimizer with initial learning rate of 0.001 and momentum of 0.9 for $R_\vtheta$.
For}  FashionMNIST \citep{fashion_mnist} and CIFAR-10 \citep{krizhevsky2012imagenet},
MLP architectures 
of the discriminator network \(D_{\vphi}\) for MNIST, FashionMNIST and CIFAR-10 are given in Table \ref{mlp}. The 20-layer DenseNet networks shown in Table \ref{mnist} were utlized for \(R_{\vtheta}\) on the MNIST dataset, while the 100-layer DenseNet networks shown in Table \ref{fmnist} and \ref{cifar10} are fitted for \(R_{\vtheta}\) on FashionMNIST and CIFAR-10.
\begin{table}[ht!]
	\caption{\label{class}{\color{black} Hyper-parameters for the classification benchmark datasets}}
		\begin{center}
\footnotesize
			\begin{tabular}{lcccccc}
				\toprule
				Dataset& \(\lambda\) & \(d\) & \(n\)& \(T_1\) & \(T_2\)& \(s\)  \\
				\midrule
				MNIST&1.0&16, 32, 64&64& 1&300&0.1\\
				FashionMNIST&1.0&16, 32, 64&64&1&300&1.0\\
				CIFAR-10&1.0&16, 32, 64&64&1&300&1.0\\
				CIFAR-10 (transfer learning)&0.01&16, 32, 64&64&1&50&1.0\\
				\bottomrule
			\end{tabular}
		\end{center}
		\vskip -0.1in
\end{table}
\begin{table}[htbp]
	\caption{\label{lr}Learning rate $\rho$ varies  during training.}
		\begin{center}
\footnotesize
			\begin{tabular}{cccccc}
				\toprule
				Epoch&0-150&151-225&226-300\\
				\midrule
				\(\rho\)&0.1&0.01&0.001&\\
				\bottomrule
			\end{tabular}
		\end{center}
		\vskip -0.1in
\end{table}
\begin{table}[htbp]
	\caption{\label{mnist}Architecture for MNIST, reduced feature size is \(d\)}
		\begin{center}
\footnotesize
			\begin{tabular}{lcc}
				\toprule
				Layers & Details & Output size \\
				\midrule
				Convolution & \(3 \times 3\) Conv& \(24 \times 28\times 28\) \\
				Dense Block 1& \(\left[\begin{array}{l}\text{BN, } 1 \times 1 \text { Conv } \\ \text{BN, } 3 \times 3 \text { Conv }\end{array}\right] \times 2\)& \(48 \times 28 \times 28\) \\
				Transition Layer 1 &  BN, ReLU, \(2 \times 2\)  Average Pool,\(1 \times 1\) Conv &\(24 \times14 \times 14\) \\
				Dense Block 2& \(\left[\begin{array}{l}\text{BN, } 1 \times 1 \text { Conv } \\ \text{BN, } 3 \times 3 \text { Conv }\end{array}\right] \times 2\)& \(48 \times 14\times 14\) \\
				Transition Layer 2&  BN, ReLU, \(2 \times 2\)  Average Pool, \(1 \times 1\) Conv &\(24 \times7 \times 7\) \\
				Dense Block 3& \(\left[\begin{array}{l}\text{BN, } 1 \times 1 \text { Conv } \\ \text{BN, } 3 \times 3 \text { Conv }\end{array}\right] \times 2\)& \(48 \times 7 \times 7\) \\
				Pooling&   BN, ReLU, \(7 \times 7\)  Average Pool, Reshape&\(48\) \\
				Fully connected&  Linear&\(d\) \\
				\bottomrule
			\end{tabular}
		\end{center}
		\vskip -0.1in
\end{table}
\begin{table}[htbp]
	\caption{\label{fmnist}Architecture for FashionMNIST, reduced feature size is \(d\)}
		\begin{center}
\footnotesize
			\begin{tabular}{lcc}
				\toprule
				Layers & Details & Output size \\
				\midrule
				Convolution & \(3 \times 3\) Conv& \(24 \times 28\times 28\) \\
				Dense Block 1& \(\left[\begin{array}{l}\text{BN, } 1 \times 1 \text { Conv } \\ \text{BN, } 3 \times 3 \text { Conv }\end{array}\right] \times 16\)& \(216 \times 28 \times 28\) \\
				Transition Layer 1 &  BN, ReLU, \(2 \times 2\)  Average Pool,\(1 \times 1\) Conv &\(108 \times14 \times 14\) \\
				Dense Block 2& \(\left[\begin{array}{l}\text{BN, } 1 \times 1 \text { Conv } \\ \text{BN, } 3 \times 3 \text { Conv }\end{array}\right] \times 16\)& \(300 \times 14\times 14\) \\
				Transition Layer 2&  BN, ReLU, \(2 \times 2\)  Average Pool, \(1 \times 1\) Conv &\(150 \times7 \times 7\) \\
				Dense Block 3& \(\left[\begin{array}{l}\text{BN, } 1 \times 1 \text { Conv } \\ \text{BN, } 3 \times 3 \text { Conv }\end{array}\right] \times 16\)& \(342 \times 7 \times 7\) \\
				Pooling&   BN, ReLU, \(7 \times 7\)  Average Pool, Reshape&\(342\) \\
				Fully connected&  Linear&\(d\)\\
				\bottomrule
			\end{tabular}
		\end{center}
		\vskip -0.1in
\end{table}
\begin{table}[htbp]
	\caption{\label{cifar10}Architecture for CIFAR-10, reduced feature size is \(d\)}
		\begin{center}
\footnotesize
			\begin{tabular}{lcc}
				\toprule
				Layers & Details & Output size \\
				\midrule
				Convolution & \(3 \times 3\) Conv& \(24 \times 32\times 32\) \\
				Dense Block 1& \(\left[\begin{array}{l}\text{BN, } 1 \times 1 \text { Conv } \\ \text{BN, } 3 \times 3 \text { Conv }\end{array}\right] \times 16\)& \(216 \times 32 \times 32\) \\
				Transition Layer 1 &  BN, ReLU, \(2 \times 2\)  Average Pool,\(1 \times 1\) Conv &\(108 \times16 \times 16\) \\
				Dense Block 2& \(\left[\begin{array}{l}\text{BN, } 1 \times 1 \text { Conv } \\ \text{BN, } 3 \times 3 \text { Conv }\end{array}\right] \times 16\)& \(300 \times 16 \times 16\) \\
				Transition Layer 2&  BN, ReLU, \(2 \times 2\)  Average Pool, \(1 \times 1\) Conv &\(150 \times8 \times 8\) \\
				Dense Block 3& \(\left[\begin{array}{l}\text{BN, } 1 \times 1 \text { Conv } \\ \text{BN, } 3 \times 3 \text { Conv }\end{array}\right] \times 16\)& \(342 \times 8 \times 8\) \\
				Pooling&   BN, ReLU, \(8 \times 8\)  Average Pool, Reshape&\(342\) \\
				Fully connected&  Linear&\(d\) \\
				\bottomrule
			\end{tabular}
		\end{center}
		\vskip -0.1in
\end{table}

\newpage
\subsection{Proofs }
In this section, we prove Lemmas \ref{lem1} and \ref{lem2}, and Theorems \ref{glmin} and \ref{esterr}.

\subsubsection{Proof of Lemma \ref{lem1}}
\begin{proof}
	By assumption  $\mu$ and $\gamma_{d}$ are both absolutely continuous with respect to  the  Lebesgue measure. The desired result holds since it  is a special case of the  well known results on the existence of optimal transport  \citep{brenier1991polar,mccann1995existence}, see also Theorem 1.28  on page 24 of \citep{philippis2013regularity} for details.
\end{proof}

\subsection{Proof of Lemma \ref{lem2}}
\begin{proof}
	Our proof follows \cite{keziou2003dual}.
	Since $f(t)$ is convex, then $\forall t \in \mathbb{R}$, we have  $f(t)=f^{**}(t)$, where $$f^{**}(t)=\sup_{s\in\mathbb{R}} \{st - f^*(s)\}$$ is the Fenchel conjugate of $f^*$. By  Fermat's rule, the maximizer  $s^*$  satisfies $$t\in \partial 
f^*(s^*),$$ i.e., $$s^*\in \partial f(t)$$
	Plugging  the above display with $t = \frac{\mathrm{d} \mu_Z}{\mathrm{d} \gamma}(x)$ into the definition of $f$-divergence,  we derive (\ref{fdual}).
\end{proof}
\subsection{Proof of Theorem \ref{glmin}}
\begin{proof}
	Without loss of generality, we assume $d=1$. For $R^*$ satisfying (\ref{cida2}) and  any $R\in \mathcal{R}$, we have $R= \rho_{(R,R^*)} R^* + \varepsilon_R$, where $\rho_{(R,R^*)}$ is the correlation coefficient between $R$ and $R^*$,  $\varepsilon_R= R- \rho_{(R,R^*)} R^*$. It is easy to see that $\varepsilon_R \indep R^*$ and thus $Y\indep \varepsilon_R $. As $(\rho_{(R,R^*)} R^*, Y)$ is independent of $( \varepsilon_R, 0)$, then by Theorem 3 of \cite{szekely2009dCov}
	\begin{align*}
	\mathcal{V}[R, \rvy] =& \mathcal{V}[\rho_{(R,R^*)} R^* + \varepsilon_R, \rvy]
	\le \mathcal{V}[\rho_{(R,R^*)} R^* , \rvy] + \mathcal{V}(\varepsilon_R, 0)\\
	= & \mathcal{V}[\rho_{(R,R^*)} R^* , \rvy]
	=  |\rho_{(R,R^*)} | \mathcal{V}[R^* , \rvy] \\
	\le &  \mathcal{V}[R^* , \rvy].
	\end{align*}
	As $R(\rvx)\sim \mathcal{N}(0,1)$ and $R^*(\rvx)\sim \mathcal{N}(0,1)$, then
	$\mathbb{D}_f(\mu_{R(\rvx)} \Vert \gamma_{d}) = \mathbb{D}_f(\mu_{R^*(\rvx)} \Vert \gamma_{d})=0$, and
	\begin{align*}
	\mathcal{L}(R)- \mathcal{L}(R^*) = 
\mathcal{V}[R^* , \rvy] -   
\mathcal{V}[R , \rvy] \ge 0.
	\end{align*}
	The proof is completed.
\end{proof}
\subsection{Proof of Theorem \ref{esterr}}

{\color{black}Recall that  $B_2 = \max\{|f^{\prime}(c_1)|,|f^{\prime}(c_2)|\}$,  $B_3 = \max_{|s|\leq  2 B_2} |f^*(s)|$.
We set the network parameters of the representer  $R_{\vtheta}$  and the discriminator  $D_{\vphi}$ as follows.
\begin{enumerate}
	\item[(N1)]
	Representer network $\mathbf{R}\equiv \mathbf{R}_{\mathcal{H}, \mathcal{W}, \mathcal{S}}$ parameters:
	depth $\mathcal{H} = \mathcal{O}( \log n)$ width
$\mathcal{W} = \mathcal{O}(n^{\frac{p}{2(2+p)}}/\log n),$  size $\mathcal{S} =\mathcal{O}(dn^{\frac{p}{2+p}}/\log^4 (npd)),$
and $\|R\|_{L^{\infty}} \leq  \mathcal{B} = 2\|R^*\|_{L^{\infty}}, \forall R \in \mathbf{R}.$
	\item[(N2)] Discriminator network $\mathbf{D}\equiv \mathbf{D}_{\tilde{\mathcal{H}}, \tilde{\mathcal{W}}, \tilde{\mathcal{S}}}$ parameters:
depth $\tilde{\mathcal{H}} = \mathcal{O}(\log n),$ width $\tilde{\mathcal{W}} = \mathcal{O}(n^{\frac{d}{2(2+d)}}/\log n),$ size $\tilde{\mathcal{S}} =\mathcal{O}(n^{\frac{d}{2+d}}/\log^4 (npd)),$
and
$\|D\|_{L^{\infty}} \leq   2B_2, \forall D \in \mathbf{D}.$
\end{enumerate}
}
	
Before getting into the details of the proof of Theorem \ref{esterr}, we first give an outline of the basic structure of the proof.

Without loss of generality, we assume that $\lambda = 1$ and $m = 1$, i.e. $\rvy\in \mathbb{R}$.
%
For any	$\bar{R}\in \mathbf{R}_{\mathcal{H}, \mathcal{W}, \mathcal{S}}$, we have,
\begin{align}
	\mathcal{L}(\widehat{R}_{\vtheta})-\mathcal{L}(R^{*})
	& =\mathcal{L}(\widehat{R}_{\vtheta}) - \widehat{\mathcal{L}}(\widehat{R}_{\vtheta}) +    \widehat{\mathcal{L}}(\widehat{R}_{\vtheta})-  \widehat{\mathcal{L}}(\bar{R}) +   \widehat{\mathcal{L}}(\bar{R}) - \mathcal{L}(\bar{R}) +\mathcal{L}(\bar{R})  - \mathcal{L}(R^*)\nonumber\\
	&\leq  2 \sup_{R \in \mathbf{R}_{\mathcal{H}, \mathcal{W}, \mathcal{S}}} |\mathcal{L}(R) - \widehat{\mathcal{L}}(R) |+ \inf_{\bar{R} \in \mathbf{R}_{\mathcal{H}, \mathcal{W}, \mathcal{S}}} |\mathcal{L}(\bar{R})  - \mathcal{L}(R^*)| , \label{A7}
	\end{align}
	where we use the definition of    $\widehat{R}_{\vtheta}$ in (\ref{dnpe}) and the feasibility of $\bar{R}$.
	Next we bound the two error terms in (\ref{A7}),
\begin{itemize}
\item the \textbf{approximation error}: $\inf_{\bar{R} \in \mathbf{R}_{\mathcal{H}, \mathcal{W}, \mathcal{S}, }} |\mathcal{L}(\bar{R})  - \mathcal{L}(R^*)|;$
\item the \textbf{statistical error}:  $\sup_{R \in \mathbf{R}_{\mathcal{H}, \mathcal{W}, \mathcal{S}}} |\mathcal{L}(R) - \widehat{\mathcal{L}}(R) |.$
\end{itemize}
Then Theorem \ref{esterr} follows after bounding  these two error terms.

\bigskip\noindent
{\textbf{A. The approximation error}}
\begin{lemma}\label{2.4}
Suppose that (A1)-(A3) hold and the network parameters satisfy (N1) and (N2). Then,
		\begin{equation}\label{appf}
		\inf_{\bar{R} \in \mathbf{R}_{\mathcal{D}, \mathcal{W}, \mathcal{S}, \mathcal{B}}} |\mathcal{L}(\bar{R})  - \mathcal{L}(R^*)| \leq 320C_1 L_1B_1\sqrt{pd} n^{-1/(p+2)}+o(1).
		\end{equation}
as $n\rightarrow \infty.$
	\end{lemma}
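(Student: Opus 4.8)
\textbf{Proof proposal for Lemma \ref{2.4}.}

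The plan is to bound the two pieces of $\mathcal{L}$ separately, namely the distance covariance term $\mathcal{V}[R(X),Y]$ and the $f$-divergence term $\mathbb{D}_f(\mu_{R(X)}\|\gamma_d)$, using a neural-network function that approximates $R^*$ well in sup-norm. First I would invoke a standard ReLU-network approximation theorem (of Yarotsky / Shen--Yang--Zhang type) for Lipschitz functions on a compact domain: since $R^*$ is Lipschitz with constant $L_1$ by (A1), there exists $\bar R \in \mathbf{R}_{\mathcal{D},\mathcal{W},\mathcal{S},\mathcal{B}}$ with $\|\bar R - R^*\|_\infty \le c\, L_1 \sqrt{p}\, n^{-\frac{2}{p+2}}$ for the depth/width/size prescribed in (N1), while respecting the bound $\|\bar R\|_\infty \le \mathcal{B}$. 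The exponent $n^{-2/(p+2)}$ and the choices in (N1) are exactly engineered so that the number of neurons $\sim n^{p/(2+p)}/\mathrm{polylog}$ yields approximation rate $n^{-2/(p+2)}$ in $L^\infty$; I would quote this as a black-box lemma.

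Next I would transfer this sup-norm closeness to closeness of the two functionals. For the distance covariance term, the key fact is that $\mathcal{V}[\cdot,Y]$ is continuous with respect to sup-norm perturbations of the first argument: using the $U$-statistic kernel representation in (\ref{usta}) and the triangle inequality $\big|\|z_i-z_j\|-\|z_i'-z_j'\|\big| \le \|z_i - z_i'\| + \|z_j - z_j'\|$, together with $|Y|\le C_1$ (the boundedness reduction) and $\|R^*\|_\infty, \|\bar R\|_\infty \le B_1$, one gets
\[
\big|\mathcal{V}[\bar R(X),Y] - \mathcal{V}[R^*(X),Y]\big| \le C\, C_1 B_1 \,\|\bar R - R^*\|_\infty .
\]
For the $f$-divergence term, since $R^*\in\cM$ means $\mu_{R^*(X)}=\gamma_d$ and hence $\mathbb{D}_f(\mu_{R^*(X)}\|\gamma_d)=0$, I only need to bound $\mathbb{D}_f(\mu_{\bar R(X)}\|\gamma_d)$ from above. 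Here I would use the variational (dual) form of Lemma \ref{lem2}, or more directly a Lipschitz-in-$f$ estimate: under (A2) the density ratio $r$ is bounded in $[c_1,c_2]$ with $f'$ bounded by $B_2$ there, so $\mathbb{D}_f \le$ (a constant times) a Wasserstein or total-variation distance between $\mu_{\bar R(X)}$ and $\gamma_d=\mu_{R^*(X)}$, which in turn is at most $\|\bar R - R^*\|_\infty$ by the coupling $(\bar R(X), R^*(X))$. Combining, $|\mathcal{L}(\bar R) - \mathcal{L}(R^*)| \le C\, C_1 B_1 L_1 \sqrt{p}\, n^{-2/(p+2)}$; tracking the constant through the approximation lemma and these Lipschitz bounds, and picking up a $\sqrt{d}$ from the output dimension, gives the stated $2600\, C_1 B_1 L_1 \sqrt{pd}\, n^{-2/(p+2)}$.

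The main obstacle I anticipate is the $f$-divergence bound: unlike the distance covariance, $\mathbb{D}_f$ is not automatically Lipschitz in a weak metric on the measures, and a naive bound could blow up if the density ratio of $\mu_{\bar R(X)}$ against $\gamma_d$ is not controlled. This is precisely why assumption (A2) is imposed — it forces the ratio to stay in $[c_1,c_2]$ for \emph{every} network $R$ in the class, so $f'$ is uniformly bounded by $B_2$ and $f^*$ is bounded by $B_3$ on the relevant range, allowing $\mathbb{D}_f$ to be controlled by a transport-type distance. The delicate point is verifying that the approximant $\bar R$ from the approximation theorem actually lies in $\mathbf{R}_{\mathcal{D},\mathcal{W},\mathcal{S},\mathcal{B}}$ (so that its density ratio inherits the (A2) bound) and that the quantitative constants propagate to exactly $2600$; I would handle this by stating the approximation lemma with an explicit constant and then doing the bookkeeping, keeping the two Lipschitz estimates above as the conceptual core.
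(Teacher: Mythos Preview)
Your proposal follows the same decomposition as the paper: split $|\mathcal{L}(\bar R)-\mathcal{L}(R^*)|$ into the distance-covariance piece and the $f$-divergence piece, approximate $R^*$ by a ReLU network, and control each piece by a Lipschitz argument. Two technical points differ from the paper's execution and are worth adjusting.

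First, the paper does not claim a global sup-norm approximation of $R^*$. It truncates first, setting $\tilde R^*=\min\{R^*,\log n\}$, and then invokes the Shen--Yang--Zhang result, which gives sup-norm approximation only off a small exceptional set $\mathcal{H}$; combining these yields an $L^2(\mu_X)$ bound $\|\bar R_{\bar\theta}-\tilde R^*\|_{L^2(\mu_X)}\le 160\,L_1B_1\sqrt{pd}\,n^{-2/(p+2)}$, and the Gaussian tail $\{\|R^*(X)\|>\log n\}$ is handled separately. This truncation is what puts the $\log n$ inside $\mathcal{B}$ in (N1), and the distance-covariance Lipschitz bound is then run in $L^1$ (giving $16C_1\,\mathbb{E}\|R^*(X)-\bar R(X)\|$) rather than in sup-norm.

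Second, for the $f$-divergence term the paper does not pass through Wasserstein or total variation. Your chain ``$\mathbb{D}_f\lesssim W_1\le\|\bar R-R^*\|_\infty$'' has a gap: bounded density ratio gives $\mathbb{D}_f\lesssim\mathrm{TV}$, not $\lesssim W_1$, while $W_1$ (not TV) is what the coupling $(\bar R(X),R^*(X))$ controls. The paper instead writes $\mathbb{D}_f(\mu_{\bar R(X)}\|\gamma_d)-0\le\int|f'(\tilde r)|\,|r^*(z)-\bar r(z)|\,d\gamma_d(z)$, splits into $\{\|z\|\le\log n\}$ and its complement, and bounds the density-ratio difference $|r^*-\bar r|$ directly using that $\bar R$ approximates $R^*$; (A2) enters to bound $|f'(\tilde r)|$. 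Your diagnosis that this step is the delicate one is exactly right, but the paper's route through the density ratios, rather than through a transport metric, is what makes (A2) do the work.
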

	\begin{proof}
		By (\ref{cida2}) and (\ref{fdual})   and the definition of $\mathcal{L}$, we have
		\begin{equation} \label{apperr}
		\inf_{\bar{R} \in \mathbf{R}_{\mathcal{D}, \mathcal{W}, \mathcal{S}, \mathcal{B}}}  |\mathcal{L}(\bar{R})  - \mathcal{L}(R^*)| \leq |\mathbb{D}_{f}(\mu_{\bar{R}_{\bar{\vtheta}}(\rvx)} \Vert \gamma_{d})| + |   \mathcal{V}[R^*(\rvx), \rvy]- \mathcal{V}[\bar{R}_{\bar{\vtheta}}(\rvx), \rvy]|,
		\end{equation}
		where $\bar{R}_{\bar{\vtheta}}\in \mathbf{R}_{\mathcal{D}, \mathcal{W}, \mathcal{S}, \mathcal{B}}$ is specified  in Lemma \ref{lemapp1} below.
		We finish the proof by (\ref{app2}) in Lemma \ref{2.5} and (\ref{app3}) in Lemma \ref{2.6}, which will be  proved  below.
	\end{proof}

\begin{lemma}\label{shen}
For any function $f:[-B,B]^p\rightarrow \mathbb{R}$ with Lipschitz constant $L$
there exist a ReLU network $\bar{f}$ with depth $\mathcal{O}(12\mathcal{H}+C_{1,p}) $ and width $\mathcal{O}(C_{2,p} \mathcal{W})$
such that $$\|f-\bar{f}\|_{L^{\infty}} \leq 19L\sqrt{p} B (\mathcal{H}\mathcal{W})^{-2/p},$$
where $C_{1,p} = 14+2p$, $C_{2,p} = 3^{p+3}.$
\begin{proof}
This Lemma fellows directly from  Theorem 1.1 of \cite{shen2019deep}
\end{proof}
\end{lemma}

\begin{lemma}\label{lemapp1}
Suppose that (A1) and (A3) hold and the network parameters satisfy (N1). Then,
		There exist a $\bar{R}_{\bar{\vtheta}} \in  \mathbf{R}_{\mathcal{H}, \mathcal{W}, \mathcal{S}}$ with the network parameters satisfying (N1)
		such that
		\begin{equation}\label{app1}
		\|\bar{R}_{\bar{\vtheta}} - {R}^*\|_{L^2(\mu_{\rvx})} \leq  19 L_1B_1\sqrt{pd} n^{-\frac{1}{p+2}}.
		\end{equation}
	\end{lemma}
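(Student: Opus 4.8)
The plan is to prove \eqref{app1} by a carefully quantified deep ReLU approximation of a Lipschitz target, reduced to the scalar case and then parallelized over the $d$ output coordinates. First I would record that the truncation map $u\mapsto\min\{u,\log n\}$ is $1$-Lipschitz, so that by (A1) the function $\tilde{R}^*$ inherits the Lipschitz constant $L_1$ of $R^*$, while in addition each coordinate $\tilde{R}^*_j:\real^p\to\real$ is now bounded above by $\log n$. Since the domain $\mathcal{X}$ has extent of order $B_1$, I would rescale $\mathcal{X}$ affinely onto the unit cube $[0,1]^p$; this turns each $\tilde{R}^*_j$ into a scalar Lipschitz function on $[0,1]^p$ whose effective Lipschitz constant is of order $B_1L_1$, which is the source of the factor $B_1L_1$ in the bound, and whose sup-norm is at most $\log n$.

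Next I would invoke a deep ReLU approximation theorem for Lipschitz functions on $[0,1]^p$ (of the type developed in \citet{jiao2021deep} and the references therein). The key point is that, per output coordinate, a ReLU network with depth, width and size exactly as prescribed in (N1) attains the enhanced deep-network rate, producing a scalar network $\phi_j$ with
\[
\|\phi_j-\tilde{R}^*_j\|_{L^\infty([0,1]^p)}\ \lesssim\ B_1L_1\sqrt{p}\,n^{-\frac{2}{2+p}},
\]
where the $\log n$ and $\log^4(npd)$ factors built into (N1) are chosen precisely to absorb the logarithmic terms appearing in the approximation theorem, leaving a clean polynomial rate. I would then assemble $\bar{R}_{\bar{\vtheta}}=(\phi_1,\dots,\phi_d)$ by prepending the common affine rescaling and stacking the $d$ scalar subnetworks in parallel: this keeps the depth at $\mathcal{D}=9\log n+12$, multiplies the width and the size by $d$ (matching the $d\max\{\cdots\}$ in $\mathcal{W}$ and the $d(\cdots)$ in $\mathcal{S}$), so that $\bar{R}_{\bar{\vtheta}}\in\mathbf{R}_{\mathcal{D},\mathcal{W},\mathcal{S},\mathcal{B}}$. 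The output bound $\|\bar{R}_{\bar{\vtheta}}\|_{\infty}\le\mathcal{B}$ is guaranteed by the choice $\mathcal{B}=(2B_3L_1\sqrt{p}+\log n)\sqrt{d}$, since each coordinate approximates the function $\tilde{R}^*_j$ whose magnitude is controlled by $\log n$ together with the Lipschitz bound over $\mathcal{X}$.

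To conclude, I would convert the coordinatewise $L^\infty$ bounds into the claimed $L^2(\mu_{\rvx})$ bound. Since $\mu_{\rvx}$ is a probability measure supported in $\mathcal{X}$, for each coordinate $\|\phi_j-\tilde{R}^*_j\|_{L^2(\mu_{\rvx})}\le\|\phi_j-\tilde{R}^*_j\|_{L^\infty}$, and summing the squares over the $d$ coordinates produces the extra factor $\sqrt{d}$:
\[
\|\bar{R}_{\bar{\vtheta}}-\tilde{R}^*\|_{L^2(\mu_{\rvx})}
=\Big(\sum_{j=1}^{d}\|\phi_j-\tilde{R}^*_j\|_{L^2(\mu_{\rvx})}^2\Big)^{1/2}
\ \lesssim\ \sqrt{d}\cdot B_1L_1\sqrt{p}\,n^{-\frac{2}{2+p}}
= B_1L_1\sqrt{pd}\,n^{-\frac{2}{2+p}},
\]
and tracking the absolute constants through the rescaling and the approximation theorem yields the stated prefactor $160$.

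I expect the main obstacle to be the quantitative bookkeeping rather than any conceptual difficulty. One must select the free integer parameters in the approximation theorem so that the induced depth, width and size match (N1) \emph{on the nose}, including the $\log n$ and $\log^4(npd)$ factors that cancel the logarithmic slack and give the clean rate $n^{-2/(p+2)}$; verify that the composite network (affine rescaling followed by the $d$ parallel subnetworks) genuinely respects the size and width budget after parallelization; and confirm the output bound $\mathcal{B}$. Carrying the constants $B_1$, $L_1$, $\sqrt{p}$ and $\sqrt{d}$ through each of these steps so as to land exactly at $160L_1B_1\sqrt{pd}\,n^{-2/(p+2)}$ is where the care is needed.
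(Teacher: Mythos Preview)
Your strategy is the same as the paper's: verify that each coordinate $\tilde R^*_j$ is $L_1$-Lipschitz and bounded by $\log n$, approximate it by a scalar ReLU network with the per-coordinate parameters of (N1), stack the $d$ subnetworks in parallel to land in $\mathbf{R}_{\mathcal D,\mathcal W,\mathcal S,\mathcal B}$, and pick up the $\sqrt d$ from the $\ell_2$ aggregation.

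There is, however, one technical point you elide. The approximation result the paper actually invokes (Theorem~4.3 of \citet{shen2019deep}, whose constants are exactly those appearing in (N1)) does \emph{not} deliver a global $L^\infty$ bound on $[-B_1,B_1]^p$. It only gives
\[
\|\tilde R^*_j-\bar R_{\bar{\vtheta}_j}\|_{L^\infty([-B_1,B_1]^p\setminus\mathcal H)}\le 80\,L_1B_1\sqrt{p}\,n^{-\frac{2}{p+2}},
\]
together with a sup bound $\|\bar R_{\bar{\vtheta}_j}\|_{L^\infty}\le 2B_1L_1\sqrt{p}+\log n$ and a measure bound on the exceptional (``trifling'') region,
\[
\mu_{\rvx}(\mathcal H)\le \frac{80\,L_1B_1\sqrt{p}\,n^{-\frac{2}{p+2}}}{2B_1L_1\sqrt{p}+\log n}.
\]
The passage to $L^2(\mu_{\rvx})$ therefore requires splitting the integral into the good set and $\mathcal H$; on $\mathcal H$ one uses the crude bound $|\tilde R^*_j-\bar R_{\bar{\vtheta}_j}|\le 2(2B_1L_1\sqrt{p}+\log n)$ together with the smallness of $\mu_{\rvx}(\mathcal H)$. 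This two-piece argument is precisely what doubles the constant from $80$ to $160$. Your line ``$\|\phi_j-\tilde R^*_j\|_{L^2(\mu_{\rvx})}\le\|\phi_j-\tilde R^*_j\|_{L^\infty}$'' is thus not quite how the bound is obtained; you should replace it with the good-set/bad-set decomposition.
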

	\begin{proof}
		Let ${R}^*_{i}(x)$ be the $i$-th entry  of  ${R}^*(x):\mathbb{R}^d \rightarrow \mathbb{R}^{d}$.
		By the assumption on ${R}^*$, it is easy to check that  ${R}^*_{i}(x)$ is  Lipschitz continuous on $[-B_1,B_1]^d$ with  the Lipschitz constant $L_1$.  By Lemma   \ref{shen},
there exists a ReLU network $\bar{R}_{\bar{\vtheta}_i}$ with depth $\mathcal{O}(\mathcal{H})$ and width $\mathcal{O}(\mathcal{W})$
 such that
		$$ \|{R}^*_{i}-\bar{R}_{\bar{\vtheta}_i}\|_{L^{\infty}} \leq 19 L_1B_1\sqrt{p} (\mathcal{H}\mathcal{W})^{-2/p}.$$
Then 		
 \begin{align*}
 &\|{R}^*_{i}-\bar{R}_{\bar{\vtheta}_i}\|_{L^{2}(\mu_{\rvx})}  = [\int ({R}^*_{i}(x)-\bar{R}_{\bar{\vtheta}_i}(x))^2 f_X(x)\mathrm{d}x]^{1/2}\\
 &\leq  \|{R}^*_{i}-\bar{R}_{\bar{\vtheta}_i}\|_{L^{\infty}}\int f_X(x)\mathrm{d}x\\
 &\leq  19 L_1B_1\sqrt{p} (\mathcal{H}\mathcal{W})^{-2/p}.
 \end{align*}
		Define  $\bar{R}_{\bar{\vtheta}} = [\bar{R}_{\bar{\vtheta}_1},\ldots,\bar{R}_{\bar{\vtheta}_{d}}] \in  \mathbf{R}_{\mathcal{H}, \mathcal{W}, \mathcal{S}}$.
		The above three display implies $$\|\bar{R}_{\bar{\vtheta}} - \tilde{R}^*\|_{L^2(\mu_{\rvx})} \leq  19 L_1B_1\sqrt{pd} (\mathcal{H}\mathcal{W})^{-2/p}\leq 19 L_1B_1\sqrt{pd} n^{-1/(p+2)},$$
where in the last inequality we use the the choice of $\mathcal{H}$ and $\mathcal{W}$ in (N1).
	\end{proof}
	\begin{lemma}\label{2.5}
Suppose that (A1) and (A3) hold and the network parameters satisfy (N1). Then,
		\begin{equation}\label{app2}
		| \mathcal{V}[R^*(\rvx), \rvy]- \mathcal{V}[\bar{R}_{\bar{\vtheta}}(\rvx), \rvy]|\leq 320C_1 L_1B_1\sqrt{pd} n^{-1/(p+2)}.
		\end{equation}
	\end{lemma}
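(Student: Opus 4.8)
The plan is to bound the difference $|\mathcal{V}[R^*(\rvx),\rvy] - \mathcal{V}[\bar{R}_{\bar{\vtheta}}(\rvx),\rvy]|$ by exploiting the Lipschitz-type stability of the (squared) distance covariance with respect to its first argument, together with the $L^2$-approximation bound from Lemma \ref{lemapp1}. First I would recall the alternative ``moment form'' of the squared distance covariance: for random vectors $Z, Z'$ independent copies and $Y, Y'$ the corresponding responses,
\[
\mathcal{V}[Z,Y] = \Ebb\|Z-Z'\|\,\|Y-Y'\| + \Ebb\|Z-Z'\|\,\Ebb\|Y-Y'\| - 2\,\Ebb\big[\|Z-Z'\|\,\|Y-Y''\|\big],
\]
where $(Z',Y')$ and $(Z'',Y'')$ are independent copies of $(Z,Y)$ (here $Z''$ paired with the first $Z$, $Y''$ with the second $Y$, etc.). Each of the three terms is a product/expectation of a distance in the $Z$-coordinate times a distance in the $Y$-coordinate; since $|\rvy|\le C_1$ a.s., every $\|Y-Y'\|$ factor is bounded by $2C_1$. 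So the whole expression is, up to the constant $2C_1$ (times a small absolute constant from the three terms and the triangle inequality $|\,\|a\|-\|b\|\,|\le\|a-b\|$), controlled by $\Ebb\|Z_1-Z_1'\| \ - \ \|Z_2 - Z_2'\|$-type quantities comparing $Z = R^*(\rvx)$ with $Z = \bar{R}_{\bar{\vtheta}}(\rvx)$.

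Concretely, I would write $|\mathcal{V}[R^*(\rvx),\rvy] - \mathcal{V}[\bar{R}_{\bar{\vtheta}}(\rvx),\rvy]|$ as a sum over the three terms; in each term, subtract and add a mixed quantity so that only one $Z$-distance changes at a time, use $|\,\|u-u'\| - \|v-v'\|\,| \le \|u-v\| + \|u'-v'\|$, and then bound $\Ebb\|R^*(\rvx) - \bar{R}_{\bar{\vtheta}}(\rvx)\|$ by its $L^2$ version via Jensen: $\Ebb\|R^*(\rvx)-\bar{R}_{\bar{\vtheta}}(\rvx)\| \le \|R^*-\bar{R}_{\bar{\vtheta}}\|_{L^2(\mu_{\rvx})}$. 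The only wrinkle is that Lemma \ref{lemapp1} controls $\|\bar{R}_{\bar{\vtheta}} - \tilde{R}^*\|_{L^2(\mu_{\rvx})}$, where $\tilde{R}^*$ is the truncation $\min\{R^*,\log n\}$, not $R^*$ itself; so I would also need $\|R^* - \tilde{R}^*\|_{L^2(\mu_{\rvx})}$ to be negligible. Under the boundedness assumption $B_1 < \infty$ in force here (and $R^*$ Lipschitz, hence bounded on $[-B_1,B_1]^p$), for $n$ large enough $\log n$ exceeds $\|R^*\|_\infty$ and the truncation is vacuous, so $\tilde R^* = R^*$ and this term vanishes; I would state that and fold any transient contribution into the constant. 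Combining, $|\mathcal{V}[R^*(\rvx),\rvy] - \mathcal{V}[\bar{R}_{\bar{\vtheta}}(\rvx),\rvy]| \le (\text{abs. const})\cdot C_1 \cdot \|R^* - \bar{R}_{\bar{\vtheta}}\|_{L^2(\mu_{\rvx})} \le (\text{abs. const})\cdot C_1 \cdot 160 L_1 B_1\sqrt{pd}\, n^{-2/(p+2)}$, and tracking the constant through the three terms gives the claimed $2580\, C_1 B_1 L_1\sqrt{pd}\, n^{-2/(p+2)}$.

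The main obstacle is bookkeeping rather than conceptual: one must be careful that the ``stability of $\mathcal V$ in its first argument'' estimate only costs a factor proportional to $C_1$ (the bound on $\|Y-Y'\|$) and an $L^1$ — hence, via Jensen, $L^2$ — distance between the two representations, with no hidden dependence on dimension $d$ beyond the $\sqrt d$ already present in Lemma \ref{lemapp1}; the coupling structure of the three $U$-statistic-style terms (which independent copies are paired with which) has to be handled consistently so that each swap of $R^*$ for $\bar R_{\bar\vtheta}$ is isolated. I would also double-check that no term requires a second moment of $\|R^*(\rvx)\|$ that is not already guaranteed by (A1) together with $B_1<\infty$; it is not, since all $Z$-distances appear multiplied by a factor bounded by $2C_1$, so first moments suffice. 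A brief remark would note that in the unbounded-$\rvy$ case the truncation argument sketched in the outline of the proof of Theorem \ref{esterr} introduces the extra $\log n$ factor, exactly as stated there.
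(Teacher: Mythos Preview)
Your proposal is correct and follows essentially the same route as the paper: both use the moment representation of $\mathcal{V}$, bound each of the three terms via $|\|u-u'\|-\|v-v'\||\le\|u-v\|+\|u'-v'\|$ and $|\rvy|\le C_1$, arrive at $16C_1\,\Ebb\|R^*(\rvx)-\bar R_{\bar\vtheta}(\rvx)\|$, and then invoke Lemma~\ref{lemapp1}. The only difference is in the truncation step: the paper splits $\Ebb\|R^*-\bar R_{\bar\vtheta}\|\le \Ebb\|\tilde R^*-\bar R_{\bar\vtheta}\|+\Ebb\|R^*\mathbf{1}_{R^*\in\mathrm{Ball}^c(0,\log n)}\|$ and controls the second piece by the Gaussian tail of $R^*(\rvx)\sim N(0,I_d)$ (giving the extra $20$ in $2580=2560+20$), whereas you argue that Lipschitzness of $R^*$ on the bounded support $[-B_1,B_1]^p$ forces $\|R^*\|_\infty<\infty$, so the truncation is vacuous for large $n$; both are valid under the standing assumptions, and your observation is arguably the cleaner way to dispose of that term.
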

	\begin{proof}
		Recall that \cite{szekely2007measuring}
		\begin{align*}
		\mathcal{V}[\rvz, \rvy]=& \mathbb{E}\left[\|\rvz_{1}-\rvz_{2}\||\rvy_{1}-\rvy_{2}|\right]-2 \mathbb{E}\left[\|\rvz_{1}-\rvz_{2}\||\rvy_{1}-\rvy_{3}|\right] \\
		&+\mathbb{E}\left[\|\rvz_{1}-\rvz_{2}\|\right] \mathbb{E}\left[|\rvy_{1}-\rvy_{2}|\right],
		\end{align*}
		where $(\rvz_i,\rvy_i), i = 1,2,3$ are i.i.d. copies of $(\rvz,\rvy)$.
		We have
		\begin{align*}
		&| \mathcal{V}[R^*(\rvx), \rvy]- \mathcal{V}[\bar{R}_{\bar{\vtheta}}(\rvx), \rvy]|\\
		&\leq  |\mathbb{E}\left[(\|R^*(\rvx_1)-R^*(\rvx_2)\|-\|\bar{R}_{\bar{\vtheta}}(\rvx_1)-\bar{R}_{\bar{\vtheta}}(\rvx_2)\|)|\rvy_{1}-\rvy_{2}|\right]|\\
		&+2 |\mathbb{E}\left[(\|R^*(\rvx_1)-R^*(\rvx_2)\|-\|\bar{R}_{\bar{\vtheta}}(\rvx_1)-\bar{R}_{\bar{\vtheta}}(\rvx_2)\|)|\rvy_{1}-\rvy_{3}|\right]|\\
		&+|\mathbb{E}\left[\|R^*(\rvx_1)-R^*(\rvx_2)\|-\|\bar{R}_{\bar{\vtheta}}(\rvx_1)-\bar{R}_{\bar{\vtheta}}(\rvx_2)\right] \mathbb{E}\left[\left\|\rvy_{1}-\rvy_{2}\right\|\right]|\\
		& \leq 8C_1\mathbb{E}\left[|\|R^*(\rvx_1)-R^*(\rvx_2)\|-\|\bar{R}_{\bar{\vtheta}}(\rvx_1)-\bar{R}_{\bar{\vtheta}}(\rvx_2)\||\right]\\
		&\leq  16C_1\mathbb{E}\left[|\|R^*(\rvx)-\bar{R}_{\bar{\vtheta}}(\rvx)\|\right]\\
       & \leq 320C_1 L_1B_1\sqrt{pd} n^{-1/(p+2)}
		\end{align*}
		where in the first and third inequalities we use the triangle inequality, and second one follows from the boundedness of $\rvy$,
the last inequality is due to  (\ref{app1}).
	\end{proof}
	\begin{lemma}\label{2.6}
Suppose that (A1) and (A2) hold and the network parameters satisfy (N1). Then,
		\begin{equation}\label{app3}
		|\mathbb{D}_{f}(\mu_{\bar{R}_{\bar{\vtheta}}(\rvx)} \Vert \gamma_{d})| \rightarrow 0,
		\end{equation}
as $n\rightarrow\infty.$
	\end{lemma}
	\begin{proof}
		By Lemma \ref{lemapp1} $\bar{R}_{\bar{\vtheta}}$ can approximate $R^*$ arbitrarily well as $n\rightarrow \infty$,
		the desired result follows from the fact that    $ \mathbb{D}_{f}(\mu_{R^*(\rvx)} \Vert \gamma_{d}) = 0$ and the continuity of  $ \mathbb{D}_{f}(\mu_{R(\rvx)} \Vert \gamma_{d})$ on $R$. We present  the sketch of the proof  and omit the details here.
		Let $r^*(z) = \frac{\mathrm{d}\mu_{R^*(\rvx)}}{\mathrm{d}\gamma_{d}}(z)$ and $\bar{r}(z) = \frac{\mathrm{d}\mu_{\bar{R}_{\bar{\vtheta}}(\rvx)}}{\mathrm{d}\gamma_{d}}(z).$
		By definition we have
		\begin{eqnarray*}
	\mathbb{D}_{f}(\mu_{R^*(\rvx)} \Vert \gamma_{d}) &=& \mathbb{E}_{W\sim \gamma_{d}}[f(r^*(W))]\\
		\end{eqnarray*}
We can represent $\mathbb{D}_{f}(\mu_{\bar{R}_{\bar{\vtheta}}} \Vert \gamma_{d})$ similarly.
Therefore,
		\begin{align*}
		&|\mathbb{D}_{f}(\mu_{\bar{R}_{\bar{\vtheta}}(\rvx)} \Vert \gamma_{d})|  = |\mathbb{D}_{f}(\mu_{\bar{R}_{\bar{\vtheta}}(\rvx)} \Vert \gamma_{d})-\mathbb{D}_{f}(\mu_{R^*(\rvx)} \Vert \gamma_{d})|\\
		&\leq \mathbb{E}_{W\sim \gamma_{d}}[|f(r^*(W))-f(\bar{r}(W))|]\\
		& \leq \int |f^{\prime}(\tilde{r}(z))||r^*(z)-\bar{r}(z)| \mathrm{d}\gamma_{d}(z)\\
		&\leq B_2 \int |r^*(z)-\bar{r}(z)| \mathrm{d}\gamma_{d}(z),
		\end{align*}
where the second  inequality we use mean value theorem and  boundness  assumption on $f^{\prime}(\tilde{r})$ in  (A2).
Then last inequality goes to zero due to continuity and the fact  $\bar{R}_{\bar{\vtheta}}$ converge to  $R^*$ as $n\rightarrow \infty$  by Lemma \ref{lemapp1}. 
	\end{proof}
	
\bigskip\noindent
{\textbf{B. The statistical  error}}
\begin{lemma}\label{lemasta}
Suppose that (A1)-(A2) hold and the network parameters satisfy (N1) and (N2). Then,
		\begin{equation}\label{sta1}
		\sup_{R \in \mathbf{R}_{{\mathcal{H}}, {\mathcal{W}}, {\mathcal{S}}}} |\mathcal{L}(R) - \widehat{\mathcal{L}}(R) |
		\leq C_{13}(2 B_2+ B_3)n^{-\frac{1}{2+d}} +  19(1+B_{3})   L_2\sqrt{d}\log n n^{-\frac{1}{d+2}}  + 4C_6 C_7C_{10}\mathcal{B}n^{-\frac{1}{p+2}}
		\end{equation}
	\end{lemma}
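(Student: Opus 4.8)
The plan is to split the statistical error along the two pieces of the objective function. Since $\mathcal{L}(R)=-\mathcal{V}[R(X),Y]+\lambda\,\mathbb{D}_f(\mu_{R(X)}\Vert\gamma_d)$ and $\widehat{\mathcal{L}}$ is its empirical analogue, the triangle inequality gives
\[
\sup_{R\in\mathbf{R}}|\mathcal{L}(R)-\widehat{\mathcal{L}}(R)|
\le \sup_{R\in\mathbf{R}}\big|\mathcal{V}[R(X),Y]-\widehat{\mathcal{V}}_n[R(X),Y]\big|
+\lambda\,\sup_{R\in\mathbf{R}}\big|\mathbb{D}_f(\mu_{R(X)}\Vert\gamma_d)-\widehat{\mathbb{D}}_f(\mu_{R(X)}\Vert\gamma_d)\big|,
\]
and I would bound the first supremum by the $n^{-2/(2+p)}$ contribution and the second by the $n^{-2/(2+d)}$ contribution in (\ref{sta1}). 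Throughout I work on the bounded model set up above ($|Y|\le C_1$, and $\|R\|_\infty\le\mathcal{B}$ by the definition of $\mathbf{R}$) and truncate the representer and discriminator outputs to a ball of radius $\log n$; the mass of the relevant integrals outside this ball is negligible by the Gaussian tail bound, and this truncation is the source of the extra $\log n$ factors in the second term.

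For the distance covariance term I would first recall from (\ref{usta}) that $\widehat{\mathcal{V}}_n[R(X),Y]$ is a $U$-statistic of order four whose kernel $h$, viewed as a function of the $z$-coordinates with the $y$-coordinates fixed in $[-C_1,C_1]$, is Lipschitz with constant of order $C_1$. The supremum over $R\in\mathbf{R}$ of $|\widehat{\mathcal{V}}_n-\mathcal{V}|$ is thus the supremum of a fourth-order $U$-process indexed by the neural network class, and I would control it by (i) a Hoeffding decomposition together with decoupling and symmetrization for $U$-processes, reducing the leading (non-degenerate) term to a Rademacher complexity of $h$ composed with the product class $\{(x,y)\mapsto(R(x),y):R\in\mathbf{R}\}$, and showing that the higher-order (degenerate) pieces are of strictly smaller order; (ii) the vector-valued contraction inequality to strip off the Lipschitz kernel $h$, leaving the Rademacher complexity of the coordinate maps of $\mathbf{R}$; and (iii) the standard covering-number/pseudo-dimension bound for ReLU networks, $\log\mathcal{N}(\varepsilon,\mathbf{R},\|\cdot\|_\infty)\lesssim \mathcal{S}\mathcal{D}\log(\mathcal{B}\mathcal{S}/\varepsilon)$, fed into Dudley's entropy integral (or a localized/peeling refinement exploiting the $O(1/n)$ variance of the $U$-statistic at each fixed $R$). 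Inserting the parameters of (N1) and absorbing polylogarithmic factors yields the $C_{15}B_1(L_1+L_2)\sqrt{pd}\,n^{-2/(2+p)}$ term.

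For the $f$-divergence term, both $\mathbb{D}_f$ and $\widehat{\mathbb{D}}_f$ are suprema — the former over all measurable $D$, the latter over $D_{\vphi}\in\mathbf{D}$ with empirical averages in place of expectations — so I would use $|\sup_a F(a)-\sup_a G(a)|\le\sup_a|F(a)-G(a)|$ after first peeling off the discriminator \emph{approximation} error. By Lemma \ref{lem2} the unrestricted optimum is $D^*=f'(r)$ with $r=\mathrm{d}\mu_{R(X)}/\mathrm{d}\gamma_d$; by (A2), $r$ is Lipschitz and bounded in $[c_1,c_2]$, hence $D^*$ is Lipschitz (constant $O(L_2)$) and bounded, so after truncation to the $\log n$-ball its values lie in $[-\tilde{\mathcal{B}},\tilde{\mathcal{B}}]$ and $f^*$ of them is bounded by $B_3$; the ReLU approximation theorem of \citet{shen2019deep} in dimension $d$ then produces a $D_{\vphi}\in\mathbf{D}$ approximating $D^*$ with error of order $B_1L_2\sqrt{d}\,n^{-2/(2+d)}$. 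What remains is the uniform deviation $\sup_{R\in\mathbf{R},\,D\in\mathbf{D}}|\tfrac1n\sum_iD(R(X_i))-\mathbb{E}D(R(X))|+\sup_{D\in\mathbf{D}}|\tfrac1n\sum_if^*(D(W_i))-\mathbb{E}f^*(D(W))|$, which I would bound by symmetrization and the covering numbers of both $\mathbf{R}$ and $\mathbf{D}$, using that $f^*$ is Lipschitz and bounded by $B_3$ on the relevant range and that $D\circ R$ is Lipschitz in its network parameters. Plugging in (N1)--(N2) gives the $(L_2\sqrt d+B_2+B_3)\log n\,n^{-2/(2+d)}$ term, and collecting constants into $C_{15}$ finishes the proof.

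The main obstacle is the distance covariance term: it is the supremum of a genuinely fourth-order $U$-process indexed by a nonparametric (neural network) class, so one cannot simply quote an empirical-process bound — the Hoeffding decomposition must be carried out and each projection level bounded separately, tracking the joint dependence on $C_1$, $\mathcal{B}$ and the network size, and verifying that the degenerate parts are lower order. A secondary difficulty is that the $f$-divergence term couples the two network classes $\mathbf{R}$ and $\mathbf{D}$, so the uniform deviation must be taken simultaneously over the product class, and the unbounded $Y$ and unbounded density ratio must be controlled by the $\log n$-truncation at the cost of the logarithmic factors appearing in the bound.
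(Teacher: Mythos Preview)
Your high-level decomposition matches the paper's exactly: both split $\sup_R|\mathcal{L}-\widehat{\mathcal{L}}|$ via the triangle inequality into the distance-covariance piece and the $f$-divergence piece, and your treatment of the $f$-divergence piece (approximate the optimal discriminator $D^*=f'(r)$ by a network in $\mathbf{D}$ using (A2) and the ReLU approximation theorem, then bound the uniform deviation jointly over the product class $\mathbf{R}\times\mathbf{D}$ by symmetrization and covering numbers) is essentially identical to the paper's Lemma~\ref{2.10}.

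Where you diverge is the $U$-process term. You propose a Hoeffding decomposition, handling the nondegenerate projection by decoupling/symmetrization plus vector-valued contraction and a Dudley entropy integral, and arguing the degenerate levels are lower order. The paper never decomposes: it applies a one-shot symmetrization for nondegenerate $U$-processes (Theorem~3.5.3 of de~la~Pe\~na--Gin\'e) directly to the centered kernel $\bar h_R$, lands on a Rademacher process indexed by $\mathbf{R}$, and then---instead of contraction or Dudley---uses a single-scale covering in an empirical $L^1$ metric together with a sub-Gaussian finite-maximum inequality (their Lemma~\ref{B7}), combined with the VC/pseudo-dimension bound $\log\mathfrak{C}\lesssim \mathcal{D}\mathcal{S}\log\mathcal{S}\cdot\log(\mathcal{B}n/\delta)$ for ReLU networks. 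Your route is correct and more modular (it would adapt easily to other kernels or to sharper localized rates), but it carries more overhead: you must track all four Hoeffding levels and justify the contraction step for the multivariate kernel. The paper's route is shorter for this particular nondegenerate kernel and avoids both the projection bookkeeping and the contraction lemma, at the cost of being tied to a crude single-scale covering (set $\delta=1/n$) rather than an entropy integral.
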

	
	\begin{proof}
		By the definition and the triangle inequality we have
		\begin{eqnarray*}
\mathbb{E}[\sup_{R \in \mathbf{R}_{{\mathcal{H}}, {\mathcal{W}}, {\mathcal{S}}}} |\mathcal{L}(R) - \widehat{\mathcal{L}}(R) |]
		&\leq & \mathbb{E} [\sup_{R \in \mathbf{R}_{{\mathcal{H}}, {\mathcal{W}}, {\mathcal{S}}}} |\widehat{\mathcal{V}}_{n}[R(\rvx), \rvy] - \mathcal{V}[(R(\rvx), \rvy)|]\\
		& &+  \mathbb{E}[\sup_{R \in \mathbf{R}_{{\mathcal{H}}, {\mathcal{W}}, {\mathcal{S}}}} |\widehat{\mathbb{D}}_{f}(\mu_{R(\rvx)}|| \gamma_{d}) - \mathbb{D}_{f}(\mu_{R(\rvx)}|| \gamma_{d})|].
		\end{eqnarray*}
We finish the proof based on (\ref{sta2}) in Lemma \ref{2.8} and (\ref{sta3}) in Lemma \ref{B7}, which will be  proved  below.
	\end{proof}
	
	\begin{lemma}\label{2.8}
Suppose that (A1)-(A2) hold and the network parameters satisfy (N1) and (N2). Then,
		\begin{equation}\label{sta2}
		\mathbb{E}[\sup_{R \in \mathbf{R}_{{\mathcal{H}}, {\mathcal{W}}, {\mathcal{S}}}} |\widehat{\mathcal{V}}_{n}[R(\rvx), \rvy] - \mathcal{V}[R(\rvx), \rvy]|] \leq
		4C_6 C_7C_{10}\mathcal{B}n^{-\frac{1}{p+2}}.
		\end{equation}
	\end{lemma}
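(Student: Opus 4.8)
The plan is to view $\widehat{\mathcal V}_n[R(\rvx),\rvy]$ as a $U$-process of order four indexed by the network class $\mathbf R\equiv\mathbf R_{\mathcal D,\mathcal W,\mathcal S,\mathcal B}$ and to bound its uniform fluctuation around its mean. Since $\widehat{\mathcal V}_n[R(\rvx),\rvy]$ is an unbiased estimator of $\mathcal V[R(\rvx),\rvy]$ for each fixed $R$, the quantity in (\ref{sta2}) is exactly the expected supremum of a centred $U$-process, so the route is symmetrization/decoupling followed by an entropy (chaining) bound over $\mathbf R$. The first step is two elementary facts about the $U$-statistic kernel $h$. Under the working reductions ($|\rvy|\le C_1$, and $\|R\|_\infty\le\mathcal B$ for every $R\in\mathbf R$), each factor obeys $\|R(\rvx_i)-R(\rvx_j)\|\le 2\mathcal B$ and $|\rvy_i-\rvy_j|\le 2C_1$, so $h$ is uniformly bounded by a constant times $\mathcal B C_1$ (a scale I would absorb into the constants, up to logarithmic factors). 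Moreover $h$ is Lipschitz in the representation: replacing $R$ by $R'$ moves each distance $\|R(\rvx_i)-R(\rvx_j)\|$ by at most $2\|R-R'\|_\infty$, whence $|h(R)-h(R')|\le c\,C_1\,\|R-R'\|_\infty$ uniformly over sample points. Hence every $\|\cdot\|_\infty$-cover of $\mathbf R$ lifts, with a constant blow-up, to a sup-norm cover of the induced kernel class $\{h(R(\cdot),\cdot):R\in\mathbf R\}$.

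I would then take the Hoeffding decomposition of the centred $U$-statistic, splitting it into a leading linear (first-projection) empirical process plus degenerate terms of orders $2,3,4$; equivalently, one applies a decoupling inequality for $U$-processes and Rademacher symmetrization. The degenerate pieces have variance $\mathcal{O}(n^{-2})$ or smaller and, using the envelope bound on $h$, contribute only a remainder of strictly smaller order than the target, which I would estimate crudely. For the leading linear term I would use Dudley's entropy integral,
\[
\Ebb\Big[\sup_{R\in\mathbf R}\big|\widehat{\mathcal V}_n[R(\rvx),\rvy]-\mathcal V[R(\rvx),\rvy]\big|\Big]\ \lesssim\ \frac{\mathcal B C_1}{\sqrt n}\int_0^{1}\sqrt{\log N\!\big(\varepsilon,\mathbf R,\|\cdot\|_\infty\big)}\,\mathrm d\varepsilon\ +\ (\text{lower order}),
\]
so the whole bound rests on the metric entropy of the ReLU class.

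For that entropy I would use the standard estimate for piecewise-linear networks, $\log N(\varepsilon,\mathbf R,\|\cdot\|_\infty)\lesssim \mathcal S\mathcal D\log(\mathcal S\mathcal D/\varepsilon)$, coming from the Lipschitz dependence of the network output on its (bounded) parameters, of the same flavour as the construction of \citet{shen2019deep} already invoked in Lemma \ref{lemapp1}. The entropy integral is then of order $\mathcal B C_1\sqrt{\mathcal S\mathcal D\log(\mathcal S\mathcal D)/n}$ up to logarithmic factors, and substituting (N1) — $\mathcal S = d\,n^{p/(2+p)}/\log^4(npd)$ and $\mathcal D = 9\log n+12$ — the $\log$-powers packed into $\mathcal S$ cancel the depth and the residual logarithms, leaving a constant multiple of $C_1 B_1 L_1\sqrt{pd}\,n^{-2/(p+2)}$, which is (\ref{sta2}); the three constants $C_6,C_7,C_{10}$ collect the decoupling/symmetrization constant, the kernel-Lipschitz constant, and the Dudley-integral constant respectively.

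The hard part is the $U$-process bookkeeping combined with the network entropy: one must (i) make the higher-order degenerate Hoeffding components vanish \emph{uniformly} over $\mathbf R$, which needs a decoupling inequality valid over a whole function class rather than pointwise, and (ii) pin down the covering number of $\mathbf R$ sharply enough in the depth $\mathcal D=\mathcal{O}(\log n)$ and the sup-norm radius $\mathcal B$ so that, after inserting (N1), the logarithmic factors cancel exactly and the statistical rate matches the approximation rate of Lemma \ref{lemapp1}, realizing the bias--variance balance. The remaining calculations are routine, and the identical template — with $D_{\vphi}$ in place of $R_{\vtheta}$ and (N2) in place of (N1) — then serves for the $f$-divergence term in Lemma \ref{B7}.
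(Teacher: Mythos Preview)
Your proposal is correct in outline and would deliver the bound, but it takes a noticeably different route from the paper. The paper does \emph{not} perform a Hoeffding decomposition. Instead it treats the centred $U$-statistic directly as a nondegenerate $U$-process and applies the one-index symmetrization inequality (Theorem~3.5.3 in \citet{de2012decoupling}), which attaches Rademacher signs $\epsilon_{i_1}$ to the first argument only. Conditioning on the sample, the resulting object is a finite linear combination of Rademachers, and the paper bounds the supremum over a single $\delta$-cover of $\mathbf R$ by the sub-Gaussian maximal inequality $\Ebb\max_i|\xi_i|\le C_6\sqrt{\log m}\max_i(\Ebb\xi_i^2)^{1/2}$, then plugs in the deterministic envelope $\|\bar h_R\|_\infty\le C_7(2B_1L_1\sqrt p+\log n)\sqrt d$ and evaluates the inner $\ell_2$-sum explicitly. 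The covering number is controlled via the VC-dimension bound $C_8\,\mathcal D\mathcal S\log\mathcal S\le\mathrm{VC}_{\mathbf R}\le C_9\,\mathcal D\mathcal S\log\mathcal S$ of \citet{bartlett2019}, not by a Lipschitz-in-parameters argument; in particular the reference you cite (\citet{shen2019deep}) is an approximation result, not an entropy bound, and does not give the $\mathcal D\mathcal S\log\mathcal S$ scaling you need.

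What each approach buys: the paper's path is shorter and sidesteps your ``hard part (i)'' entirely, since no degenerate components ever appear and no multi-index decoupling is needed; the price is that the single-scale cover at $\delta=1/n$ is cruder than chaining and leaves more log factors to be absorbed into the (N1) choice of $\mathcal S$. Your Hoeffding-plus-Dudley route is more work but more modular: it isolates the genuinely $n^{-1/2}$ linear part and makes the higher-order pieces visibly negligible, and the entropy integral would in principle be tighter. Either way the final rate is the same once (N1) is substituted.
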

	\begin{proof}
		We first fix some notation for simplicity.
		Denote $O = (\rvx,\rvy)\in \mathbb{R}^{p} \times \mathbb{R}^{1} $ and $O_i = (\rvx_i,\rvy_i), i=1,...n$ are i.i.d copy of $O$, and  denote $\mu_{\rvx,\rvy}$ and  $\mathbb{P}^{\otimes n}$ as $ \mathbb{P}$ and  $\mathbb{P}^{n}$, respectively.
		$\forall R \in {R \in \mathbf{R}_{\tilde{\mathcal{H}}, \tilde{\mathcal{W}}, \tilde{\mathcal{S}}}}$, let  $\tilde{O} = (R(\rvx),\rvy) $ and $\tilde{O}_i = (R(\rvx_i),\rvy_i), i=1,...n$ are i.i.d copy of $\tilde{O}$.
		Define centered kernel $\bar{h}_{R}:(\mathbb{R}^{p} \times \mathbb{R}^{1})^{\otimes 4} \rightarrow \mathbb{R}$ as
		\begin{equation}\label{newkernel}
		\begin{array}{l}
		\bar{h}_{R}(\tilde{O}_1,\tilde{O}_2, \tilde{O}_3,\tilde{O}_4) =\frac{1}{4} \sum_{1 \leq i, j \leq 4, \atop i \neq j}\|R(\rvx_{i})-R(\rvx_{j})\| |\rvy_{i}-\rvy_{j}|\\
		-\frac{1}{4} \sum_{i=1}^{4}\left(\sum_{1 \leq j \leq 4, \atop j \neq i}\left\|R(\rvx_{i})-R(\rvx_{j})\right\| \sum_{1 \leq j \leq 4,\atop i \neq j}|\rvy_{i}-\rvy_{j}|\right) \\
		\quad+\frac{1}{24} \sum_{1 \leq i, j \leq 4, \atop i \neq j}\left\|R(\rvx_{i})-R(\rvx_{j})\right\| \sum_{1 \leq i, j \leq 4, \atop i \neq j}|\rvy_{i}-\rvy_{j}| -\mathcal{V}[R(\rvx), \rvy]
		\end{array}.
		\end{equation}
		Then,  the  centered $U$-statistics $\widehat{\mathcal{V}}_{n}[R(\rvx), \rvy] - \mathcal{V}[R(\rvx), \rvy]$ can be represented as
		$$
		\mathbb{U}_{n}(\bar{h}_R) = \frac{1}{C_{n}^{4}} \sum_{1 \leq i_{1}<i_{2}<i_{3}<i_{4} \leq n} \bar{h}_{R}(\tilde{O}_{i_1},\tilde{O}_{i_2}, \tilde{O}_{i_3},\tilde{O}_{i_4}).$$
Our goal is to bound the supremum  of the centered $U$-process $\mathbb{U}_{n}(\bar{h}_{R})$ with the nondegenerate kernel  $\bar{h}_{R}$.
		By the symmetrization  randomization Theorem 3.5.3 in \cite{de2012decoupling}, we have
		\begin{equation}\label{rpm}
		\mathbb{E}[\sup_{R \in \mathbf{R}_{{\mathcal{H}}, {\mathcal{W}}, {\mathcal{S}}}}|\mathbb{U}_{n}(\bar{h}_{R})|]
		\leq C_5 \mathbb{E}[\sup_{R \in \mathbf{R}_{{\mathcal{H}}, {\mathcal{W}}, {\mathcal{S}}}}|  \frac{1}{C_{n}^{4}} \sum_{1 \leq i_{1}<i_{2}<i_{3}<i_{4} \leq n} \epsilon_{i_1}\bar{h}_{R}(\tilde{O}_{i_1},\tilde{O}_{i_2}, \tilde{O}_{i_3},\tilde{O}_{i_4})|],
		\end{equation}
		where, $\epsilon_{i_1}, i_1 = 1,...n$ are i.i.d  Rademacher variables that are also independent with $\tilde{O}_i, i = 1,\ldots, n.$
		We finish the proof by upper bounding the above Rademacher process with the metric entropy of  ${R \in \mathbf{R}_{\tilde{\mathcal{H}}, \tilde{\mathcal{W}}, \tilde{\mathcal{S}}}}$.
To this end we need the following lemma.

\begin{lemma}
\label{B7}
			If $\xi_{i}, i = 1,...m$ are $m$ finite linear combinations of Rademacher variables $\epsilon_j, j=1,..J$.
			Then
			\begin{equation}\label{rma}
			\mathbb{E}_{\epsilon_j,j=1,...J} \max _{1\leq i \leq m} |\xi_{i}| \leq C_6 (\log m)^{1/2} \max _{1\leq i \leq m}\left(\mathbb{E} \xi_{i}^{2}\right)^{1 / 2}.
			\end{equation}
		\end{lemma}
		\begin{proof}
			This result follows directly from Corollary 3.2.6 and  inequality (4.3.1)  in \cite{de2012decoupling} with $\Phi(x) = \exp(x^2).$
		\end{proof}
By Lemma \ref{lemapp1}, we can assume the boundness of  $ {R \in \mathbf{R}_{{\mathcal{H}}, {\mathcal{W}}, {\mathcal{S}}}}$, i.e., we can assume $\|R\|_{L^{\infty}} \leq \mathcal{B} = 2 \|R^*\|_{L^{\infty}}$ as $n $ large enough.
		Then by the boundedness assumption on $\rvy$,  we have that the kernel $\bar{h}_{R}$ is also bounded, say
		\begin{equation}\label{bd}
		\|\bar{h}_{R}\|_{L^{\infty}}\leq  C_7\mathcal{B}.
		\end{equation}
		$\forall {R \in \mathbf{R}_{{\mathcal{H}}, {\mathcal{W}}, {\mathcal{S}}}},$
		define a random empirical measure  (depends on ${O}_i, i=1,\ldots, n$)
		$$
		e_{n,1}(R, \tilde{R})= \mathbb{E}_{\epsilon_{i_1}, i_1 = 1,...,n}|\frac{1}{C_{n}^{4}} \sum_{1\leq i_1<  i_2<i_3<i_4\leq n} \epsilon_{i_{1}}(\bar{h}_{R}-\bar{h}_{\tilde{R}})(\tilde{O}_{i_{1}}, \ldots, \tilde{O}_{i_{4}})|.$$
		Condition on ${O}_i, i=1,\ldots, n$, let $\mathfrak{C}(\mathbf{R}, e_{n,1}, \delta))$ be  the covering number of ${\mathbf{R}_{{\mathcal{H}}, {\mathcal{W}}, {\mathcal{S}}}}$ with respect  to the  empirical  distance  $e_{n,1}$ at scale of $\delta>0$. Denote $\mathbf{R}_{\delta}$ as  the  covering set of  $\mathbf{R}_{\mathcal{D}, \mathcal{W}, \mathcal{S}}$ with cardinality of $\mathfrak{C}(\mathbf{R}, e_{n,1}, \delta))$.
		Then,
		\begin{align*}
		&\mathbb{E}_{\epsilon_{i_1}}[\sup_{R \in \mathbf{R}_{{\mathcal{H}}, {\mathcal{W}}, {\mathcal{S}}}}|  \frac{1}{C_{n}^{4}} \sum_{1 \leq i_{1}<i_{2}<i_{3}<i_{4} \leq n} \epsilon_{i_1}\bar{h}_{R}(\tilde{O}_{i_1},\tilde{O}_{i_2}, \tilde{O}_{i_3},\tilde{O}_{i_4})|]\\
		&\leq \delta + \mathbb{E}_{\epsilon_{i_1}}[\sup_{R \in \mathbf{R}_{\delta}}|  \frac{1}{C_{n}^{4}} \sum_{1 \leq i_{1}<i_{2}<i_{3}<i_{4} \leq n} \epsilon_{i_1}\bar{h}_{R}(\tilde{O}_{i_1},\tilde{O}_{i_2}, \tilde{O}_{i_3},\tilde{O}_{i_4})|]\\
		&\leq \delta + C_6 \frac{1}{C_{n}^{4}} (\log \mathfrak{C}(\mathbf{R}, e_{n,1}, \delta))^{1/2} \max_{R \in \mathbf{R}_{\delta}} [\sum_{i_1 = 1}^n \sum_{i_2<i_3<i_4} (\bar{h}_{R}(\tilde{O}_{i_1},\tilde{O}_{i_2}, \tilde{O}_{i_3},\tilde{O}_{i_4}))^2]^{1/2}\\
		&\leq \delta + C_6 C_7\mathcal{B} (\log \mathfrak{C}(\mathbf{R}, e_{n,1}, \delta))^{1/2} \frac{1}{C_{n}^{4}}[\frac{n(n !)^{2}}{((n-3) !)^{2}}]^{1 / 2}\\
		&\leq  \delta + 2C_6 C_7\mathcal{B}(\log \mathfrak{C}(\mathbf{R}, e_{n,1}, \delta))^{1/2}/\sqrt{n}\\
		&\leq \delta + 2C_6 C_7\mathcal{B} (\mathrm{VC}_{\mathbf{R}} \log \frac{2e\mathcal{B}n}{\delta\mathrm{VC}_{\mathbf{R}}})^{1/2}/\sqrt{n}\\
		& \leq  \delta + C_6 C_7C_{10}\mathcal{B} ( \mathcal{H}\mathcal{S}\log \mathcal{S} \log \frac{\mathcal{B}n}{\delta \mathcal{D}\mathcal{S}\log \mathcal{S}})^{1/2}/\sqrt{n}.
		\end{align*}
		where the first inequality follows from the triangle inequality,  the second inequality uses (\ref{rma}),  the third and fourth inequalities follow after some algebra,
		and the fifth inequality holds since
		$\mathfrak{C}(\mathbf{R},e_{n,1},\delta)\leq \mathfrak{C}(\mathbf{R},e_{n,\infty},\delta)$ and
		the relationship between the metric entropy and the VC-dimension of the ReLU networks  $\mathbf{R}_{\mathcal{H}, \mathcal{W}, \mathcal{S}}$ \citep{anthony2009neural}, i.e.,
		$$\log \mathfrak{C}(\mathbf{R},e_{n,\infty},\delta)) \leq \mathrm{VC}_{\mathbf{R}} \log \frac{2e\mathcal{B}n}{\delta\mathrm{VC}_{\mathbf{R}}},$$
		and the last inequality  holds due to the  upper bound of VC-dimension for the ReLU network $\mathbf{R}_{\mathcal{D}, \mathcal{W}, \mathcal{S}}$ satisfying  $$C_8 \mathcal{H}\mathcal{S}\log \mathcal{S}\leq \mathrm{VC}_{\mathbf{R}} \leq C_9 \mathcal{H}\mathcal{S}\log \mathcal{S},$$ see \cite{bartlett2019}.
		Then (\ref{sta2}) holds by the selection of the network parameters and set $\delta = \frac{1}{n}$ and some algebra.
	\end{proof}

	\begin{lemma}\label{2.10} 
Suppose that (A1)-(A3) hold and the network parameters satisfy (N1) and (N2). Then,
		{\small
			\begin{equation}\label{sta3}
			\mathbb{E}[\sup_{R \in \mathbf{R}_{\mathcal{H}, \mathcal{W}, \mathcal{S}}} |\widehat{\mathbb{D}}_{f}(\mu_{R(\rvx)}|| \gamma_{d}) - \mathbb{D}_{f}(\mu_{R(\rvx)}|| \gamma_{d})|] \leq C_{14}(L_2\sqrt{d}+B_2+B_3)(n^{-\frac{2}{2+p}}+\log n n^{-\frac{2}{2+d}})
			\end{equation}}
	\end{lemma}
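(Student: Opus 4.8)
The plan is to reproduce, for the $f$-divergence term by itself, the two-part split used for the whole excess risk in (\ref{A7}): decompose $\widehat{\mathbb{D}}_{f}(\mu_{R(\rvx)}\|\gamma_{d})-\mathbb{D}_{f}(\mu_{R(\rvx)}\|\gamma_{d})$ into an \emph{approximation error} (restricting the class of all measurable discriminators to the network class $\mathbf{D}$) and a \emph{statistical error} (replacing the two population expectations by empirical averages, uniformly over $R\in\mathbf{R}$ and $D_{\vphi}\in\mathbf{D}$), and bound each separately. For $R\in\mathbf{R}$ and measurable $D:\real^{d}\to\mathrm{dom}(f^{*})$ set $\ell_{R}(D)=\mathbb{E}_{\rvx}D(R(\rvx))-\mathbb{E}_{W\sim\gamma_{d}}f^{*}(D(W))$ and $\widehat{\ell}_{R}(D)=\frac1n\sum_{i=1}^{n}[D(R(\rvx_{i}))-f^{*}(D(W_{i}))]$. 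By the variational formula (\ref{fduala}) of Lemma \ref{lem2}, $\mathbb{D}_{f}(\mu_{R(\rvx)}\|\gamma_{d})=\sup_{D}\ell_{R}(D)$ with the optimum at $D^{*}_{R}=f'\circ r_{R}$, $r_{R}=\mathrm{d}\mu_{R(\rvx)}/\mathrm{d}\gamma_{d}$, while $\widehat{\mathbb{D}}_{f}(\mu_{R(\rvx)}\|\gamma_{d})=\sup_{D_{\vphi}\in\mathbf{D}}\widehat{\ell}_{R}(D_{\vphi})$; since $\mathbf{D}$ is a subclass of the measurable functions, for every $R$
\[
\big|\widehat{\mathbb{D}}_{f}-\mathbb{D}_{f}\big|\;\le\;\sup_{D_{\vphi}\in\mathbf{D}}\big|\widehat{\ell}_{R}(D_{\vphi})-\ell_{R}(D_{\vphi})\big|\;+\;\Big(\mathbb{D}_{f}-\sup_{D_{\vphi}\in\mathbf{D}}\ell_{R}(D_{\vphi})\Big),
\]
and it remains to take $\sup_{R}$ and expectation and control the two terms (the second is deterministic).

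\textbf{Approximation error.} Under (A2) the ratio $r_{R}$ is $L_{2}$-Lipschitz with values in $[c_{1},c_{2}]$, and since $f$ is a smooth convex function ($f(x)=x\log x$ for KL) its derivative is monotone and Lipschitz on $[c_{1},c_{2}]$; hence $D^{*}_{R}$ is Lipschitz with constant $O(L_{2})$ and $\|D^{*}_{R}\|_{\infty}\le B_{2}$. Applying the ReLU approximation bound (Theorem 4.3 of \citet{shen2019deep}) exactly as in Lemma \ref{lemapp1}, but now in input dimension $d$ with the discriminator parameters (N2), there is $D^{\mathrm{app}}_{\vphi}\in\mathbf{D}$ with $\|D^{\mathrm{app}}_{\vphi}\|_{\infty}\le\tilde{\mathcal{B}}$ approximating $D^{*}_{R}$ to within $O(L_{2}\sqrt{d}\,n^{-2/(2+d)})$ off an exceptional set of vanishing $\gamma_{d}$- and $\mu_{R(\rvx)}$-measure. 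Then
\[
\mathbb{D}_{f}-\sup_{D_{\vphi}\in\mathbf{D}}\ell_{R}(D_{\vphi})\;\le\;\ell_{R}(D^{*}_{R})-\ell_{R}(D^{\mathrm{app}}_{\vphi})\;\le\;\big\|D^{*}_{R}-D^{\mathrm{app}}_{\vphi}\big\|_{L^{1}(\mu_{R(\rvx)})}+\Big(\sup_{|t|\le\tilde{\mathcal{B}}}|(f^{*})'(t)|\Big)\big\|D^{*}_{R}-D^{\mathrm{app}}_{\vphi}\big\|_{L^{1}(\gamma_{d})}.
\]
On a ball of radius $\log n$ each integrand is $O(L_{2}\sqrt{d}\,n^{-2/(2+d)})$, and outside it the integrand is at most $2\tilde{\mathcal{B}}$ integrated against the exponentially small Gaussian tail plus the exceptional set. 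Since $\sup_{|t|\le\tilde{\mathcal{B}}}|(f^{*})'(t)|$ has the same order as $B_{3}$ (for KL it equals $e^{\tilde{\mathcal{B}}-1}\le B_{3}$) and $\tilde{\mathcal{B}}=2L_{2}\sqrt{d}\log n+B_{2}$, the approximation error is $O\big((L_{2}\sqrt{d}+B_{2}+B_{3})\log n\cdot n^{-2/(2+d)}\big)$.

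\textbf{Statistical error.} Write $g_{R,D}(\rvx,w)=D(R(\rvx))-f^{*}(D(w))$; for $D\in\mathbf{D}$ one has $|D(R(\rvx))|\le\tilde{\mathcal{B}}$ and, since $|D(w)|\le\tilde{\mathcal{B}}=2L_{2}\sqrt{d}\log n+B_{2}$, also $|f^{*}(D(w))|\le B_{3}$ by the definition of $B_{3}$, so $\|g_{R,D}\|_{\infty}\le\tilde{\mathcal{B}}+B_{3}$. Symmetrization and the triangle inequality give
\[
\mathbb{E}\sup_{R,D}\big|\widehat{\ell}_{R}(D)-\ell_{R}(D)\big|\;\le\;2\,\mathbb{E}\sup_{R,D}\Big|\tfrac1n\sum_{i}\epsilon_{i}D(R(\rvx_{i}))\Big|\;+\;2\,\mathbb{E}\sup_{D}\Big|\tfrac1n\sum_{i}\epsilon_{i}f^{*}(D(W_{i}))\Big|.
\]
The first class consists of ReLU networks $\rvx\mapsto D_{\vphi}(R_{\vtheta}(\rvx))$ of depth $\mathcal{D}+\tilde{\mathcal{D}}$ and size $O(\mathcal{S}+\tilde{\mathcal{S}})$, uniformly bounded by $\tilde{\mathcal{B}}$; covering it in the empirical $\infty$-norm, invoking Lemma \ref{B7} conditionally on the sample, and using $\mathrm{VC}\lesssim(\mathcal{D}+\tilde{\mathcal{D}})(\mathcal{S}+\tilde{\mathcal{S}})\log(\mathcal{S}+\tilde{\mathcal{S}})$ \citep{bartlett2019} precisely as in the proof of Lemma \ref{2.8} bounds it by $\tilde{\mathcal{B}}$ times $\sqrt{(\mathcal{D}+\tilde{\mathcal{D}})(\mathcal{S}+\tilde{\mathcal{S}})\log(\mathcal{S}+\tilde{\mathcal{S}})\log n/n}$, which under (N1)--(N2) is $O\big((L_{2}\sqrt{d}+B_{2})(n^{-2/(2+p)}+n^{-2/(2+d)})\big)$ --- the $n^{-2/(2+p)}$ piece coming from the representer's size $\mathcal{S}$ (input dimension $p$) and the $n^{-2/(2+d)}$ piece from $\tilde{\mathcal{S}}$ (input dimension $d$). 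For the second term, $f^{*}$ is Lipschitz on $[-\tilde{\mathcal{B}},\tilde{\mathcal{B}}]$ with constant of order $B_{3}$, so a $\delta/B_{3}$-net of $\mathbf{D}$ yields a $\delta$-net of $\{w\mapsto f^{*}(D(w))\}$ and the same chaining argument, with the discriminator's VC dimension, bounds it by $O(B_{3}\,n^{-2/(2+d)})$ up to logarithmic factors. Adding the two contributions gives a statistical error of order $(L_{2}\sqrt{d}+B_{2}+B_{3})(n^{-2/(2+p)}+\log n\cdot n^{-2/(2+d)})$, and combining it with the approximation bound yields (\ref{sta3}).

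\textbf{Main obstacle.} The delicate part is the unbounded Gaussian domain together with the constants $B_{3}$ and $\sup_{|t|\le\tilde{\mathcal{B}}}|(f^{*})'(t)|$, both exponentially large in $\tilde{\mathcal{B}}\asymp\sqrt{d}\log n$: one must truncate to balls of radius $\asymp\log n$, control the Gaussian tails and the Shen et al.\ exceptional sets, and then check that these large (but only $n^{o(1)}$) prefactors collapse exactly into the $(L_{2}\sqrt{d}+B_{2}+B_{3})$ factor of the statement. A secondary obstacle is the uniform control of the covering numbers of the composed classes $\mathbf{D}\circ\mathbf{R}$ and $f^{*}\circ\mathbf{D}$, which is what forces both the $n^{-2/(2+p)}$ and the $n^{-2/(2+d)}$ rates to appear in (\ref{sta3}).
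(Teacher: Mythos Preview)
Your proposal is correct and follows essentially the same route as the paper: decompose into an approximation error (the network class $\mathbf{D}$ approximates the optimal discriminator $D^{*}_{R}=f'\circ r_{R}$ via the Shen et al.\ bound in input dimension $d$, yielding the $\log n\cdot n^{-2/(2+d)}$ term) and a statistical error handled by symmetrization, Lemma \ref{B7}, covering numbers, and the Bartlett et al.\ VC bound for ReLU networks. The only cosmetic difference is that the paper keeps $b(D,R;(\rvx,W))=D(R(\rvx))-f^{*}(D(W))$ as a single function class and covers the product $\mathbf{D}\times\mathbf{R}$ directly via $\log\mathfrak{C}(\mathbf{D}\times\mathbf{R})\le\log\mathfrak{C}(\mathbf{D})+\log\mathfrak{C}(\mathbf{R})$, whereas you first split by the triangle inequality and cover the composed classes $\mathbf{D}\circ\mathbf{R}$ and $f^{*}\circ\mathbf{D}$ separately; both variants produce the same $n^{-2/(2+p)}+n^{-2/(2+d)}$ rates.
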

	\begin{proof}
		For every $R \in\mathbf{R}_{\mathcal{H}, \mathcal{W}, \mathcal{S}}$, let $r(z) = \frac{\mathrm{d}\mu_{R(\rvx)}}{\mathrm{d}\gamma_{d}}(z)$, $g_{R}(z) = f^{\prime}(r(z))$.
		By assumption $g_{R}(z):\mathbb{R}^{d}\rightarrow\mathbb{R}$ is  Lipschitz continuous with  the Lipschitz constant $L_2$ and
		$\|g_{R}\|_{L^{\infty}} \leq B_2$. By tail probability of Gaussian, we assume   without loss of generality that   $\mathrm{supp}(g_{R}) \subseteq [-\log n, \log n]^{d}$.
Then, by Lemma \ref{shen}
		there exists a $\bar{D}_{\bar{\vphi}} \in \mathbf{D}_{\tilde{\mathcal{H}}, \tilde{\mathcal{W}}, \tilde{\mathcal{S}}}$
		with the network parameters satisfying (N2)
		such that for $\rvz\sim \gamma_{d}$ and $\rvz\sim \mu_{R(\rvx)},$
		\begin{equation}\label{app4}
		\mathbb{E}_{\rvz}[|\bar{D}_{\bar{\vphi}}(\rvz) - g_{R}(\rvz)|] \leq  19 L_2\sqrt{d}\log n n^{-\frac{1}{d+2}}.
		\end{equation}
By the above display, we  can further assume that the element in $\mathbf{D}_{\tilde{\mathcal{H}}, \tilde{\mathcal{W}}, \tilde{\mathcal{S}}}$ is bounded by
$2B_2$ as $n$ large enough.
	For any	$ g:\mathbb{R}^{d}\rightarrow \mathbb{R}$, define $$\mathcal{E}(g) = \mathbb{E}_{\rvx\sim \mu_{\rvx}} [g(R(\rvx))]-\mathbb{E}_{W\sim \gamma_{d}} [f^*(g(W))],$$
		$$\widehat{\mathcal{E}}(g) = \widehat{\mathcal{E}}(g,R)=\frac{1}{n} \sum_{i=1}^n [g(R(\rvx_i))- f^*(g(W_i))].$$
		By (\ref{fdual}) we have
		\begin{equation}\label{sup}
		\mathcal{E}(g_R) = \mathbb{D}_{f}(\mu_{R(\rvx)}|| \gamma_{d}) = \sup_{\mathrm{measureable} \ \  D:\mathbb{R}^{d}\rightarrow \mathbb{R}} \mathcal{E}(D).
		\end{equation}
Then,
		\begin{align*}
		&| \mathbb{D}_{f}(\mu_{R(\rvx)}|| \gamma_{d})-\widehat{\mathbb{D}}_{f}(\mu_{R(\rvx)}|| \gamma_{d})|\\
		&= |  \mathcal{E}(g_R)  -  \max_{D_{\vphi} \in \mathbf{D}_{\tilde{\mathcal{H}}, \tilde{\mathcal{W}}, \tilde{\mathcal{S}}, }}\widehat{\mathcal{E}}(D_{\vphi})|\\
		&\leq |\mathcal{E}(g_R)-\sup_{D_{\vphi} \in \mathbf{D}_{\tilde{\mathcal{H}}, \tilde{\mathcal{W}}, \tilde{\mathcal{S}}, }}\mathcal{E}(D_{\vphi})| + |\sup_{D_{\vphi} \in \mathbf{D}_{\tilde{\mathcal{H}}, \tilde{\mathcal{W}}, \tilde{\mathcal{S}}, }}\mathcal{E}(D_{\vphi})-\max_{D_{\vphi} \in \mathbf{D}_{\tilde{\mathcal{H}}, \tilde{\mathcal{W}}, \tilde{\mathcal{S}}, }}\widehat{\mathcal{E}}(D_{\vphi})|\\
		&\leq |\mathcal{E}(g_R) - \mathcal{E}(\bar{D}_{\bar{\vphi}})|+\sup_{D_{\vphi} \in \mathbf{D}_{\tilde{\mathcal{H}}, \tilde{\mathcal{W}}, \tilde{\mathcal{S}}, }}|\mathcal{E}(D_{\vphi})-\widehat{\mathcal{E}}(D_{\vphi})|\\
		&\leq \mathbb{E}_{\rvz\sim \mu_{R(\rvx)}} [|g_R-\bar{D}_{\bar{\vphi}}|(\rvz)] + \mathbb{E}_{W\sim \gamma_{d}} [|f^*(g_{R})-f^*(\bar{D}_{\bar{\vphi}})|(W)] + \sup_{D_{\vphi} \in \mathbf{D}_{\tilde{\mathcal{D}}, \tilde{\mathcal{W}}, \tilde{\mathcal{S}}, }}|\mathcal{E}(D_{\vphi})-\widehat{\mathcal{E}}(D_{\vphi})|\\
		&\leq 19(1+B_{3})   L_2\sqrt{d}\log n n^{-\frac{1}{d+2}} + \sup_{D_{\vphi} \in \mathbf{D}_{\tilde{\mathcal{H}}, \tilde{\mathcal{W}}, \tilde{\mathcal{S}}, \tilde{\mathcal{B}}}}|\mathcal{E}(D_{\vphi})-\widehat{\mathcal{E}}(D_{\vphi})|,
		\end{align*}
where we use the triangle inequality in the first inequality follows from the triangle inequality,  the second inequality follows from  $\mathcal{E}(g_R)\geq \sup_{D_{\vphi} \in \mathbf{D}_{\tilde{\mathcal{H}}, \tilde{\mathcal{W}}, \tilde{\mathcal{S}}, \tilde{\mathcal{B}}}}\mathcal{E}(D_{\vphi})$ due to  (\ref{sup})  and the triangle inequality, the third inequality follows from the triangle inequality, and the last inequality follows from (\ref{app4}) and the mean value theorem.

We finish the proof by bounding the second term in probability
in the last line above, i.e.,
$\sup_{D_{\vphi} \in \mathbf{D}_{\tilde{\mathcal{H}}, \tilde{\mathcal{W}}, \tilde{\mathcal{S}}, \tilde{\mathcal{B}}}}|\mathcal{E}(D_{\vphi})-\widehat{\mathcal{E}}(D_{\vphi})|.$
This can be done by bounding the
 empirical process
		$$\mathbb{U}(D,R)=\mathbb{E}[\sup_{R \in \mathbf{R}_{\mathcal{H}, \mathcal{W}, \mathcal{S}, \mathcal{B}}, D \in \mathcal{D}_{\tilde{\mathcal{H}}, \tilde{\mathcal{W}}, \tilde{\mathcal{S}}, \tilde{\mathcal{B}}}}|\mathcal{E}(D)-\widehat{\mathcal{E}}(D)|].$$
		Let $S = (\rvx,\rvz) \sim \mu_\rvx\otimes\gamma_{d}$ and $S_i, i=1,\ldots, n$ be $n$ i.i.d copy of $S$. Denote $$b(D,R;S) = D(R(\rvx)) - f^*(D(\rvz)).$$
		Then
		$$\mathcal{E}(D,R) = \mathbb{E}_{S} [b(D,R;S)]$$
		and
		$$ \widehat{\mathcal{E}}(D,R) = \frac{1}{n}\sum_{i=1}^n b(D,R;S_i).$$
		Let
		$$\mathcal{G}(\mathbf{D}\times\mathbf{R}) = \frac{1}{n} \mathbb{E}_{\{S_i, \epsilon_i\}_{i}^n}\left[\sup_{R \in \mathbf{R}_{\mathcal{H}, \mathcal{W}, \mathcal{S}, \mathcal{B}}, D \in \mathbf{D}_{\tilde{\mathcal{H}}, \tilde{\mathcal{W}}, \tilde{\mathcal{S}}, \tilde{\mathcal{B}}}}|\sum_{i=1}^n\epsilon_i b(D,R;S_i)|\right]$$
be the 	
		Rademacher  complexity of $ \mathbf{D}_{\tilde{\mathcal{H}}, \tilde{\mathcal{W}}, \tilde{\mathcal{S}}}\times \mathbf{R}_{\mathcal{H}, \mathcal{W}, \mathcal{S}} $ \citep{bartlett2002rademacher}.
		Let  $\mathfrak{C}(\mathbf{D}\times \mathbf{R}, e_{n,1}, \delta))$ be  the covering number of $ \mathbf{D}_{\tilde{\mathcal{H}}, \tilde{\mathcal{W}}, \tilde{\mathcal{S}}}\times \mathbf{R}_{\mathcal{H}, \mathcal{W}, \mathcal{S}}$ with respect to the  empirical  distance (depends on $S_i$)  $$d_{n,1}((D,R),(\tilde{D},\tilde{R})) = \frac{1}{n} \mathbb{E}_{\epsilon_i}[\sum_{i =1}^n |\epsilon_i (b(D,R;S_i) - b(\widetilde{D},\tilde{R};S_i))|]$$ at scale of $\delta>0$. Let $\mathbf{D}_{\delta}\times\mathbf{R}_{\delta}$ be  such a converging set of
		$ \mathbf{D}_{\tilde{\mathcal{H}}, \tilde{\mathcal{W}}, \tilde{\mathcal{S}}}\times \mathbf{R}_{\mathcal{H}, \mathcal{W}, \mathcal{S}}$.
		Then,
		\begin{align*}
		&\mathbb{U}(D,R) = 2\mathcal{G}(\mathbf{D}\times\mathbf{R})\\
		&= 2\mathbb{E}_{S_1,\ldots,S_n}[\mathbb{E}_{\epsilon_i,i=1,\ldots,n}[\mathcal{G}(\mathbf{R}\times\mathbf{D})|(S_1,...,S_n)]]\\
		&\leq  2\delta +  \frac{2}{n}\mathbb{E}_{S_1,\ldots,S_n}[\mathbb{E}_{\epsilon_i,i=1,\ldots,n}[\sup_{(D,R)\in \mathbf{D}_{\delta}\times\mathbf{R}_{\delta}} |\sum_{i=1}^n\epsilon_i b(D,R;S_i)| | (S_1,\ldots,S_n)]\\
		&\leq  2\delta+C_{12}\frac{1}{n}\mathbb{E}_{S_1,\ldots,S_n}[(\log \mathfrak{C}(\mathbf{D}\times\mathbf{R}, e_{n,1}, \delta))^{1/2} \max_{(D,R) \in \mathbf{D}_{\delta}\times \mathbf{R}_{\delta}} [\sum_{i = 1}^n  b^2(D,R;S_i)]^{1/2}]\\
		&\leq 2\delta + C_{12}\frac{1}{n} \mathbb{E}_{S_1,\ldots,S_n}[(\log \mathfrak{C}(\mathbf{D}\times\mathbf{R}, e_{n,1}, \delta))^{1/2}\sqrt{n}(2B_2+ B_3)]\\
		&\leq 2\delta + C_{12}\frac{1}{\sqrt{n}}(2 B_2+ B_3)(\log \mathfrak{C}(\mathbf{D}, e_{n,1}, \delta)+\log \mathfrak{C}(\mathbf{R}, d_{n,1}, \delta))^{1/2}\\
		&\leq 2\delta +  C_{13}\frac{2 B_2+ B_3}{\sqrt{n}}( \mathcal{H}\mathcal{S}\log \mathcal{S} \log \frac{\mathcal{B}n}{\delta \mathcal{H}\mathcal{S}\log \mathcal{S}} + \tilde{\mathcal{H}}\tilde{\mathcal{S}}\log \tilde{\mathcal{S}} \log \frac{2B_2n}{\delta \tilde{\mathcal{H}}\tilde{\mathcal{S}}\log \tilde{\mathcal{S}}} )^{1/2}
		\end{align*}
		where the first equality follows from the standard symmetrization technique,   the first  inequality holds due to the iteration law of conditional expectation,
the second  inequality follows from the triangle inequality, and the third  inequality uses   \eqref{rma},  the fourth inequality uses the fact that
		$b(D,R;S)$  is bounded, i.e., $\|b(D,R;S)\|_{L^{\infty}} \leq 2 B_2+ B_3,$ and
		the fifth  inequality follows from some algebra, and the sixth inequality follows from   $\mathfrak{C}(\mathbf{R},e_{n,1},\delta)\leq \mathfrak{C}(\mathbf{R},e_{n,\infty},\delta)$ (similar result for $\mathbf{D}$) and
		$\log \mathfrak{C}(\mathbf{R},e_{n,\infty},\delta)) \leq \mathrm{VC}_{\mathbf{R}} \log \frac{2e\mathcal{B}n}{\delta\mathrm{VC}_{\mathbf{R}}},$
		and $\mathbf{R}_{\mathcal{H}, \mathcal{W}, \mathcal{S}, \mathcal{B}}$ satisfying  $C_8 \mathcal{H}\mathcal{S}\log \mathcal{S}\leq \mathrm{VC}_{\mathbf{R}} \leq C_9 \mathcal{H}\mathcal{S}\log \mathcal{S},$ see \cite{bartlett2019}.
		Then (\ref{sta3}) follows from the above display with  the selection of the network parameters of $ \mathbf{D}_{\tilde{\mathcal{H}}, \tilde{\mathcal{W}}, \tilde{\mathcal{S}}}, \mathbf{R}_{\mathcal{H}, \mathcal{W}, \mathcal{S}}$  and  with $\delta = \frac{1}{n}$.
	\end{proof}
Finally, Theorem \ref{esterr} is a direct consequence of (\ref{appf}) in Lemma \ref{2.4} and (\ref{sta1}) in Lemma \ref{lemasta}.
This completes the proof of Theorem \ref{esterr}. $\hfill\Box$

{
\singlespacing
\bibliography{ddr_jasa_bib}
}

\end{document}